\documentclass{article} 
\usepackage{iclr2024_conference,times}


\usepackage{amsmath,amsfonts,bm}









\def\eqref#1{equation~\ref{#1}}









\def\1{\bm{1}}










\DeclareMathAlphabet{\mathsfit}{\encodingdefault}{\sfdefault}{m}{sl}
\SetMathAlphabet{\mathsfit}{bold}{\encodingdefault}{\sfdefault}{bx}{n}













\DeclareMathOperator*{\argmin}{arg\,min}

\usepackage{hyperref}
\usepackage{url}


\usepackage{booktabs}       
\usepackage{amsfonts}       
\usepackage{nicefrac}       
\usepackage{microtype}      
\usepackage{xcolor}         

\usepackage{graphicx}
\usepackage{times}
\usepackage{epsfig}
\usepackage{pifont}%
\usepackage{kotex}
\usepackage{enumitem}
\usepackage{verbatim}
\usepackage{amsthm}
\usepackage{bbm}
\usepackage{makecell, multirow, tabularx}
\usepackage{algorithm}
\usepackage{algorithmic}
\usepackage{balance}
\usepackage{upquote}
\usepackage{appendix}
\usepackage{physics}
\usepackage{stmaryrd}
\usepackage{wrapfig}
\usepackage{subcaption}
\usepackage{scalerel}

\usepackage{chngpage}
\usepackage{comment}

\newtheorem{theorem}{Theorem}
\newtheorem{lemma}{Lemma}
\newtheorem{corollary}{Corollary}

\newcommand\tab[1][1cm]{\hspace*{#1}} 
\newcommand\smalltab[1][1mm]{\hspace*{#1}} 

\usepackage{xspace}
\usepackage{authblk}

\newcommand{\hg}[1]{\textcolor{black}{#1\xspace}}
\newcommand{\rebuttal}[1]{\textcolor{black}{#1\xspace}}

\newcommand\blfootnote[1]{%
  \begingroup
  \renewcommand\thefootnote{}\footnote{#1}%
  \addtocounter{footnote}{-1}%
  \endgroup
}

\iclrfinalcopy 

\title{DAFA: Distance-Aware Fair Adversarial \\ Training}


\author[1]{Hyungyu Lee}
\author[1]{Saehyung Lee}
\author[1]{Hyemi Jang}
\author[1]{Junsung Park}
\author[2, *]{Ho Bae}
\author[1, 3, *]{Sungroh Yoon}
\affil[1]{Electrical and Computer Engineering , Seoul National University}
\affil[2]{Department of Cyber Security, Ewha Womans University}
\affil[3]{Interdisciplinary Program in Artificial Intelligence, Seoul National University}
\affil[ ]{\texttt {rucy74@snu.ac.kr} \smalltab \smalltab \smalltab \smalltab \smalltab \smalltab \texttt{halo8218@snu.ac.kr}  \smalltab \smalltab \smalltab \smalltab \smalltab \smalltab \texttt{wkdal9512@snu.ac.kr}}
\affil[ ]{\texttt {jerryray@snu.ac.kr} \smalltab \smalltab \smalltab \smalltab \smalltab \smalltab  \smalltab \smalltab \texttt{hobae@ewha.ac.kr}  \smalltab \smalltab \smalltab \smalltab \smalltab \smalltab \smalltab \smalltab \texttt{sryoon@snu.ac.kr}}

%

\begin{document}
\blfootnote{$^*$Corresponding Authors}
\maketitle

\begin{abstract}


The disparity in accuracy between classes in standard training is amplified during adversarial training, a phenomenon termed the robust fairness problem. Existing methodologies aimed to enhance robust fairness by sacrificing the model's performance on easier classes in order to improve its performance on harder ones. However, we observe that under adversarial attacks, the majority of the model's predictions for samples from the worst class are biased towards classes similar to the worst class, rather than towards the easy classes. Through theoretical and empirical analysis, we demonstrate that robust fairness deteriorates as the distance between classes decreases. Motivated by these insights, we introduce the Distance-Aware Fair Adversarial training (DAFA) methodology, which addresses robust fairness by taking into account the similarities between classes. Specifically, our method assigns distinct loss weights and adversarial margins to each class and adjusts them to encourage a trade-off in robustness among similar classes. Experimental results across various datasets demonstrate that our method not only maintains average robust accuracy but also significantly improves the worst robust accuracy, indicating a marked improvement in robust fairness compared to existing methods.

\end{abstract}
\section{Introduction}
\label{introduction}


Recent studies have revealed the issue of accuracy imbalance among classes \citep{longtail_imbalanced}. This imbalance becomes even more pronounced during adversarial training, which utilizes adversarial examples \citep{attack_ae} to enhance the robustness of the model \citep{attack_pgd}. This phenomenon is commonly referred to as  ``robust fairness problem'' \citep{fairness_frl}. 
Existing research has introduced methods inspired by long-tailed (LT) classification studies \citep{longtail_imbalanced,longtail_survey} to mitigate the challenge of achieving robust fairness. LT classification tasks tackle the problem of accuracy imbalance among classes, stemming from classifier bias toward classes with a substantial number of samples (head classes) within the LT dataset. The methods proposed for LT classification mainly apply opposing strategies to head classes and tail classes--those classes within LT datasets that have a limited number of samples.
For instance, methods proposed by \citet{longtail_margin, longtail_uncertainty, longtail_logitadjustment} deliberately reduce the model output for head classes while augmenting the output for tail classes by adding constants.
These approaches typically lead to improved accuracy for tail classes at the expense of reduced accuracy for head classes.

\citet{fairness_weighting} noted similarities between the fairness issue in LT classification and that in adversarial training. 
They corresponded the head and tail classes in LT classification with the easy and hard classes in adversarial training, respectively.
Employing a similar approach, they addressed robust fairness by compromising the model's relatively higher robustness on easy classes to bolster the robustness on hard classes.
Subsequent studies \citep{fairness_cfa, fairness_wat} also addressed robust fairness by applying stronger regularization to the learning of easy classes compared to that of hard classes. 

\begin{figure}[t]
\centering
\includegraphics[width=1\linewidth]{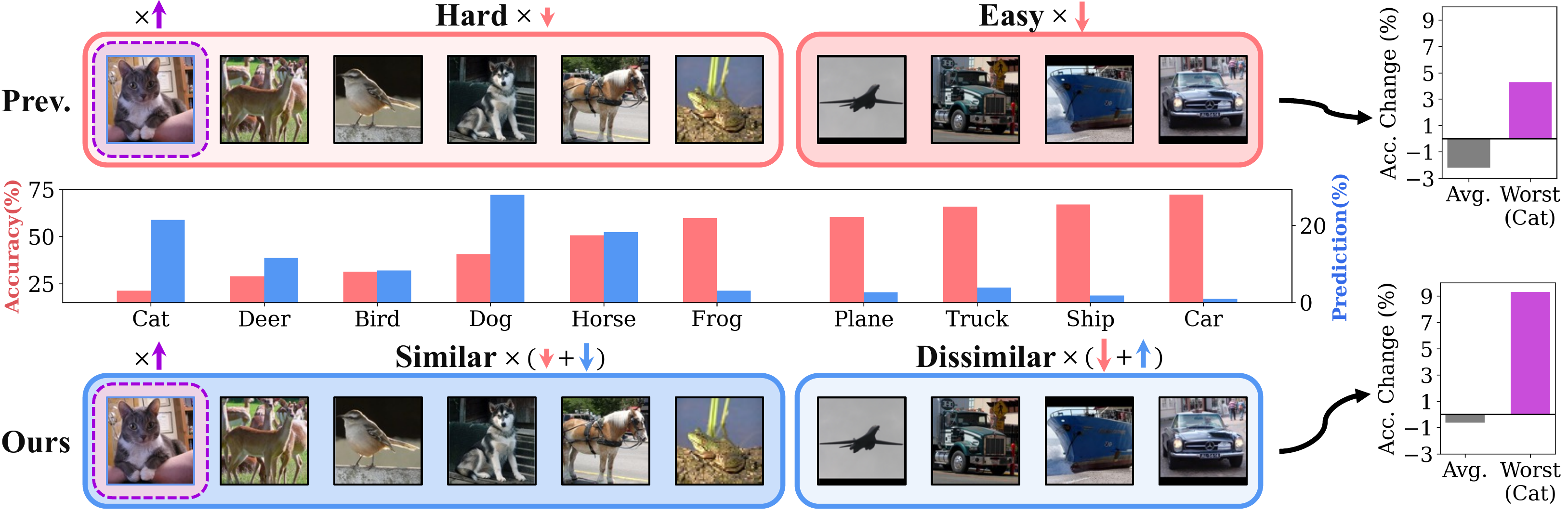}
\caption{The figure illustrates the performance improvement for the worst class (cat) when training the robust classifier \citep{defense_trades} on the CIFAR-10 dataset \citep{dataset_cifar}, by varying the training intensity for different class sets. While the previous approach \citep{fairness_frl} limits the training of each class proportional to class-wise accuracy, our method considers inter-class similarity. Our approach intensifies the training constraints on animal classes, which are neighbors to the cat class, compared to previous methods, while relaxing constraints on non-animal classes.}

\vskip -4mm
\label{fig:intro_cat_dog_ship}
\end{figure}


However, in this study, we find that while the fairness issue in LT classification is caused by biases towards head classes, the fairness issue in adversarial training is caused by biases towards similar classes. Fig. \ref{fig:intro_cat_dog_ship} contrasts the outcomes using conventional robust fairness improvement techniques versus our method. The figure, at its center, depicts the accuracy of each class (red) on the CIFAR-10 dataset alongside the prediction of the worst class samples, ``cat" (blue). The prediction of the cat class highlights a significantly higher likelihood of misclassification towards similar classes, such as the dog class, compared to dissimilar classes. In light of these findings, our strategy addresses robust fairness from the perspective of ``inter-class similarity". Existing methods majorly constrain the learning of easy (non-animal) classes to boost the worst robust accuracy. In contrast, our method restricts the learning of similar (animal) classes more than previous methods while imposing fewer constraints on the learning of dissimilar (non-animal) classes. The results in Fig. \ref{fig:intro_cat_dog_ship} indicate that our method notably enhances the worst class's performance while maintaining average performance compared to the previous method. This underscores the idea that in adversarial training, performance-diminishing factors for hard classes are attributed to similar classes. Therefore, limiting the learning of these neighboring classes can serve as an effective means to improve robust fairness.

 In our study, we conduct a thorough investigation into robust fairness, considering inter-class similarity. We theoretically and empirically demonstrate that robust fairness deteriorates by similar classes. Building on these analyses, we introduce the Distance-Aware Fair Adversarial training (DAFA) method, which integrates inter-class similarities to mitigate fairness concerns in robustness. 
Our proposed method quantifies the similarity between classes in a dataset by utilizing the model output prediction probability. By constraining the learning of classes close to hard classes, it efficiently and substantially enhances the performance of the worst class while preserving average accuracy. Experiments across multiple datasets validate that our approach effectively boosts the worst robust accuracy. To summarize, our contributions are:

\begin{itemize}
    \item We conduct both theoretical and empirical analyses of robust fairness, taking into account inter-class similarity. Our findings reveal that in adversarial training, robust fairness deteriorates by their neighboring classes.
    \item We introduce a novel approach to improve robust fairness, termed Distance-Aware Fair Adversarial training (DAFA). In DAFA, to enhance the robustness of hard classes, the method allocates appropriate learning weights and adversarial margins for each class based on inter-class distances during the training process.
    \item Through experiments on various datasets and model architectures, our method demonstrates significant enhancement in worst and average robust accuracy compared to existing approaches. Our code is available at \href{https://github.com/rucy74/DAFA}{https://github.com/rucy74/DAFA}.
\end{itemize}
\section{Preliminary}

Adversarial training, as described in~\citet{attack_pgd}, uses adversarial examples within the training dataset to improve robustness. An alternative approach to adversarial training is the TRADES method~\citep{defense_trades}. This method modulates the balance between model robustness and clean accuracy by segregating the training loss into the loss from clean examples and a KL divergence term to foster robustness. Consider a dataset denoted by $\mathcal{D}=\{(\bm{\mathnormal{x}}_i, {\mathnormal{y}}_i)\}_{i=1}^n$. Here, $\bm{\mathnormal{x}}_i \in \mathbb{R}^{d}$ is clean data, and each $y_i$ from the class set indicates its corresponding label. The equation of the TRADES method can be expressed as the subsequent optimization framework:
\begin{equation} 
\label{eq:background_trades}
    \min_{\theta} {\sum_{i=1}^{n}}
     \big(\mathcal{L}_{CE}(f_{\theta}(\bm{\mathnormal{x}}_i),\mathnormal{y_i}) + \beta \cdot \max_{\norm{\bm{\delta}_i} \leq \epsilon} \text{KL}(f_{\theta}(\bm{\mathnormal{x}}_i), f_{\theta}(\bm{\mathnormal{x}}_i + \bm{\delta}_i))\big).
\end{equation}
Here, $\theta$ denotes the parameters of the model $f$. The cross-entropy loss function is represented by $\mathcal{L}_{CE}$. $\text{KL}(\cdot)$ represents KL divergence, and $\beta$ is a parameter that adjusts the ratio between the clean example loss and the KL divergence term. Additionally, $\epsilon$ is termed the adversarial margin. This margin sets the maximum allowable limit for adversarial perturbations. 
Regarding adversarial attacks, the fast gradient sign method (FGSM) stands as a prevalent technique for producing adversarial samples~\citep{attack_fgsm}. Projected gradient descent (PGD) can be seen as an extended iteration of FGSM~\citep{attack_pgd}. Adaptive auto attack ($\mathrm{A^3}$) is an efficient and reliable attack mechanism that incorporates adaptive initialization and statistics-based discarding strategy~\citep{attack_aaa}.  Primarily, we use the TRADES methodology and $\mathrm{A^3}$ as our adversarial training and evaluation methods, respectively.

\section{Analysis}

\begin{figure}[t]

\subfloat[\label{fig:intro_cat_animal_accuracy}]{
\includegraphics[width=0.47\linewidth]{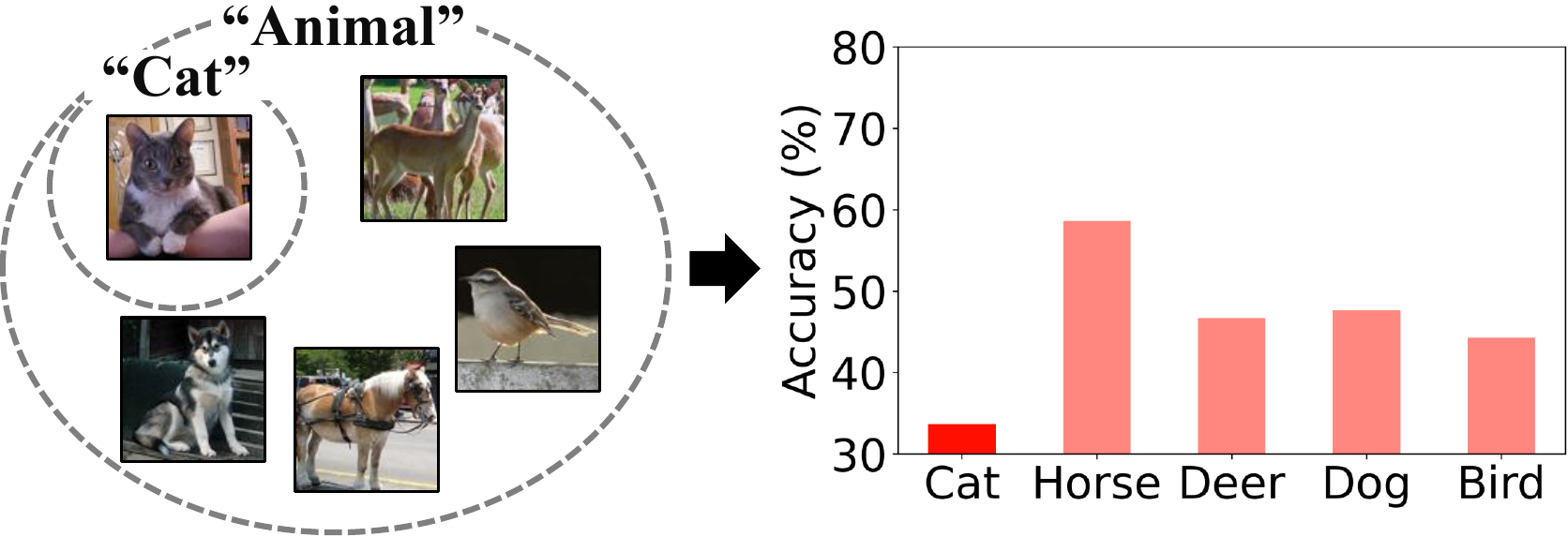}
}
\hfill
\subfloat[\label{fig:intro_cat_nonanimal_accuracy}]{
\includegraphics[width=0.48\linewidth]{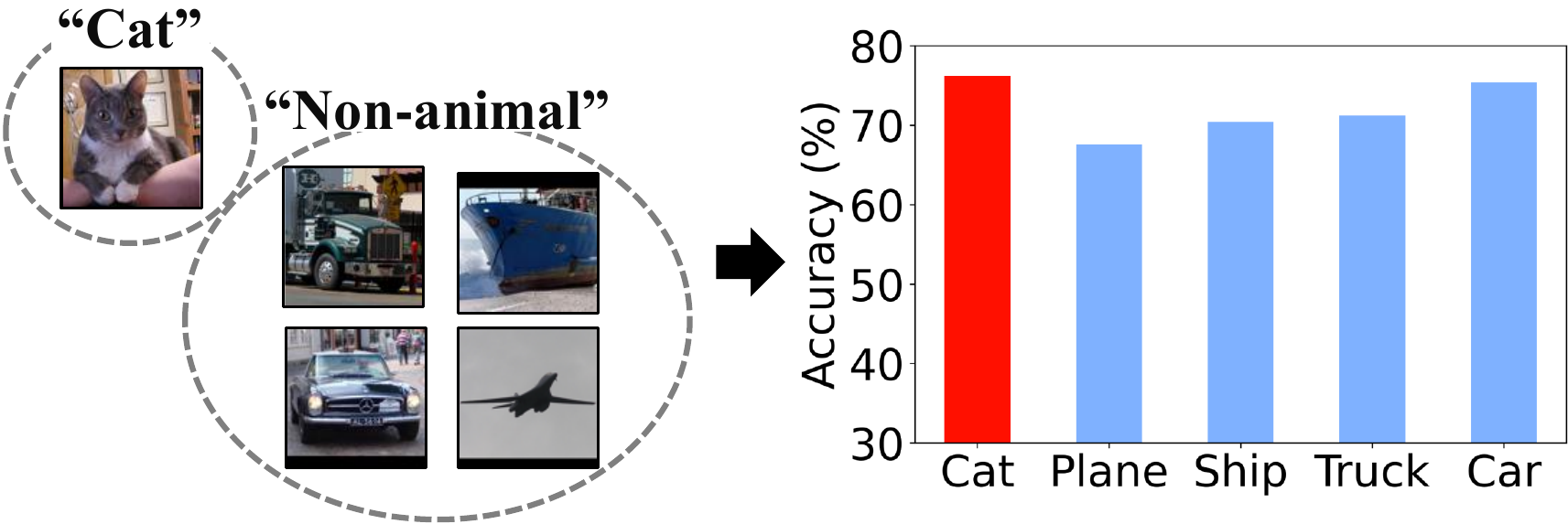}}

\caption{(a) and (b) depict the results from two distinct 5-class classification tasks. (a) presents the robust accuracy of the robust classifier trained on the cat class alongside other animal classes from CIFAR-10, while (b) illustrates the robust accuracy of the robust classifier trained on the cat class in conjunction with non-animal classes in CIFAR-10.}
\vskip -4mm

\label{fig:intro_cat_animal_non-animal}
\end{figure}


In classification tasks, the difficulty of each class can be gauged by the variance between samples within the class or its similarity to other classes, i.e., the inter-class distance. 
Specifically, a class with a substantial variance among samples poses challenges for the classifier, as it necessitates the recognition of a broad spectrum of variations within that class. 
Previous studies utilized a class's variance for theoretical modeling of the robust fairness issue \citep{fairness_frl, fairness_analysis, fairness_cfa}. These studies demonstrated that the accuracy disparity arising from differences in class variance is more pronounced in adversarial settings compared to standard settings.

On the other hand, concerning inter-class distance, we can posit that a class is harder to train and infer if it closely resembles other classes.
Fig \ref{fig:intro_cat_animal_accuracy} shows the results of a classification task that includes the ``cat" class alongside four other animal classes, which are close or similar to the cat class, all under the same superclass: animal. Conversely, Fig \ref{fig:intro_cat_nonanimal_accuracy} presents results with the cat class among four distinct non-animal classes. Notably, while the cat data remains consistent across tasks, its accuracy is lowest when grouped with similar animal classes and highest with dissimilar non-animal classes. These observations suggest that a class's difficulty can vary, making it either the easiest or hardest, based on its proximity or similarity to other classes in the dataset. Consequently, the similarity between classes, which is measured by inter-class distance, plays a pivotal role in determining the difficulty of a class. 

\subsection{Theoretical analysis}
\label{method:the effect of class distance on robust fairness}

In this section, we conduct a theoretical analysis of robust fairness from the perspective of inter-class distance. Intuitively, a class is hard to learn and exhibits lower performance when it is close to other classes than when it is distant. Based on this intuition, we aim to demonstrate that robust fairness deteriorates when classes are in close proximity.

\paragraph{A binary classification task}

We define a binary classification task using Gaussian mixture data, which is similar to the binary classification task presented by \citet{fairness_frl,fairness_analysis}.
Specifically, we introduce a binary classification model given by $f(\bm{\mathnormal{x}})=\text{sign}(\bm{w}^\top \bm{\mathnormal{x}} + b)$. This model represents a mapping function $f:\mathcal{X}\rightarrow\mathcal{Y}$ from input space $\mathcal{X}$ to the output space $\mathcal{Y}$. Here, the input-output pair $(\bm{\mathnormal{x}},y)$ belongs to $\mathbb{R}^d \times \{\pm1\}$, with $\text{sign}$ denoting the sign function. We define an input data distribution $\mathcal{D}$ as follows:
\begin{equation} 
\label{eq:method_gaussian_distribution}
\begin{gathered}
    \mathnormal{y}\stackrel{u.a.r}{\sim} \{-1, +1\}, \smalltab 
    \bm{\mu}=(\eta, \eta, \cdots, \eta),    
    \tab 
    \bm{\mathnormal{x}}  \in \mathbb{R}^d {\sim}
    \begin{cases}
    \mathcal{N}(\bm{\mu}, \sigma^2 I), & \text{if } \mathnormal{y}=+1 \\
    \mathcal{N}(-\alpha\bm{\mu}, I), & \text{if } \mathnormal{y}=-1 \\
    \end{cases}
\end{gathered}
\end{equation}

Here, $I \in \mathbb{R}^{d \times d}$ denotes the identity matrix. We assume that each class's distribution has different mean and variance, and this variability is controlled by $\sigma >1$ and $\alpha \geq 1$. The class $y=+1$ with a relatively large variance exhibits higher difficulty and is therefore a harder class than $y=-1$. Here, we initially observe the performance variation of the hard class $y=+1$ as the distance between the two classes changes.



We define the standard error of model $f$ as $\mathcal{R}_{nat}(f)=Pr.\big(f(\bm{\mathnormal{x}}) \neq y\big)$ and the standard error for a specific class $y \in \mathcal{Y}$ as $\mathcal{R}_{nat}(f|y)$. The robust error, with a constraint $\norm{\bm{\delta}} \leq \epsilon$ (where $\epsilon$ is the adversarial margin), is represented as $\mathcal{R}_{rob}(f)=Pr.\big(f(\bm{\mathnormal{x}}+\bm{\delta}) \neq y\big)$. The robust error for a class $y \in \mathcal{Y}$ is denoted by $\mathcal{R}_{rob}(f|y)$. Let $f_{nat}=\argmin_{\bm{w}, b} \mathcal{R}_{nat}$ and $f_{rob}=\argmin_{\bm{w}, b} \mathcal{R}_{rob}$. Given these definitions, the aforementioned binary classification task conforms to the following theorem:



\begin{theorem}
\label{eq:method_error_proportional_alpha}
Let $0<\epsilon<\eta$ and $A \equiv\frac{2\sqrt{d}\sigma}{\sigma^2-1} > 0$. Given a data distribution $\mathcal{D}$ as described in equation \ref{eq:method_gaussian_distribution}, $f_{nat}$ and $f_{rob}$ exhibit the following standard and robust errors, respectively:
\begin{align}
\mathcal{R}_{nat}(f_{nat}|+1)&=Pr.\bigg(\mathcal{N}(0,1)<-A(\frac{1+\alpha}{2}\eta)+\sqrt{\big(\frac{A}{\sigma}\big)^2 \big(\frac{1+\alpha}{2}\eta\big)^2+\frac{2\log\sigma}{\sigma^2-1}}\bigg), \\
\mathcal{R}_{rob}(f_{rob}|+1)&=Pr.\bigg(\mathcal{N}(0,1)<-A(\frac{1+\alpha}{2}\eta-\epsilon)+\sqrt{\big(\frac{A}{\sigma}\big)^2 \big(\frac{1+\alpha}{2}\eta-\epsilon\big)^2+\frac{2\log\sigma}{\sigma^2-1}}\bigg).
\end{align}

Consequently, both $\mathcal{R}_{nat}(f_{nat}|+1)$ and $\mathcal{R}_{rob}(f_{rob}|+1)$ decrease monotonically with respect to $\alpha$.
\end{theorem}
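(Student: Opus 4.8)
The plan is to derive closed forms for the Bayes-optimal natural classifier $f_{nat}$ and the robust-optimal classifier $f_{rob}$ directly, then reduce the class-conditional errors to one-dimensional Gaussian tail probabilities, and finally analyze monotonicity in $\alpha$.

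First I would exploit the symmetry of the problem. Since $\bm\mu=(\eta,\dots,\eta)$ is the all-$\eta$ vector and both conditional covariances are scalar multiples of $I$, the optimal separating hyperplane must have weight vector $\bm w$ parallel to $\bm\mu$; equivalently we may project everything onto the one-dimensional axis $t=\frac{1}{\sqrt d}\sum_j x_j$. Under this projection the two classes become $\mathcal N(\sqrt d\,\eta,\sigma^2)$ for $y=+1$ and $\mathcal N(-\alpha\sqrt d\,\eta,1)$ for $y=-1$. The Bayes classifier for two one-dimensional Gaussians with unequal variances is a threshold rule $t \gtrless \tau$ (the region where the $+1$-density dominates is an interval, but because $\sigma>1$ the $+1$-class is the wider one and the relevant decision boundary on the side between the means is a single point $\tau$; the second root lies far on the opposite tail and contributes negligibly, or more carefully one argues it does not affect $\mathcal R_{nat}(f_{nat}\mid +1)$ to the stated order — this is the kind of bookkeeping I would need to be careful about). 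Solving the quadratic $\frac{(t-\sqrt d\eta)^2}{2\sigma^2}+\log\sigma = \frac{(t+\alpha\sqrt d\eta)^2}{2}$ for $\tau$ gives the threshold, and then $\mathcal R_{nat}(f_{nat}\mid+1)=\Pr_{t\sim\mathcal N(\sqrt d\eta,\sigma^2)}(t<\tau)=\Pr(\mathcal N(0,1)<(\tau-\sqrt d\eta)/\sigma)$. Carrying out the algebra — completing the square, applying the quadratic formula, and writing $m\equiv\frac{1+\alpha}{2}\eta$ — should reproduce the stated expression with $A=\frac{2\sqrt d\sigma}{\sigma^2-1}$; I would double-check the constant $A$ by tracking the $\frac{1}{\sigma^2-1}$ factor that comes from clearing denominators in the quadratic.

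Second, for the robust error I would use the standard fact (as in the TRADES / Tsipras-style analyses) that an $\ell_\infty$ (or $\ell_2$, depending on the norm fixed) perturbation of size $\epsilon$ shifts the effective mean of each class toward the boundary by a fixed amount. Concretely, the adversary against a linear classifier $\sign(\bm w^\top\bm x+b)$ with $\bm w\parallel\bm\mu$ can always move a point by $\epsilon$ in the direction that hurts classification, so the robust-optimal classifier and its error are obtained from the natural analysis by replacing the separation $m=\frac{1+\alpha}{2}\eta$ with the shrunk separation $m-\epsilon$ (this is exactly why the hypothesis $0<\epsilon<\eta$ is imposed — it keeps $m-\epsilon>0$). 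This substitution turns the first displayed formula into the second one with no further work, provided one also re-optimizes $(\bm w,b)$ for the perturbed problem, which by the same symmetry argument is again a threshold rule.

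Third, the monotonicity claim: set $g(m)=-Am+\sqrt{(A/\sigma)^2 m^2 + c}$ where $c=\frac{2\log\sigma}{\sigma^2-1}>0$ (using $\sigma>1$). Then $\mathcal R_{nat}(f_{nat}\mid+1)=\Phi(g(m))$ with $m=\frac{1+\alpha}{2}\eta$ increasing in $\alpha$, and $\mathcal R_{rob}(f_{rob}\mid+1)=\Phi(g(m-\epsilon))$ with $m-\epsilon$ also increasing in $\alpha$; since $\Phi$ is increasing it suffices to show $g$ is decreasing on $m\ge 0$. Differentiating, $g'(m) = -A + \frac{(A/\sigma)^2 m}{\sqrt{(A/\sigma)^2 m^2+c}}$, and since $\frac{(A/\sigma)^2 m}{\sqrt{(A/\sigma)^2 m^2+c}} < \frac{A}{\sigma}\cdot\frac{(A/\sigma)m}{\sqrt{(A/\sigma)^2m^2}} = \frac{A}{\sigma} < A$ (using $\sigma>1$), we get $g'(m)<0$ everywhere, so both errors decrease monotonically in $\alpha$.

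The main obstacle I anticipate is not the monotonicity step (which is a short calculus argument) but the derivation of the exact error formula: making rigorous that the Bayes decision region for two unequal-variance Gaussians reduces to the single threshold producing precisely the claimed expression — handling the spurious second root of the quadratic, confirming the sign choices in the quadratic formula, and verifying that the constant is exactly $A=\frac{2\sqrt d\sigma}{\sigma^2-1}$ rather than some variant. I would also want to pin down which norm $\norm{\cdot}$ is used for $\bm\delta$, since that determines the precise mechanism of the $m\mapsto m-\epsilon$ reduction; with the all-ones mean direction the $\ell_2$ and $\ell_\infty$ cases differ only by a $\sqrt d$ rescaling that can be absorbed into the definition of $\epsilon$, but this should be stated explicitly.
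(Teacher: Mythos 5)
Your overall route is the one the paper takes: reduce to the direction $\bm w=\frac{1}{\sqrt d}\bm 1$, translate so the projected means are $\pm\sqrt d\,m$ with $m=\frac{1+\alpha}{2}\eta$, solve for the threshold, obtain the robust case by the substitution $m\mapsto m-\epsilon$ (the perturbation is applied per coordinate, so the projected mean shifts by $\sqrt d\,\epsilon$, matching the paper's convention), and finish with exactly the calculus bound $g'(m)\le -A+\frac{A}{\sigma}<0$. That last step is verbatim the paper's argument and is fine.

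The genuine gap is the framing of the first step as computing the ``Bayes-optimal'' classifier. In the paper $f_{nat}$ is defined as $\argmin_{\bm w,b}\mathcal R_{nat}$ over \emph{linear} classifiers with $\norm{\bm w}_2=1$, not as the Bayes rule. Since the two class covariances differ ($\sigma^2 I$ versus $I$), the Bayes decision boundary is a quadric, and even after projecting to one dimension the Bayes acceptance region for the wide class $y=+1$ is the complement of a bounded interval (two thresholds). That rule is not a half-space, and its class-$+1$ error is a difference of two Gaussian CDFs rather than the single tail probability asserted in the theorem; your hedge that the second root ``contributes negligibly'' or ``does not affect the stated order'' cannot close this, because the theorem claims an exact equality. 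The repair — and what the paper actually does — is to never leave the linear class: first prove $\bm w=\frac{1}{\sqrt d}\bm 1$ is optimal (the paper does this by a coordinate-exchange argument in its Lemma 1; the bare appeal to symmetry is too loose, since one cannot average sign classifiers), then optimize the single scalar $b$ by setting $\frac{\partial}{\partial b}\big(\mathcal R(f|+1)+\mathcal R(f|-1)\big)=0$. This first-order condition is precisely the density-equality equation you wrote, but posed for one threshold only, so it selects the unique admissible root of the quadratic and yields the stated closed form with $A=\frac{2\sqrt d\,\sigma}{\sigma^2-1}$ directly. With that reframing the rest of your plan goes through unchanged.
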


A detailed proof of Theorem \ref{eq:method_error_proportional_alpha} can be found in Appendix \ref{appendix:proof}. Theorem \ref{eq:method_error_proportional_alpha} suggests that as $\alpha$ increases, indicating an increasing distance between the two classes, the error of the class $y=+1$ decreases under both standard and adversarial settings. 
Furthermore, we illustrate that the disparity in robust accuracy between the two classes also reduces when the distance between them increases.

\begin{theorem}
\label{eq:method_error_difference_proportional_alpha}
Let $D\big(\mathcal{R}(f)\big)$ be the disparity in errors for the two classes within dataset $\mathcal{D}$ as evaluated by a classifier $f$,  Then $D\big(\mathcal{R}(f)\big)$ is
\begin{multline}
\label{eq:method_disparity_rob}
D\big(\mathcal{R}(f)\big) \equiv \mathcal{R}(f|+1) - \mathcal{R}(f|-1) \\ =Pr.\bigg(\frac{A}{\sigma}g(\alpha)-\sqrt{A^2g^2(\alpha)+h(\sigma)} < \mathcal{N}(0,1)<-A g(\alpha) + \sqrt{\big(\frac{A}{\sigma}\big)^2g^2(\alpha)+\frac{h(\sigma)}{\sigma^2}} \bigg),
\end{multline}
where $A=\frac{2\sqrt{d}\sigma}{\sigma^2-1}$, $h(\sigma)=\frac{2\sigma^2\log \sigma}{\sigma^2-1}$, and in a standard setting $g(\alpha)=\frac{1+\alpha}{2}\eta$, while in an adversarial setting, $g(\alpha)=\frac{1+\alpha}{2}\eta-\epsilon$. Then, the disparity between these two errors decreases monotonically with the increase of $\alpha$.
\end{theorem}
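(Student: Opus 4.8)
The plan is to compute both class-conditional errors explicitly, subtract them to obtain a closed form for $D\big(\mathcal{R}(f)\big)$ as a Gaussian probability of an interval, and then show that both endpoints of the interval move in a direction that shrinks the interval as $\alpha$ grows. First I would reuse the setup from the proof of Theorem~\ref{eq:method_error_proportional_alpha}: the optimal classifier $f$ (whether $f_{nat}$ or $f_{rob}$) is determined by the Bayes-type decision rule for the two Gaussians, so $\mathcal{R}(f|+1)$ and $\mathcal{R}(f|-1)$ are tail probabilities of a standard normal past a threshold determined by the optimal $(\bm w, b)$. Working in the symmetric coordinate along the all-ones direction (the data is isotropic apart from that direction, so the problem reduces to one dimension with effective separation controlled by $g(\alpha)$ and the variance ratio $\sigma$), the class $y=+1$ error takes the form $Pr.\big(\mathcal{N}(0,1) < -Ag(\alpha) + \sqrt{(A/\sigma)^2 g^2(\alpha) + h(\sigma)/\sigma^2}\big)$ and the class $y=-1$ error takes the form $Pr.\big(\mathcal{N}(0,1) < (A/\sigma)g(\alpha) - \sqrt{A^2 g^2(\alpha) + h(\sigma)}\big)$, with $A$, $h(\sigma)$ as in the statement; here the adversarial case just substitutes $g(\alpha) = \tfrac{1+\alpha}{2}\eta - \epsilon$ for $g(\alpha)=\tfrac{1+\alpha}{2}\eta$, exactly as in Theorem~\ref{eq:method_error_proportional_alpha}.

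Next I would rewrite the difference of these two CDF values as the probability that a standard normal lands in the open interval whose left endpoint is $L(\alpha) = \tfrac{A}{\sigma}g(\alpha) - \sqrt{A^2 g^2(\alpha) + h(\sigma)}$ and whose right endpoint is $R(\alpha) = -A g(\alpha) + \sqrt{(A/\sigma)^2 g^2(\alpha) + h(\sigma)/\sigma^2}$, which is precisely display~\eqref{eq:method_disparity_rob}. One should first check this interval is nonempty (i.e. $L(\alpha) < R(\alpha)$), which follows because the $+1$ class has the larger variance and hence the larger error, so $\mathcal{R}(f|+1) - \mathcal{R}(f|-1) > 0$; this is where the hypothesis $\sigma > 1$ is used. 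Then, since $g$ is a strictly increasing affine function of $\alpha$ (with $g(\alpha) > 0$ given $0 < \epsilon < \eta$ and $\alpha \ge 1$), monotonicity in $\alpha$ reduces to monotonicity in the single variable $t := g(\alpha) > 0$: I would show $\tfrac{d}{dt} L(t) > 0$ and $\tfrac{d}{dt} R(t) < 0$, so the left endpoint moves right and the right endpoint moves left, shrinking the interval; since the standard normal density is positive everywhere, the probability of the interval then decreases strictly.

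The routine part is the two derivative sign checks. For $R(t) = -At + \sqrt{(A/\sigma)^2 t^2 + h(\sigma)/\sigma^2}$ we have $R'(t) = -A + \dfrac{(A/\sigma)^2 t}{\sqrt{(A/\sigma)^2 t^2 + h(\sigma)/\sigma^2}}$, and since the fraction has absolute value strictly less than $A/\sigma < A$, we get $R'(t) < 0$. For $L(t) = \tfrac{A}{\sigma} t - \sqrt{A^2 t^2 + h(\sigma)}$ we have $L'(t) = \tfrac{A}{\sigma} - \dfrac{A^2 t}{\sqrt{A^2 t^2 + h(\sigma)}}$; the subtracted term lies in $(0, A)$ but I need it to dominate $A/\sigma$, i.e. $\dfrac{A^2 t}{\sqrt{A^2 t^2 + h(\sigma)}} > \tfrac{A}{\sigma}$, equivalently $\sigma^2 A^2 t^2 > A^2 t^2 + h(\sigma)$, equivalently $(\sigma^2 - 1)A^2 t^2 > h(\sigma)$. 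Substituting $A = \tfrac{2\sqrt d \sigma}{\sigma^2 - 1}$ and $h(\sigma) = \tfrac{2\sigma^2 \log\sigma}{\sigma^2-1}$ this becomes $\tfrac{4 d \sigma^2}{\sigma^2 - 1} t^2 > 2\sigma^2 \log\sigma$, i.e. $t^2 > \tfrac{(\sigma^2-1)\log\sigma}{2d}$.

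The main obstacle is thus verifying that last inequality holds throughout the admissible range of $\alpha$ (equivalently $t$), i.e. that $L'(t) < 0$ genuinely; the smallest value of $t$ is $g(1) = \eta - \epsilon$ in the adversarial case (and $\eta$ in the standard case), so the needed bound is $(\eta-\epsilon)^2 > \tfrac{(\sigma^2-1)\log\sigma}{2d}$. I expect this to be exactly the content of an implicit nondegeneracy assumption in the theorem's regime (the same kind of parameter restriction already present via $A > 0$ and $0 < \epsilon < \eta$, and consistent with the dimension $d$ being large); I would either invoke the standing assumptions from the proof of Theorem~\ref{eq:method_error_proportional_alpha} that guarantee the relevant thresholds have the claimed signs, or add it as a hypothesis. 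Once the two endpoint derivatives have the stated signs, the conclusion that $D\big(\mathcal{R}(f)\big)$ decreases monotonically in $\alpha$ is immediate, and the argument applies verbatim to both the standard choice $g(\alpha) = \tfrac{1+\alpha}{2}\eta$ and the adversarial choice $g(\alpha) = \tfrac{1+\alpha}{2}\eta - \epsilon$.
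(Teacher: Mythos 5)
Your setup is right and matches the paper: the two class-conditional errors are exactly the Gaussian tail probabilities with thresholds $z_{+1}=R(\alpha)=-Ag(\alpha)+\sqrt{(A/\sigma)^2g^2(\alpha)+h(\sigma)/\sigma^2}$ and $z_{-1}=L(\alpha)=\tfrac{A}{\sigma}g(\alpha)-\sqrt{A^2g^2(\alpha)+h(\sigma)}$, the interval is nonempty because $h(\sigma)>0$, and your verification that $R'(t)<0$ is correct. The gap is in the monotonicity step. Your plan requires the interval to shrink from both sides, i.e.\ $L'(t)>0$; but your own algebra shows $L'(t)>0$ holds iff $(\sigma^2-1)A^2t^2<h(\sigma)$, which simplifies to $2dt^2<\log\sigma$. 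In the regime the theorem lives in (large $d$, well-separated classes --- the same regime that makes the thresholds negative and that the paper's Corollary assumes via $dg^2(\alpha)\gg\sigma$), the opposite inequality holds, so $L'(t)<0$: the left endpoint also moves left as $\alpha$ grows, which is just the statement that the easy class's error decreases too. Your proposal ends by conceding $L'(t)<0$ "genuinely," but then the shrinking-interval conclusion no longer follows and you have no argument that $\Phi(R)-\Phi(L)$ decreases; the extra hypothesis you suggest adding would have to force $L'>0$, i.e.\ $2dt^2<\log\sigma$, which is the degenerate regime, not a harmless nondegeneracy condition.

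What is actually needed --- and what the paper does --- is a density-weighted comparison rather than an endpoint-motion argument. Write $\frac{d}{d\alpha}\big(\Phi(z_{+1})-\Phi(z_{-1})\big)=\phi(z_{+1})z_{+1}'-\phi(z_{-1})z_{-1}'$ with $\phi$ the standard normal density. One checks $z_{+1}^2<z_{-1}^2$ (both thresholds satisfy $|z_{+1}|<|z_{-1}|$ because $\sigma>1$), hence $\phi(z_{+1})>\phi(z_{-1})$; combined with $z_{+1}'<0$ this gives $\phi(z_{+1})z_{+1}'-\phi(z_{-1})z_{-1}'<\phi(z_{-1})\big(z_{+1}'-z_{-1}'\big)$. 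The identity $\sqrt{(A/\sigma)^2g^2+h/\sigma^2}=\tfrac{1}{\sigma}\sqrt{A^2g^2+h}$ then collapses the difference of derivatives to
\begin{equation*}
z_{+1}'-z_{-1}'=-A\,\tfrac{\eta}{2}\Big(1+\tfrac{1}{\sigma}\Big)\Big(1-\tfrac{Ag(\alpha)}{\sqrt{A^2g^2(\alpha)+h(\sigma)}}\Big)<0,
\end{equation*}
which is negative unconditionally because $h(\sigma)>0$. This closes the argument without any sign assumption on $L'$. You should replace the "both endpoints move toward each other" step with this comparison; everything before and after it in your write-up can stay.
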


Theorem \ref{eq:method_error_difference_proportional_alpha} demonstrates that both in standard and adversarial settings, the accuracy disparity is more significant for closer classes compared to distant ones. A proof of Theorem \ref{eq:method_error_difference_proportional_alpha} is provided in Appendix \ref{appendix:proof}.

\subsection{Empirical verification}
\label{method:empirical verification}

\paragraph{Empirical verification of theoretical analysis}

We undertake empirical verification based on preliminary theoretical findings. We select three pairs of classes with varying distances between them from the CIFAR-10 dataset. Fig. \ref{fig:method_dist_matrix} displays the distances between classes in the robust classifier. Based on this, we conduct binary classification tasks for the bird class, which is the hard class, against three other easy classes, ordered by distance: deer, horse, and truck. Initially, we compare the performance of the bird class in the standard and robust classifiers to validate the assertion from Theorem \ref{eq:method_error_proportional_alpha} that as the distance between two classes increases, the performance of the hard class increases. In Fig. \ref{fig:method_bird_empirical_0}, we observe that as the distance to the bird class increases, both performances improve, a result that aligns with Theorem \ref{eq:method_error_proportional_alpha}. Secondly, corresponding to Theorem \ref{eq:method_error_difference_proportional_alpha}, we validate that distant classes exhibit a smaller accuracy disparity. The results in Fig. \ref{fig:method_bird_empirical_1} highlight that as the distance to the bird class increases, accuracy disparity decreases. Thus, we confirm that the theorems presented earlier are empirically verified. Further details are provided in Appendices \ref{appendix:experimental_details} and \ref{appendix:analysis on the robust fairness problem}.



\begin{figure}[t]

\subfloat[\label{fig:method_dist_matrix}]{
\includegraphics[width=0.32\linewidth]{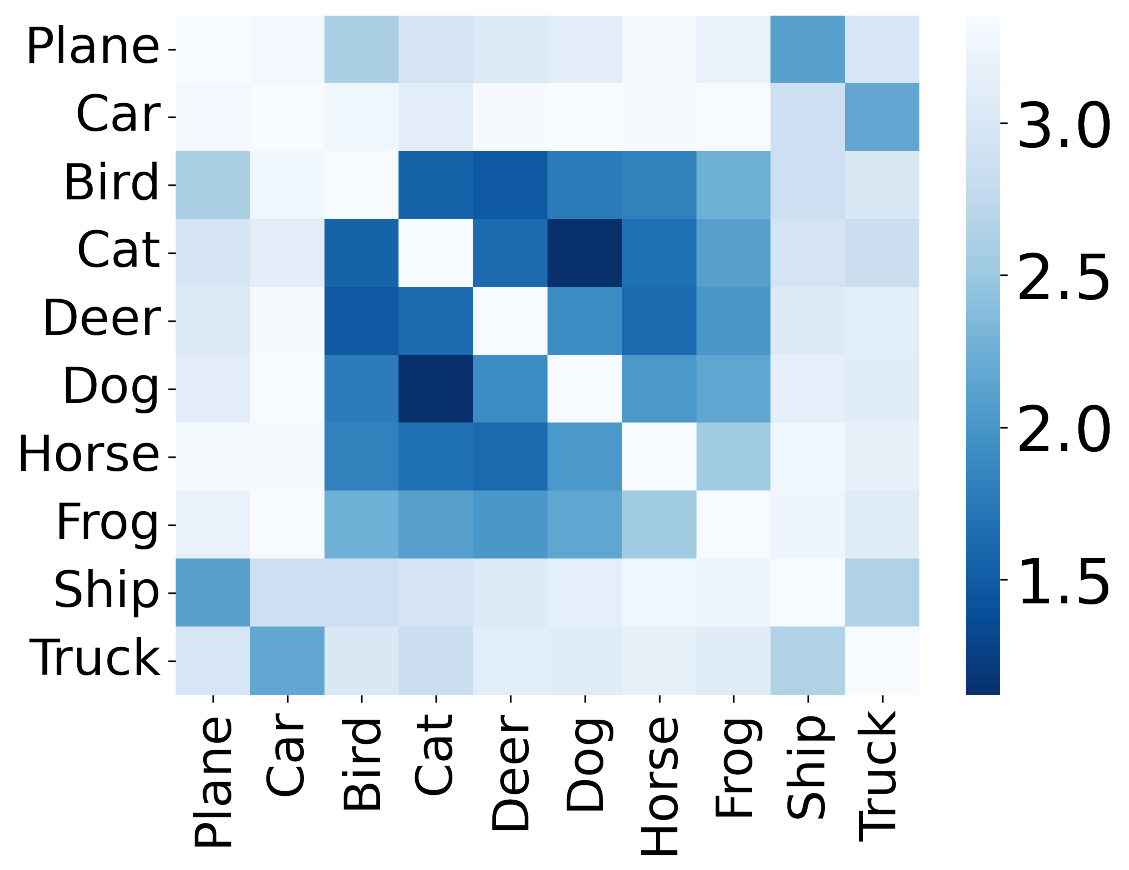}
}
\hfill
\subfloat[\label{fig:method_bird_empirical_0}]{
\includegraphics[width=0.32\linewidth]{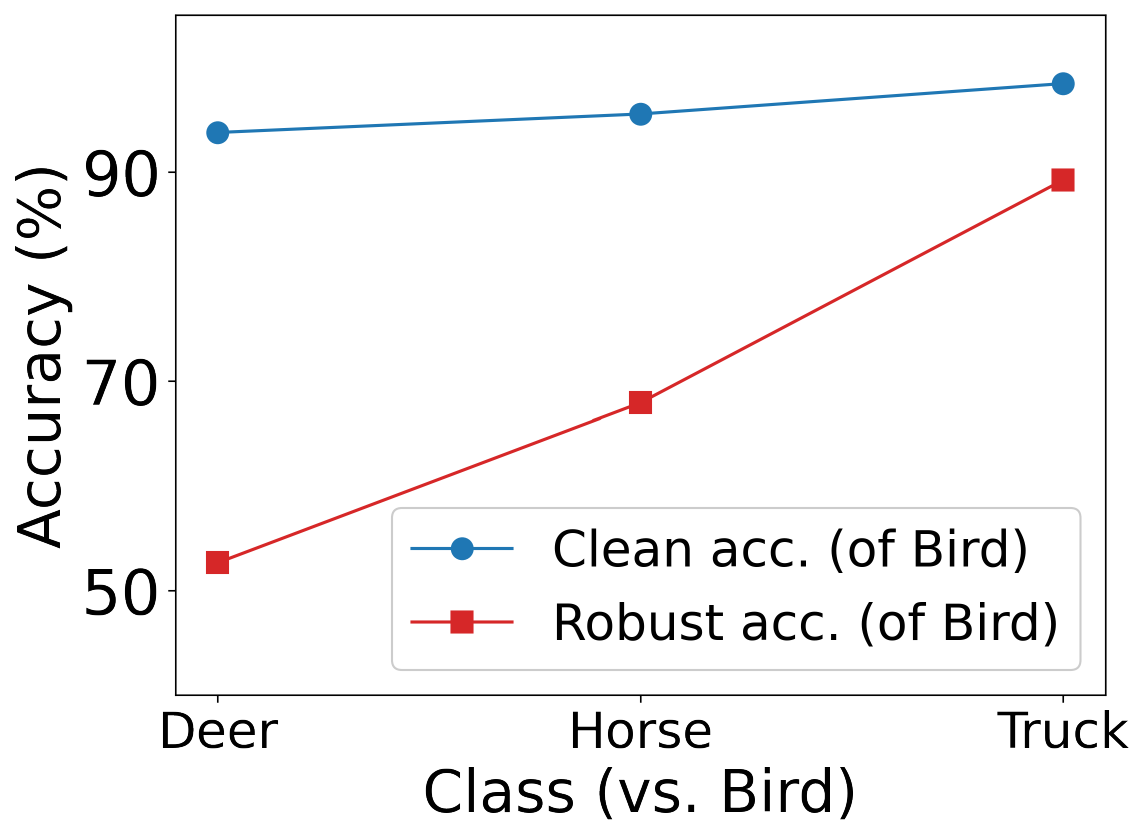}}
\hfill
\subfloat[\label{fig:method_bird_empirical_1}]{
\includegraphics[width=0.32\linewidth]{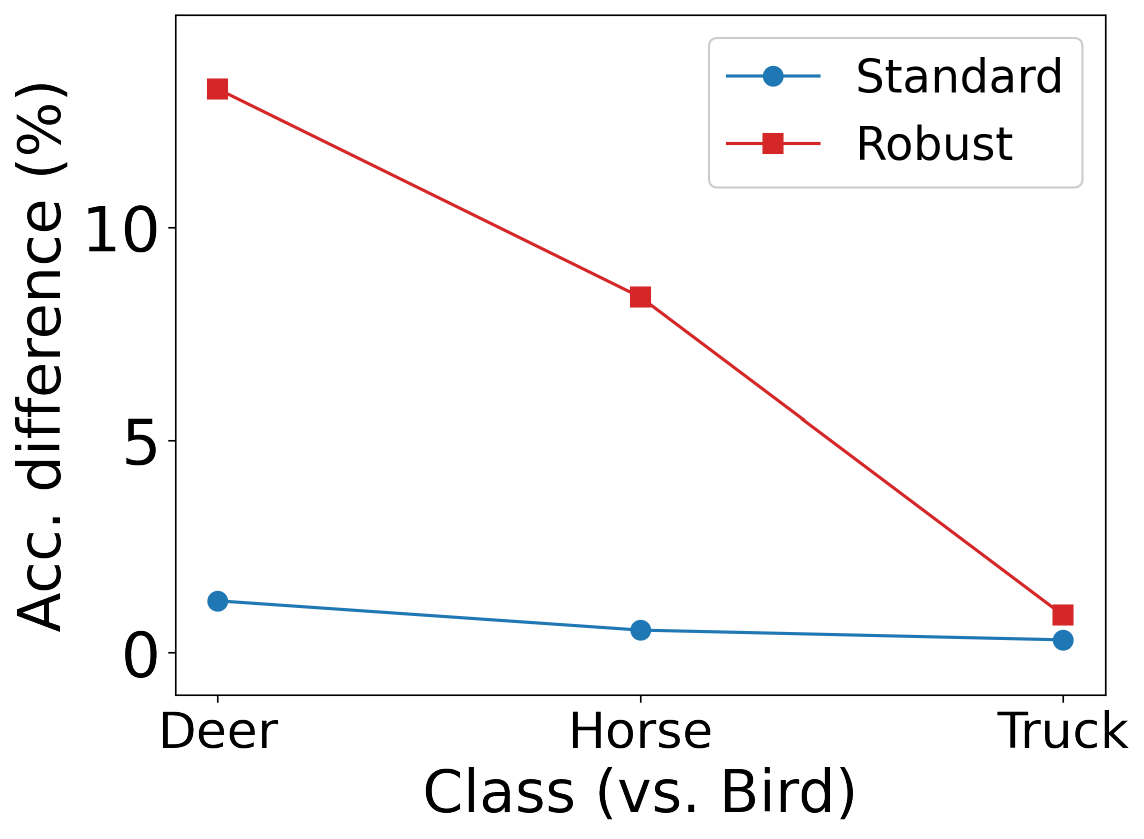}}

\caption{(a) displays the class distances between classes in CIFAR-10 as represented in the TRADES model. (b) and (c) depict the results from binary classification tasks on CIFAR-10's bird class paired with three distinct classes: deer, horse, and truck. Specifically, (b) presents the performance for the bird class, while (c) displays the accuracy disparity for each respective classifier.}
\vskip -4mm
\label{fig:method_for_reg}
\end{figure}

\paragraph{Class-wise distance}
The theoretical and empirical analyses presented above indicate that a class becomes more challenging to learn when it is close to other classes. Intuitively, this suggests that in a multi-classification task, the more a class's average distance from other classes decreases, the harder it becomes. Motivated by this observation, we conduct experiments to examine the correlation between the average distance to other classes and class difficulty, aiming to determine the extent to which this average distance reflects class difficulty in an adversarial setting. For simplicity, we will refer to the average distance of a class to other classes as the `class-wise distance'.


Fig. \ref{fig:intro_accuracy_distance} presents the correlation between class-wise accuracy and distance. The figure demonstrates a significant correlation between the two metrics. Hence, the class-wise distance aptly reflects the difficulty of a class; a smaller value of class-wise distance tends to correlate with a lower class accuracy. Additionally, we explore the relationship between class difficulty and another potential indicator: variance. Fig. \ref{fig:intro_accuracy_variance} illustrates the correlation between class-wise accuracy and variance. Unlike class-wise distance, there doesn't appear to be a correlation between class-wise accuracy and variance. This observation suggests that in an adversarial setting, the distance of a class plays a more crucial role in determining its difficulty than its variance. Previous studies emphasize the link between class-wise accuracy and variance, but our observations underscore the importance of class-wise distance. 
Therefore, we aim to address robust fairness through the lens of inter-class distance. 

\section{Method}
\label{method:proposed method}

In this section, based on the above analysis, we explore approaches to enhance the performance of hard classes, aiming to reduce robust accuracy disparity. Through a simple illustration, we demonstrate that, to elevate the performance of a hard class, it is more effective to shift the decision boundary between the hard class and its neighboring classes rather than between the hard class and distant classes. 
Subsequently, we identify an approach that effectively shifts the decision boundary for adjacent classes in adversarial training. A empirical verification confirms the efficacy of this method. Drawing upon these insights, we propose a novel methodology to improve robust fairness.

\begin{figure}[t]
\subfloat[\label{fig:intro_accuracy_distance}]{
\includegraphics[width=0.47\linewidth]{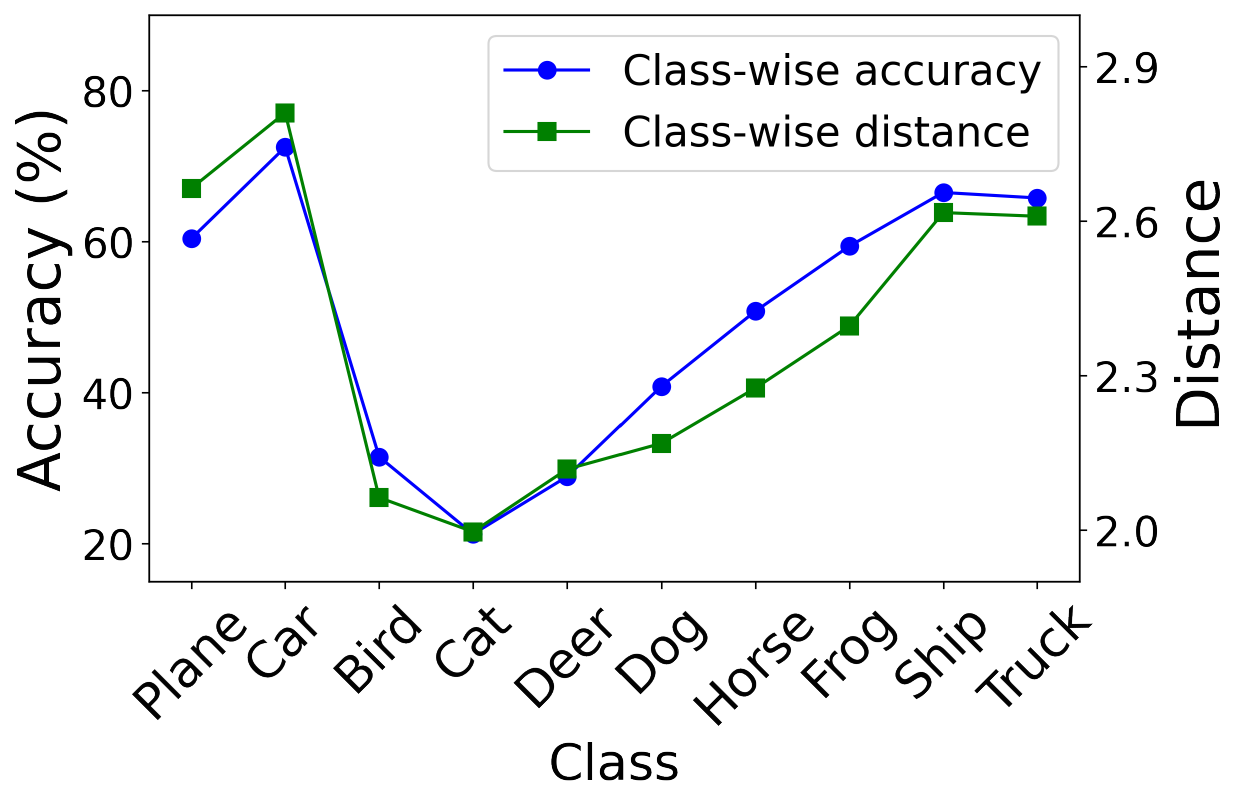}}
\hfill
\subfloat[\label{fig:intro_accuracy_variance}]{
\includegraphics[width=0.47\linewidth]{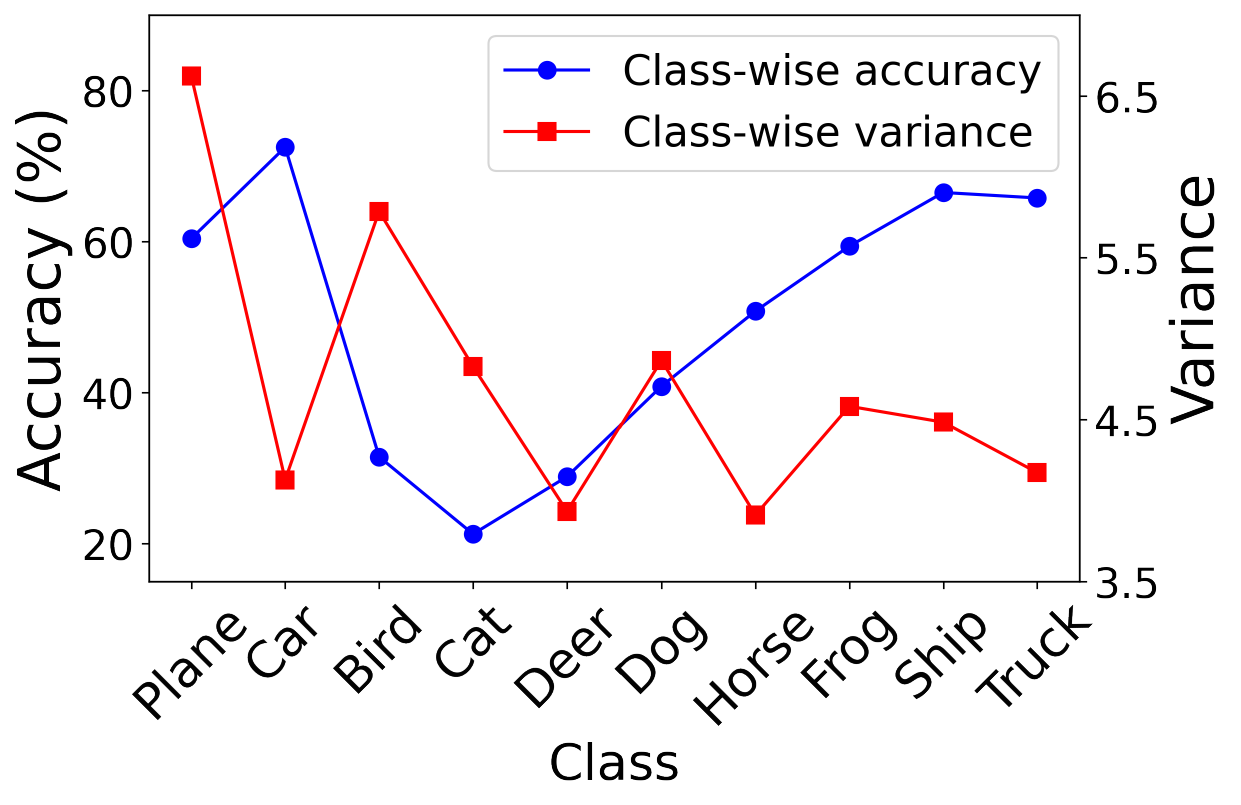}
}

\caption{(a) illustrates the test robust accuracy and the average distance from each class to the other classes in the TRADES models for CIFAR-10. (b) displays the test robust accuracy and variance for each class in the TRADES models applied to CIFAR-10 (See Appendix \ref{appendix:experimental_details} for further details).}
\vskip -4mm
\label{fig:intro_accuracy_variance_distance}
\end{figure}


\subsection{Consideration of class-wise distance}

\paragraph{Illustration example}
Our theoretical analyses in Section \ref{method:the effect of class distance on robust fairness} demonstrate that robust fairness is exacerbated by similar classes. 
From this insight, we present an example illustrating that improving a hard class's performance is more effectively achieved by moving the decision boundary (DB) between the hard class and its neighboring class than that between the hard class and a distant class. Fig. \ref{fig:method_three_distributions} shows a mixture of three Gaussian distributions with identical variance. Here, we consider the distributions from left to right as classes C, A, and B respectively. Class A, positioned between classes B and C, is the hardest class. The figure displays the effect of moving the DBs, $\text{DB}_{AB}$ and $\text{DB}_{AC}$, away from class A's center. We will compare moving $\text{DB}_{AB}$ and moving $\text{DB}_{AC}$ to determine which is more efficient in enhancing the performance of class A.
The shaded areas in the figure represent how much each DB needs to be shifted to achieve same performance gain for class A.

Comparatively, to achieve an equivalent performance enhancement, the shift required for $\text{DB}_{AC}$ is notably larger than for $\text{DB}_{AB}$. Moreover, considering potential performance drops in either class B or C from adjusting the DB, shifting the closer DB of class B appears more efficient for average accuracy than adjusting that of class C. This example consistently supports our intuition that an efficient strategy to mitigate accuracy disparity in a multi-classification task is to distance the DB between neighboring classes. This approach demonstrates its efficiency in terms of average accuracy, even though there might be a change in the worst class. Furthermore, in various subsequent experiments, we observe a net increase in performance.

\paragraph{Our approaches} 

To distance the DB from the center of the hard class A in an adversarial setting, we employ two strategies: adjusting loss weights and adversarial margins, $\epsilon$. Firstly, by assigning a greater weight to the loss of the hard class, the learning process naturally leans towards improving the performance of this class, subsequently pushing the DB away from the center of the hard class. Secondly, in terms of the adversarial margin, as it increases, the model is trained to predict a broader range of input distributions as belonging to the specified class.
\rebuttal{
Allocating a larger adversarial margin to the hard class than adjacent classes pushes the DB away from the hard class's center by the margin difference.
}
Thus, assigning a larger adversarial margin to the hard class can be leveraged as an approach to distance the DB between it and neighboring classes.

\rebuttal{
We conduct experiments with different adversarial margins across classes to verify that assigning a larger margin to a specific class, compared to its neighboring class, shifts the DB and improves the class performance.
}
Fig. \ref{fig:method_eps6_6_bird_cat_accuracy} shows the results of three robust classifiers trained on a binary classification task distinguishing between the bird and cat classes from CIFAR-10. Although each model has different adversarial margins during training, they are tested with a consistent $\epsilon=6$. The findings reveal that models with identical $\epsilon$ values perform similarly. However, when trained with varying $\epsilon$ values, both the clean and robust accuracy metrics highlight superior performance for the class with the larger margin. This suggests that to boost a class's performance, it's beneficial to allocate the hard class a larger adversarial margin. Further details on Fig. \ref{fig:method_eps6_6_bird_cat_accuracy} and theoretical analysis of the adversarial margin's impact are available in Appendices \ref{appendix:experimental_details} and \ref{appendix:theoretical analysis on adversarial margin}.

\begin{figure}
\begin{minipage}{0.36\textwidth}
    \centering
    \includegraphics[width=1.0\linewidth]{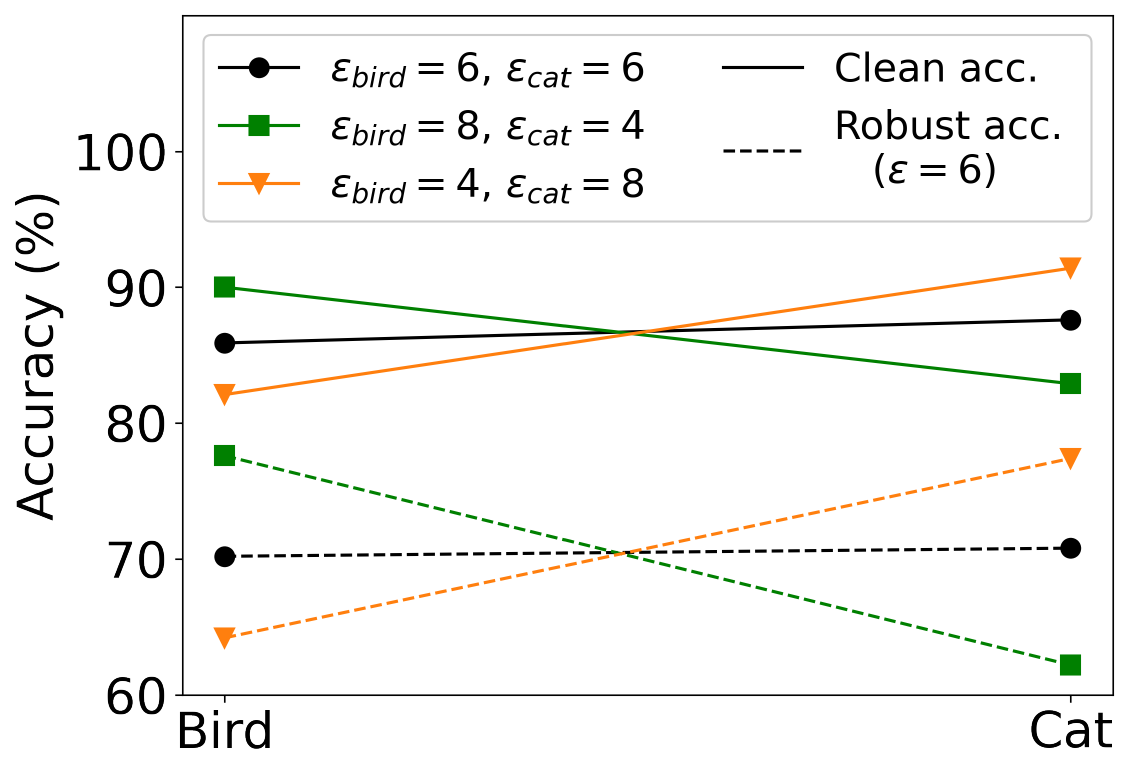}
    \caption{Results of binary classification tasks between bird and cat.}    
    \label{fig:method_eps6_6_bird_cat_accuracy}
  \end{minipage}
  \hskip 4mm
  \begin{minipage}{0.6\textwidth}
    \centering
    \includegraphics[width=1.0\linewidth]{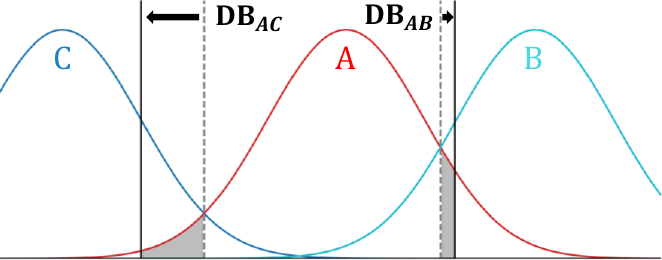}
    \caption{Illustration example of classification of three gaussian distributions with varying means. The shaded areas in the figure are of equal size.}    
    \label{fig:method_three_distributions}
  \end{minipage}%
  
\end{figure}


\subsection{Distance-Aware fair adversarial training}
\label{method:DAFA}

Building on the aforementioned approaches, we introduce the Distance-Aware Fair Adversarial Training (DAFA) methodology, which incorporates class-wise distance to bolster robust fairness. We will establish a framework to assign loss weights and adversarial margins to each class based on inter-class similarities. Initially, we employ the class-wise softmax output probability as a measure of these similarities. The average class-wise softmax output probability across all class data can encapsulate inter-class distance relationships \citep{smoothing_kd}. Hence, close classes tend to show higher probability values, whereas unrelated ones would have lower probability values. For brevity, we refer to the average class-wise softmax output probability as `class-wise probability'. This class-wise probability is denoted as $p_i(j)$, where $i$ stands for the true class and $j$ for the predicted class. Here, $p_i(i)$ signifies the probability of being correctly classified, representing the difficulty of the class. Utilizing the class-wise probability, we formulate a strategy to allocate weights that prioritize the hard class in both the loss and adversarial margin. Each class weight in the dataset is denoted as $\mathcal{W}_{y}$, where $y \in \{1, ..., C\}$ and $C$ is the number of classes in the dataset. 
\rebuttal{
We devise a new weight allocation approach: we reduce the weight of the easier class by a specific amount and simultaneously increase the weight of the harder class by the same magnitude.
}

Specifically, considering two classes $i$ and $j$, we begin by comparing $p_i(i)$ and $p_j(j)$. Once we determine which class is relatively easier, we decide on the amount of weight to subtract from the easier class and add to the weight of the harder class. In this context, taking into account the similarity between the classes, we determine the change in weight based on the similarity values between class $i$ and $j$, i.e.\ $p_i(j)$ or $p_j(i)$. Here, we determine the magnitude of weight change using the hard class as a reference. That is, when class $i$ is harder than $j$, we subtract $p_i(j)$ from $\mathcal{W}_j$ and add $p_i(j)$ to $\mathcal{W}_i$. Taking into account the opposite scenario as well, the formula for computing the weight is as follows:
\begin{equation}
\label{eq:method_proposed_step0}
    \begin{cases}
    \mathcal{W}_i \leftarrow \mathcal{W}_i + p_i(j) \\ \mathcal{W}_j \leftarrow \mathcal{W}_j - p_i(j)
    \end{cases}
    \smalltab \text{if } p_i(i) < p_j(j), \tab
     \begin{cases}
    \mathcal{W}_i \leftarrow \mathcal{W}_i - p_j(i) \\ \mathcal{W}_j \leftarrow \mathcal{W}_j + p_j(i)
    \end{cases}
    \smalltab \text{if } p_i(i) > p_j(j). 
\end{equation}
Applying the above formula to a general classification task, the weight for class $i$ is computed as follows: 
\begin{equation}
\label{eq:method_proposed_step1}
\mathcal{W}_{i}=\mathcal{W}_{i,0}+\sum_{j \neq i} \bm{1}\{p_i(i) < p_j(j)\} \cdot p_i(j)
- \bm{1}\{p_i(i) > p_j(j)\} \cdot p_j(i)
\end{equation}
$\bm{1}\{\cdot\}$ denotes an indicator function. $\mathcal{W}_{i,0}$ signifies the initial weight of class $i$. We set the initial weight for all classes to 1. Furthermore, when nearby classes are all at a similar distance from the hard class, we observe that drawing more weight from the relatively easier classes among them leads to better outcomes. Based on these observations, we determine the weights for each class as follows:
\begin{equation}
\label{eq:method_proposed_step2}
\mathcal{W}_{i}=1+\lambda\sum_{j \neq i} \bm{1}\{p_i(i) < p_j(j)\} \cdot p_i(j)p_j(j)
- \bm{1}\{p_i(i) > p_j(j)\} \cdot p_j(i)p_i(i)
\end{equation}
Here, $\lambda$ is the hyperparameter that controls the scale of the change in $\mathcal{W}_{i}$. Eq. \ref{eq:method_proposed_step2} results from multiplying the class difficulty $p_j(j)$ or $p_i(i)$ with the changing weights $p_i(j)$ and $p_j(i)$. Finally, incorporating the weights calculated according to Eq. \ref{eq:method_proposed_step2} with the two approaches—loss weight and adversarial margin—we propose the following adversarial training loss by rewriting Eq. \ref{eq:background_trades}.
\begin{equation}
\label{eq:method_trades_loss_ccr}
{\mathcal{L}}(\bm{x},y;\mathcal{W}_y)=\mathcal{W}_y\mathcal{L}_{CE}(f_{\theta}(\bm{x}),y)+\beta\underset{\norm{\bm{\delta}} \leq \mathcal{W}_y\epsilon}{\max}\text{KL}\big(f_{\theta}(\bm{x}), f_{\theta}(\bm{x+\delta})\big).
\end{equation}
In Eq. \ref{eq:method_trades_loss_ccr}, the weights we compute are multiplied with both the cross-entropy loss and the adversarial margin. Although we could multiply the weights to the KL divergence term to promote enhanced robustness for the worst class, a previous study \citep{fairness_cfa} indicates that using a large $\beta$ for the worst class can impair average clean accuracy without improving its robustness—we observe results that align with this finding. During the process of DAFA, similar to earlier methods \citep{fairness_frl, fairness_cfa}, we apply our approach after an initial warm-up phase. Detailed algorithms of our methodology can be found in Appendix \ref{appendix:algorithms}.

\section{Experiment}

\begin{table}[t]
\caption{Performance of the models for improving robust fairness methods. The best results among robust fairness methods are indicated in bold. }

\begin{subtable}[t]{1.0\linewidth}
\centering
\footnotesize
\addtolength{\tabcolsep}{-0pt}
\caption{CIFAR-10}
\begin{tabular}{l|ccc|ccc}
\toprule
\multicolumn{1}{c|}{\multirow{2}{*}{Method}} &
\multicolumn{2}{c}{Clean} & & \multicolumn{2}{c}{Robust}
\\ \multicolumn{1}{c|}{} & Average & Worst & $\rho_{nat}$ & Average & Worst &$\rho_{rob}$
\\
\midrule[.05em]
    TRADES & 82.04 $\pm$ 0.10 & 65.70 $\pm$ 1.32 & 0.00 
    & 49.80 $\pm$ 0.18 & 21.31 $\pm$ 1.19 & 0.00 \\
    \midrule[.05em]
    \rebuttal{TRADES + FRL} & \rebuttal{82.97 $\pm$ 0.30} & \rebuttal{71.94 $\pm$ 0.97} & \rebuttal{0.11} 
    & \rebuttal{44.68 $\pm$ 0.27} & \rebuttal{25.84 $\pm$ 1.09} & \rebuttal{0.11} \\    
    \rebuttal{TRADES + CFA} & \rebuttal{79.81 $\pm$ 0.15} & \rebuttal{65.18 $\pm$ 0.16} & \rebuttal{-0.04} 
    & \rebuttal{\textbf{50.78 $\pm$ 0.05}} & \rebuttal{27.62 $\pm$ 0.04} & \rebuttal{0.32} \\
    \rebuttal{TRADES + BAT} & \rebuttal{\textbf{87.16 $\pm$ 0.07}} & \rebuttal{\textbf{74.02 $\pm$ 1.38}} & \rebuttal{\textbf{0.19}}
    & \rebuttal{45.89 $\pm$ 0.20} & \rebuttal{17.82 $\pm$ 1.09} & \rebuttal{-0.24} \\
    \rebuttal{TRADES + WAT} & \rebuttal{80.97 $\pm$ 0.29} & \rebuttal{70.57 $\pm$ 1.54} & \rebuttal{0.06} 
    & \rebuttal{46.30 $\pm$ 0.27} & \rebuttal{28.55 $\pm$ 1.30} & \rebuttal{0.27} \\
    \midrule[.05em]
    TRADES + DAFA & 81.63 $\pm$ 0.20 & 67.90 $\pm$ 1.25 & 0.03 
    & {49.05 $\pm$ 0.14} & \textbf{30.53 $\pm$ 1.04} & \textbf{0.42} \\
\midrule[.1em]
\end{tabular}
\label{tab:experiment_main_cifar10}
\end{subtable}
\hfill

\begin{subtable}[t]{1.0\linewidth}
\centering
\footnotesize
\addtolength{\tabcolsep}{-0pt}
\caption{CIFAR-100}
\begin{tabular}{l|ccc|ccc}
\toprule
\multicolumn{1}{c|}{\multirow{2}{*}{Method}} &
\multicolumn{2}{c}{Clean} & & \multicolumn{2}{c}{Robust}
\\\multicolumn{1}{c|}{} & Average & Worst & $\rho_{nat}$ & Average & Worst &$\rho_{rob}$
\\
\midrule[.05em]
    TRADES & 58.34 $\pm$ 0.38 & 16.80 $\pm$ 1.05 & 0.00 
    & 25.60 $\pm$ 0.14 & 1.33 $\pm$ 0.94 & 0.00 \\
    \midrule[.05em]
    TRADES + FRL & \rebuttal{57.59 $\pm$ 0.34} & \rebuttal{18.73 $\pm$ 1.84} & \rebuttal{0.10}
    & \rebuttal{24.93 $\pm$ 0.29} & \rebuttal{1.80 $\pm$ 0.54} & \rebuttal{0.33} \\    
    TRADES + CFA & \rebuttal{57.74 $\pm$ 1.31} & \rebuttal{15.00 $\pm$ 1.55} & \rebuttal{-0.12}
    & \rebuttal{23.49 $\pm$ 0.30} & \rebuttal{1.67 $\pm$ 1.25} & \rebuttal{0.17} \\    
    TRADES + BAT & \rebuttal{\textbf{62.80 $\pm$ 0.17}} & \rebuttal{\textbf{22.40 $\pm$ 2.15}} & \rebuttal{\textbf{0.41}}
    & \rebuttal{23.42 $\pm$ 0.08} & \rebuttal{0.20 $\pm$ 0.40} & \rebuttal{-0.93} \\    
    TRADES + WAT & \rebuttal{53.56 $\pm$ 0.43} & \rebuttal{17.07 $\pm$ 2.29} & \rebuttal{-0.07}
    & \rebuttal{22.65 $\pm$ 0.28} & \rebuttal{1.87 $\pm$ 0.19} & \rebuttal{0.29} \\    
    \midrule[.05em]
    TRADES + DAFA & 58.07 $\pm$ 0.05 & 18.67 $\pm$ 0.47 & 0.11 
    & \textbf{25.08 $\pm$ 0.10} & \textbf{2.33 $\pm$ 0.47} & \textbf{0.73} \\
\midrule[.1em]
\end{tabular}
\label{tab:experiment_main_cifar100}
\end{subtable}
\hfill

\begin{subtable}[t]{1.0\linewidth}
\centering
\footnotesize
\addtolength{\tabcolsep}{-0pt}
\caption{STL-10}
\begin{tabular}{l|ccc|ccc}
\toprule
\multicolumn{1}{c|}{\multirow{2}{*}{Method}} &
\multicolumn{2}{c}{Clean} & & \multicolumn{2}{c}{Robust}
\\\multicolumn{1}{c|}{} & Average & Worst & $\rho_{nat}$ & Average & Worst &$\rho_{rob}$
\\
\midrule[.05em]
    TRADES & 61.13 $\pm$ 0.57 & 39.29 $\pm$ 1.71 & 0.00 
    & 31.36 $\pm$ 0.51 & 7.73 $\pm$ 0.99 & 0.00 \\
    \midrule[.05em]
    TRADES + FRL & \rebuttal{56.75 $\pm$ 0.37} & \rebuttal{31.26 $\pm$ 1.41} & \rebuttal{-0.28}
    & \rebuttal{28.99 $\pm$ 0.39} & \rebuttal{7.94 $\pm$ 0.49} & \rebuttal{-0.05} \\    
    TRADES + CFA & \rebuttal{\textbf{60.70 $\pm$ 0.57}} & \rebuttal{38.53 $\pm$ 0.95} & \rebuttal{-0.03}
    & \rebuttal{\textbf{31.88 $\pm$ 0.14}} & \rebuttal{7.69 $\pm$ 0.27} & \rebuttal{0.01} \\
    TRADES + BAT & \rebuttal{59.06 $\pm$ 1.29} & \rebuttal{36.75 $\pm$ 2.44} & \rebuttal{-0.10}
    & \rebuttal{23.37 $\pm$ 0.98} & \rebuttal{3.22 $\pm$ 1.00} & \rebuttal{-0.84} \\
    TRADES + WAT & \rebuttal{52.92 $\pm$ 1.80} & \rebuttal{30.82 $\pm$ 1.44} & \rebuttal{-0.35}
    & \rebuttal{26.98 $\pm$ 0.25} & \rebuttal{6.72 $\pm$ 0.75} & \rebuttal{-0.27} \\
    \midrule[.05em]
    TRADES + DAFA & {60.50 $\pm$ 0.57} & \textbf{42.23 $\pm$ 2.26} & \textbf{0.06}
    & 29.98 $\pm$ 0.46 & \textbf{10.73 $\pm$ 1.31} & \textbf{0.34} \\
\midrule[.1em]
\end{tabular}
\label{tab:experiment_main_stl10}
\end{subtable}
\vskip -2mm
\label{tab:experiment_main}
\end{table}

\paragraph{Implementation details}\label{exp:paragrph implementation details}
We conducted experiments on CIFAR-10, CIFAR-100 \citep{dataset_cifar}, and STL-10~\citep{dataset_stl10} using TRADES~\citep{defense_trades} as our baseline adversarial training algorithm and ResNet18 \citep{arch_resnet} for the model architecture. For our method, the warm-up epoch was set to $\tau=70$ and the hyperparameter $\lambda$ was set to $\lambda=1$ for CIFAR-10 and $\lambda=1.5$ for CIFAR-100 and STL-10 due to the notably low performance of hard classes. Robustness was evaluated using the adaptive auto attack ($A^3$) \citep{attack_aaa}. Since class-wise accuracy tends to vary more than average accuracy, we ran each test three times with distinct seeds and showcased the mean accuracy from the final five epochs—effectively averaging over 15 models. 
\rebuttal{
To ensure a fair comparison in our comparative experiments with existing methods, we employed the original code from each approach. More specifics can be found in Appendix~\ref{appendix:implementation_details}. Additionally, the results of experiments comparing each method both in each method's setting and the same setting are provided in the Appendix \ref{appendix:comparison studies}.
}

\paragraph{Improvement in robust fairness performance}\label{exp:paragrph improvement in robust fairness performance}

We compared the TRADES baseline model with five different robust fairness improvement methodologies: FRL \citep{fairness_frl}, CFA \citep{fairness_cfa}, BAT \citep{fairness_bat}, and WAT \citep{fairness_wat}, as well as our own approach. Both average and worst performances for clean accuracy and robust accuracy were measured. Additionally, to assess the change in worst accuracy performance against the change in average accuracy performance of the baseline model, we adopted the evaluation metric $\rho$ proposed by \citet{fairness_wat} (refer to Appendix \ref{appendix:implementation_details}). A higher value of $\rho$ indicates superior method performance.


As summarized in Table \ref{tab:experiment_main_cifar10} for the CIFAR-10 dataset, most robust fairness improvement methodologies demonstrate an enhancement in the worst class's robust accuracy when compared to the baseline model. Yet, the existing methods reveal a trade-off: a significant increase in the worst class's robust accuracy often comes at the cost of a noticeable decline in the average robust accuracy, and vice versa. In contrast, our methodology improves the worst class's robust accuracy by over 9\%p while maintaining the average robust accuracy close to that of the baseline model.
\rebuttal{
Additionally, we conducted experiments incorporating exponential moving average (EMA) \cite{smoothing_swa} into our method, following the approach of the previous method (CFA) that utilized a variant of EMA. As shown in Table \ref{tab:appendix_cifar10_frl}, when EMA is employed, our method demonstrates robust average accuracy comparable to the existing method while significantly increasing robust worst accuracy.
}

From Tables \ref{tab:experiment_main_cifar100} and \ref{tab:experiment_main_stl10}, it's evident that for datasets where worst robust accuracy is notably low, most existing methodologies struggle to enhance it. However, similar to the results on the CIFAR-10 dataset, our approach successfully elevates the worst robust accuracy while preserving the average robust accuracy. Moreover, in terms of clean accuracy, conventional methods lead to a marked decrease in average clean accuracy. In contrast, our approach preserves a clean accuracy performance comparable to the baseline. These results underscore the efficiency of our approach in significantly enhancing the performance of the worst class without detrimentally affecting the performance of other classes compared to conventional methods. By comparing the $\rho_{rob}$ results across all three datasets, we can confirm that our method considerably improves robust fairness compared to existing techniques.


\begin{wraptable}[12]{r}{0.5\linewidth}
\centering
\vskip -5mm
\caption{Performance comparison of our methods when applied only to the loss and when applied only to the adversarial margin. }
\begin{tabular}{l|cc}
    \toprule
     \multicolumn{1}{c|}{\multirow{2}{*}{Method}} &
     \multicolumn{2}{c}{Robust}  \\
     & Average & Worst \\
     
   
    \midrule[.05em]
     TRADES & 49.80 $\pm$ 0.18 & 21.31 $\pm$ 1.19 \\
     $\text{DAFA}_{\mathcal{L}_{CE}}$ & 49.41 $\pm$ 0.15 & 28.10 $\pm$ 1.26 \\     
     $\text{DAFA}_{\epsilon}$ & 49.51 $\pm$ 0.15 & 23.64 $\pm$ 0.85 \\
     DAFA & 49.05 $\pm$ 0.14 & 30.53 $\pm$ 1.04 \\

    \midrule[.1em]
\end{tabular}
\label{tab:experiment_methods}
\end{wraptable}

\paragraph{Effectiveness of DAFA}

Through DAFA, we conducted experiments to ascertain whether the two class-wise parameters applied to the loss $\mathcal{L}_{CE}$ and adversarial margin $\epsilon_y$ effectively improve robust fairness. Table \ref{tab:experiment_methods} presents the results of individually applying each class-wise parameter as well as applying both concurrently. The results indicate that both class-wise parameters are effective in enhancing the worst class accuracy. When combined, the improvement is roughly equal to the sum of the enhancements obtained from each parameter individually. Additionally, we have verified that DAFA performs well when applied to standard adversarial training (PGD) \citep{attack_pgd} or other architectures. Results and analyses of various ablation studies, including the warm-up epoch and the hyperparameter $\lambda$, can be found in Appendix \ref{appendix:ablation studies}.

\section{Conclusion}
\label{conclusion}

In this study, we analyze robust fairness from the perspective of inter-class similarity. We conduct both theoretical and empirical analyses using class distance, uncovering the interrelationship between class-wise accuracy and class distance. We demonstrate that to enhance the performance of the hard class, classes that are closer in distance should be given more consideration than distant classes. Building upon these findings, we introduce Distance Aware Fair Adversarial training (DAFA), which considers inter-class similarities to improve robust fairness. Our methodology displays superior average robust accuracy and worst robust accuracy performance across various datasets and model architectures when compared to existing methods.


\section{Acknowledgement}

This work was supported by the National Research Foundation of Korea (NRF) grants funded by the Korea government (Ministry of Science and ICT, MSIT) (2022R1A3B1077720 and 2022R1A5A708390811), Institute of Information \& Communications Technology Planning \& Evaluation (IITP) grants funded by the Korea government (MSIT) (2021-0-01343: Artificial Intelligence Graduate School Program (Seoul National University) and 2022-0-00959), and the BK21 FOUR program of the Education and Research Program for Future ICT Pioneers, Seoul National University in 2023.

\bibliography{iclr2024_conference}
\bibliographystyle{iclr2024_conference}

\appendix
\section{The Algorithms of DAFA}
\label{appendix:algorithms}

\begin{minipage}{.45\textwidth}
    \begin{algorithm}[H]
    \caption{Training procedure of DAFA}
    \begin{algorithmic}[1]
    \label{algorithm:adversarial training}
    \REQUIRE Dataset $\mathcal{D}$, model parameter $\bm{\theta}$, batch size $n$, 
    warm-up epoch $\tau$, training epoch $K$, learning rate $\alpha$, class-wise probability $p$
    \STATE nominal adversarial training:
    \FOR{$t=1$ \textbf{to} $\tau$}    \STATE $\{\bm{\mathnormal{x}}_i,{\mathnormal{y}}_i\}^{n}_{i=1}  \sim D$
    \STATE $\theta \leftarrow \argmin_{{\theta}} \mathcal{L}(\bm{\mathnormal{x}}_i+\bm{\delta}_i, y_i)$
    \STATE $p \leftarrow f(\bm{\mathnormal{x}}_i+\bm{\delta}_i,{\mathnormal{y}}_i)$
    \ENDFOR    
    \STATE compute parameter weights:
    \STATE $\mathcal{W} \leftarrow \text{DAFA}_{comp}(p)$
    \FOR{$t=\tau+1$ \textbf{to} $K$}    \STATE $\{\bm{\mathnormal{x}}_i,{\mathnormal{y}}_i\}^{n}_{i=1}  \sim D$
    \STATE $\theta \leftarrow \argmin_{{\theta}} \mathcal{L}(\bm{\mathnormal{x}}_i+\bm{\delta}_i, y_i;\mathcal{W}_{y_i})$
    \ENDFOR
    \STATE return $f(;\theta)$
    \end{algorithmic}
    \end{algorithm}
\end{minipage}
\hfill
\begin{minipage}{.45\textwidth}
    \begin{algorithm}[H]
    \caption{$\text{DAFA}_{comp}$}
    \begin{algorithmic}[1]
    \label{algorithm:method}
    \REQUIRE Number of class ${C}$, class-wise probability $p$, Scale parameter $\lambda$
    \STATE $\mathcal{W} \leftarrow \bm{1}$  
    \FOR{$i=1$ \textbf{to} ${C}$}
    \FOR{$j=1$ \textbf{to} ${C}$}
    \IF{$i == j$}
    \STATE continue  
    \ELSIF{$p_i(i) < p_j(j)$}
    \STATE $\mathcal{W}_i \leftarrow \mathcal{W}_i + \lambda p_i(j) \cdot p_j(j)$
    \ELSIF{$p_i(i) > p_j(j)$}
    \STATE $\mathcal{W}_i \leftarrow \mathcal{W}_i - \lambda p_j(i) \cdot p_i(i)$
    \ENDIF
    \ENDFOR    
    \ENDFOR
    \STATE return $\mathcal{W}$
    \end{algorithmic}
    \end{algorithm}
\end{minipage}

During model training via adversarial training, the method is applied post a warm-up training phase. Additionally, since we determine the weight values using the softmax output probability results of adversarial examples computed during training, there is no additional inference required for the computation of these weight values. In this context, to determine the weight value for each class using Eq. \ref{eq:method_proposed_step2} at every epoch, a computational cost of $\mathcal{O}({C}^2)$ is incurred, where ${C}$ represents the number of classes. Recognizing this, we compute the training parameters solely once, following the warm-up training. As a result, our methodology can be integrated with minimal additional overhead.

\section{Related work}

\paragraph{Adversarial robustness}

Adversarial training uses adversarial examples to train the model robust to adversarial examples~\citep{attack_pgd}. For a dataset $\mathcal{D}=\{(\bm{\mathnormal{x}}_i, {\mathnormal{y}}_i)\}_{i=1}^n$, where $\bm{\mathnormal{x}}_i \in \mathbb{R}^{d}$ is clean data and each $y_i$ from the class set indicates its label, the equation of adversarial training can be expressed as the subsequent optimization framework: 
\begin{equation} \label{background_eq:adversarial training}
    \min_{\theta} {\sum_{i=1}^{n}}
    \max_{\norm{\bm{\delta}_i} \leq \epsilon} \mathcal{L}_{CE}(f_{\theta}(\bm{\mathnormal{x}}_i + \bm{\delta}_i),\mathnormal{y_i}).
\end{equation}

Here, $\theta$ denotes the parameters of the model $f$, and the loss function is represented by $\mathcal{L}$. Additionally, $\epsilon$ is termed the adversarial margin. This margin sets the maximum allowable limit for adversarial perturbations. An alternative approach to adversarial training is the TRADES method~\citep{defense_trades}. In addition, various methodologies have been proposed to defend against adversarial attacks, encompassing approaches that leverage different adversarial training algorithms~\citep{defense_mart, defense_awp, defense_fat}, methodologies that enhance robustness performance through the utilization of additional data~\citep{defense_oat, defense_rst, defense_multihead, defense_generated}, and methodologies employing data augmentation~\citep{defense_augmentation, defense_augmentation_alone}.

Regarding adversarial attacks, the fast gradient sign method (FGSM) stands as a prevalent technique for producing adversarial samples~\citep{attack_fgsm}. PGD can be seen as an extended iteration of FGSM~\citep{attack_pgd}. Adaptive auto attack ($\mathrm{A^3}$) is an efficient and reliable attack mechanism that incorporates adaptive initialization and statistics-based discarding strategy~\citep{attack_aaa}.  Primarily, we employ $\mathrm{A^3}$ to assess the robust accuracy of various models.

\paragraph{Robust fairness} \citet{fairness_weighting} posited that the issue of robust fairness resembles the accuracy disparity problem arising from class imbalances when training on LT distribution datasets. Based on this premise, they applied solutions designed for handling LT distributions to address robust fairness. As a result, while the performance of relatively easier classes decreased, the performance of more challenging classes improved. \citet{fairness_kdd} highlighted the robust fairness issue and, through experiments across various datasets and algorithms, observed that this issue commonly manifests in adversarial settings. \citet{fairness_frl} perceived the robust fairness issue as emerging from differences in class-wise difficulty and theoretically analyzed it by equating it to variance differences in class-wise distribution. Building on this analysis, they introduced a regularization methodology proportional to class-wise training performance, enhancing the accuracy of the worst-performing class. \citet{fairness_analysis} revealed a trade-off between robust fairness and robust average accuracy, noting that this trade-off intensifies as the training adversarial margin $\epsilon$ increases. 

\citet{fairness_cfa} identified that datasets have class-specific optimal regularization parameters or $\epsilon$ values for adversarial training. Utilizing both training and validation performances, they proposed a method that adaptively allocates appropriate regularization to each class during the learning process, thereby improving robust fairness. \citet{fairness_understanding} conducted a theoretical examination of accuracy disparity in robustness from a data imbalance perspective and, through experiments on various datasets, demonstrated the relationship between data imbalance ratio and robust accuracy disparity. \citet{fairness_wat} suggested a methodology that uses a validation set to evaluate the model and update the loss weights for each class, subsequently enhancing the performance of the worst-performing class. \citet{fairness_bat} divided robust fairness into source class fairness and target class fairness. By proposing a training method that balances both, they successfully reduced the performance disparities between classes.
\section{Theoretical analysis}
\label{appendix:theoretical analysis}

\newtheorem{theoremm}{Theorem}
\newtheorem{corollaryy}{Corollary}

\subsection{Proof of the theorems}
\label{appendix:proof}

In this section, we provide proof for the theorems and corollaries presented in our main paper. We begin by detailing the proof for the newly defined distribution, $\mathcal{D}'$:
\begin{equation} 
\label{eq:appendix_gaussian_distribution_gamma}
\begin{gathered}
    \mathnormal{y}\stackrel{u.a.r}{\sim} \{-1, +1\}, \smalltab 
    \bm{\mu}=(\eta, \eta, \cdots, \eta),    
    \tab 
    \bm{\mathnormal{x}} {\sim}
    \begin{cases}
    \mathcal{N}(\gamma\bm{\mu}, \sigma^2 I), & \text{if } \mathnormal{y}=+1 \\
    \mathcal{N}(-\gamma\bm{\mu}, I), & \text{if } \mathnormal{y}=-1 \\
    \end{cases}.
\end{gathered}
\end{equation}

Here, $I$ denotes an identity matrix with a dimension of $d$, and $\sigma^2$ represents the variance of the distribution for the class $y=+1$, where $\sigma^2 > 1$. $\gamma$ is constrained to be greater than zero. We initiate with a lemma to determine the weight vector of an optimal linear model for binary classification under the data distribution $\mathcal{D}'$.
\begin{lemma}
\label{eq:lemma_optimal_weight}
Given the data distribution $\mathcal{D}'$ as defined in equation \ref{eq:appendix_gaussian_distribution_gamma}, if an optimal classifier $f$ minimizes the standard error for binary classification, then the classifier will have the optimal weight vector $\bm{w}=(\frac{1}{\sqrt{d}},\frac{1}{\sqrt{d}},...,\frac{1}{\sqrt{d}})$.
\end{lemma}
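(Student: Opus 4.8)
The plan is to reduce the two-parameter optimization over $(\bm w, b)$ to a one-dimensional geometric problem. First I would note that the classifier $\text{sign}(\bm w^\top \bm x + b)$ is invariant under the rescaling $(\bm w, b) \mapsto (c\bm w, c b)$ for any $c > 0$, and that $\bm w = \vzero$ yields error $1/2$, which is not optimal; hence it is no loss of generality to assume $\norm{\bm w} = 1$. Next, since $\bm w^\top \bm x + b$ is an affine functional of a Gaussian vector, under $\mathcal{D}'$ it is distributed as $\mathcal{N}(\gamma\, \bm w^\top \bm\mu + b,\ \sigma^2)$ when $y = +1$ and as $\mathcal{N}(-\gamma\, \bm w^\top \bm\mu + b,\ 1)$ when $y = -1$ (using $\norm{\bm w} = 1$). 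Writing $u \equiv \bm w^\top \bm\mu$ and letting $\Phi$ be the standard normal CDF, the standard error takes the form
\[
\mathcal{R}_{nat}(f) = \tfrac12\, \Phi\!\Big(\tfrac{-\gamma u - b}{\sigma}\Big) + \tfrac12\, \Phi\!\Big(-\gamma u + b\Big),
\]
which depends on the weight vector $\bm w$ only through the scalar $u$.

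Then I would argue that for every fixed bias $b$ this quantity is strictly decreasing in $u$: the term $-\gamma u$ appears with a negative sign inside both copies of $\Phi$ and $\gamma, \sigma > 0$, so increasing $u$ strictly decreases both arguments, and $\Phi$ is strictly increasing. Taking the infimum over $b$ preserves this monotonicity, so the optimal attainable error is a strictly decreasing function of $u$ alone. Consequently, an optimal classifier must maximize $u = \bm w^\top \bm\mu$ subject to $\norm{\bm w} = 1$. By Cauchy--Schwarz, $\bm w^\top \bm\mu \le \norm{\bm w}\,\norm{\bm\mu} = \norm{\bm\mu}$ with equality if and only if $\bm w = \bm\mu / \norm{\bm\mu}$; since $\bm\mu = (\eta, \dots, \eta)$ with $\eta > 0$, this unique maximizer is $\bm w = (1/\sqrt d, \dots, 1/\sqrt d)$, which is precisely the claimed optimal weight vector.

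The argument is largely mechanical, and the only place needing care is the interplay between the free bias $b$ and the direction of $\bm w$: one must rule out the possibility that a cleverly chosen bias compensates for a misaligned weight vector. This is handled cleanly by the observation that, after normalizing $\norm{\bm w} = 1$, the two relevant means and variances depend on $\bm w$ only through $u$, and the monotonicity in $u$ holds uniformly in $b$. A final minor point worth stating explicitly is that this lemma is proved for the symmetric distribution $\mathcal{D}'$ of equation \ref{eq:appendix_gaussian_distribution_gamma}, not for $\mathcal{D}$ of equation \ref{eq:method_gaussian_distribution}; the asymmetric case used in Theorems \ref{eq:method_error_proportional_alpha} and \ref{eq:method_error_difference_proportional_alpha} will then be obtained by an affine reparameterization, so pinning down the optimal direction here is exactly what is needed.
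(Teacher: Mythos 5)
Your proof is correct, and it takes a genuinely different route from the paper's. The paper argues by contradiction with a coordinate-exchange step: assuming two coordinates of the optimal $\bm{w}$ differ, say $w_i<w_j$, it replaces $w_i$ by $w_j$ and asserts that both class-conditional errors drop. Your argument instead observes that, once $\norm{\bm{w}}_2=1$ is fixed, the score $\bm{w}^\top\bm{x}+b$ is Gaussian with class-conditional variances $\sigma^2$ and $1$ \emph{independent of the direction of} $\bm{w}$, so the error depends on $\bm{w}$ only through the scalar $u=\bm{w}^\top\bm{\mu}$; strict monotonicity in $u$ (uniformly over $b$) plus Cauchy--Schwarz then pins down the maximizer. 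Your route buys two things the paper's does not. First, the exchange step is delicate: substituting $w_j$ for $w_i$ changes $\norm{\bm{w}}_2$ and hence both the mean and the variance of the projected score, so the stated inequality $Pr.(w_i\mathcal{N}(\gamma\eta,1)>0)<Pr.(w_j\mathcal{N}(\gamma\eta,1)>0)$ (which only compares signs of single coordinates) does not by itself control the error of the full sum; your scalar reduction sidesteps this entirely because the variance is invariant on the unit sphere. Second, the exchange argument only forces $w_1=\cdots=w_d$, a condition also satisfied by $\bm{w}=(-\frac{1}{\sqrt{d}},\dots,-\frac{1}{\sqrt{d}})$; Cauchy--Schwarz together with $\eta>0$ selects the correct sign. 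The one point you leave implicit is that the infimum over $b$ is attained, so that strict monotonicity survives the infimum; this follows because the error tends to $\frac{1}{2}$ as $b\to\pm\infty$ while dipping below $\frac{1}{2}$ at suitable finite $b$ whenever $u>0$, and is a one-line addition.
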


For the classification task discussed in Section \ref{method:the effect of class distance on robust fairness}, the weight of the classifier adheres to the condition $\norm{\bm{w}}_2=1$. We establish Lemma \ref{eq:lemma_optimal_weight} following the approach used in \citet{fairness_frl, adv_odds_with_accuracy}.

\begin{proof}
Given a weight vector $\bm{w}=(w_1, w_2, ..., w_d)$, we aim to prove that the optimal weight satisfies $w_1=w_2=\cdots=w_d$. Let's assume, for the sake of contradiction, that the optimal weight of classifier $f$ doesn't meet the condition $w_1=w_2=\cdots=w_d$. Hence, for some $i \neq j$ and $i,j \in \{1,2,...,d\}$, let's say $w_i < w_j$. This implies that the optimal classifier $f$ has a particular standard error corresponding to the optimal weight $\bm{w}$.
\begin{multline}
\mathcal{R}_{nat}(f|+1) + \mathcal{R}_{nat}(f|-1) =Pr.\big( \sum_{k \neq i, k \neq j}^{d} w_k \mathcal{N}(\gamma\eta, \sigma^2) + b + w_i \mathcal{N}(\gamma\eta, \sigma^2) + w_j \mathcal{N}(\gamma\eta, \sigma^2) < 0 \big) \\
+Pr.\big( \sum_{k \neq i, k \neq j}^{d} w_k \mathcal{N}(-\gamma\eta, 1) + b + w_i \mathcal{N}(-\gamma\eta, 1) + w_j \mathcal{N}(-\gamma\eta, 1) > 0 \big).
\end{multline}

Now, consider a new classifier $f'$, which has the same weight vector as classifier $f$ but with $w_i$ replaced by $w_j$. The resulting standard error for this new classifier $f'$ can then be described as follows:
\begin{multline}
\mathcal{R}_{nat}(f|+1) + \mathcal{R}_{nat}(f|-1) =Pr.\big( \sum_{k \neq i, k \neq j}^{d} w_k \mathcal{N}(\gamma\eta, \sigma^2) + b + w_j \mathcal{N}(\gamma\eta, \sigma^2) + w_j \mathcal{N}(\gamma\eta, \sigma^2) < 0 \big) \\
+Pr.\big( \sum_{k \neq i, k \neq j}^{d} w_k \mathcal{N}(-\gamma\eta, 1) + b + w_j \mathcal{N}(-\gamma\eta, 1) + w_j \mathcal{N}(-\gamma\eta, 1) > 0 \big).
\end{multline}

Given that $Pr.(w_i\mathcal{N}(\gamma\eta,1) > 0) < Pr.(w_j\mathcal{N}(\gamma\eta,1) > 0)$, the standard error for each class under the new classifier $f'$ is lower than that of $f$. This contradicts our initial assumption that the weight of the classifier $f$ is optimal. Therefore, the optimal weight must satisfy $w_1=w_2=\cdots=w_d$. Since we've constrained $\norm{\bm{w}}_2=1$, it follows that $\bm{w}=(\frac{1}{\sqrt{d}},\frac{1}{\sqrt{d}},...,\frac{1}{\sqrt{d}})$. 
\end{proof}

Next, we will prove the following lemma.

\begin{lemma}
\label{eq:lemma_translation}
The optimal standard error for the binary classification task in Lemma \ref{eq:lemma_optimal_weight} is identical to the optimal standard error for the binary classification task of the distribution obtained by translating $\mathcal{D}'$.
\end{lemma}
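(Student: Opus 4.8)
The plan is to exploit the fact that an affine decision rule $f_{\bm w,b}(\bm x)=\text{sign}(\bm w^\top\bm x+b)$ has a free bias term that can absorb any rigid translation of the data, so that translating $\mathcal D'$ changes nothing but the optimal bias. First I would fix the translation vector $\bm t\in\mathbb R^d$ and let $\mathcal D'_{\bm t}$ denote the law of $(\bm x+\bm t,y)$ when $(\bm x,y)\sim\mathcal D'$; concretely $\mathcal D'_{\bm t}$ is the Gaussian mixture with the same class-conditional covariances $\sigma^2 I$ and $I$ but with means $\gamma\bm\mu+\bm t$ and $-\gamma\bm\mu+\bm t$. The object to compare is the optimal standard error $\min_{\|\bm w\|_2=1,\ b}\mathcal R_{nat}(f_{\bm w,b})$, evaluated once under $\mathcal D'$ and once under $\mathcal D'_{\bm t}$.

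The key step is an explicit bijection between the admissible classifiers for the two problems: send $(\bm w,b)$ with $\|\bm w\|_2=1$ to $(\bm w,b-\bm w^\top\bm t)$, with inverse $(\bm w,b)\mapsto(\bm w,b+\bm w^\top\bm t)$. This map leaves $\bm w$ (hence the constraint $\|\bm w\|_2=1$) untouched and only shifts the bias, which is unconstrained, so it is a bijection of the feasible set onto itself. For any $(\bm x,y)\sim\mathcal D'$ one has $\text{sign}\big(\bm w^\top(\bm x+\bm t)+(b-\bm w^\top\bm t)\big)=\text{sign}(\bm w^\top\bm x+b)$, and since $\bm x+\bm t$ conditioned on $y$ is distributed exactly as the corresponding class-conditional of $\mathcal D'_{\bm t}$ (the boundary event $\bm w^\top\bm x+b=0$ having probability zero under the Gaussian mixture), the per-class errors agree: $\mathcal R_{nat}^{\mathcal D'_{\bm t}}(f_{\bm w,\,b-\bm w^\top\bm t}\mid y)=\mathcal R_{nat}^{\mathcal D'}(f_{\bm w,b}\mid y)$ for $y=\pm1$. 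Summing over the two equiprobable classes gives equality of the total standard errors along the bijection, and taking the infimum over all admissible classifiers on each side yields equality of the optimal standard errors, which is the claim. As a bonus, the bijection carries the optimizer's weight vector to itself, so combined with Lemma \ref{eq:lemma_optimal_weight} the translated problem still has optimal weight $(1/\sqrt d,\dots,1/\sqrt d)$ and differs only in its optimal bias.

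I do not expect a genuine obstacle here, as the argument is elementary; the one point needing care is to keep the optimisation genuinely over all $b\in\mathbb R$ so that the shift $b\mapsto b-\bm w^\top\bm t$ never leaves the feasible set, and to state ``translating $\mathcal D'$'' consistently with the later use of the lemma — namely the specific translation by $\tfrac{\alpha-1}{2}\bm\mu$ that turns the distribution $\mathcal D$ of equation \ref{eq:method_gaussian_distribution} into the symmetric distribution $\mathcal D'$ of equation \ref{eq:appendix_gaussian_distribution_gamma} with $\gamma=\tfrac{1+\alpha}{2}$, so that the subsequent error computations can be carried out on the cleaner, origin-symmetric $\mathcal D'$.
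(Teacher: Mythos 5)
Your argument is correct and is essentially the paper's own proof: both absorb the translation into the unconstrained bias via the substitution $b\mapsto b-\bm w^\top\bm t$ (the paper writes this as $b'=b-\sum_k w_k'\zeta$ for a coordinatewise shift $\zeta$), which preserves the per-class errors and hence the optimum. Your version is slightly more carefully packaged as an explicit bijection of the feasible set and allows an arbitrary translation vector, but the key idea is identical.
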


\begin{proof}
Let $\zeta$ represent the magnitude of the translation. Then, the standard error for binary classification of the translated distribution is as follows:
\begin{multline}
\mathcal{R}_{nat}(g|+1) + \mathcal{R}_{nat}(g|-1) =Pr.\big( \sum_{k}^{d} w_{k}' \mathcal{N}(\gamma\eta+\zeta, \sigma^2) + b' < 0 \big) \\
+Pr.\big( \sum_{k}^{d} w_{k}' \mathcal{N}(-\gamma\eta+\zeta, 1) + b' > 0 \big).
\end{multline}

Let $\bm{w}'=(w_{1}', w_{2}', ..., w_{d}')$ represent the weight parameters and let $b'$ be the bias parameter of the classifier $g$ for the given classification task. Assuming that $\norm{\bm{w}'}_2=1$ as previously mentioned, the equation can be reformulated as:
\begin{multline}
\mathcal{R}_{nat}(g|+1) + \mathcal{R}_{nat}(g|-1) =Pr.\big( \sum_{k}^{d} w_{k}' \mathcal{N}(\gamma\eta, \sigma^2) + \sum_{k}^{d} w_{k}'\zeta + b' < 0 \big) \\
+Pr.\big( \sum_{k}^{d} w_{k}' \mathcal{N}(-\gamma\eta, 1) + \sum_{k}^{d} w_{k}'\zeta + b' > 0 \big).
\end{multline}

Given that the equation above matches the standard error of the binary classification for the data distribution $\mathcal{D}'$ when $b'$ is replaced as $b'=b-\sum_{k}^{d} w_{k}'\zeta$, it can be deduced that the optimal standard error for binary classification of the translated distribution of $\mathcal{D}'$ is identical to that for $\mathcal{D}'$.
\end{proof}

To establish Theorem \ref{eq:method_error_proportional_alpha}, we leverage the aforementioned lemmas.
\begin{theoremm}
\label{eq:appendix_error_proportional_alpha}
Let $0<\epsilon<\eta$ and $A \equiv\frac{2\sqrt{d}\sigma}{\sigma^2-1} > 0$. Given a data distribution $\mathcal{D}$ as described in equation \ref{eq:method_gaussian_distribution}, $f_{nat}$ and $f_{rob}$ exhibit the following standard and robust errors, respectively:
\begin{align}
\mathcal{R}_{nat}(f_{nat}|+1)&=Pr.\bigg(\mathcal{N}(0,1)<-A(\frac{1+\alpha}{2}\eta)+\sqrt{\big(\frac{A}{\sigma}\big)^2 \big(\frac{1+\alpha}{2}\eta\big)^2+\frac{2\log\sigma}{\sigma^2-1}}\bigg), \\
\mathcal{R}_{rob}(f_{rob}|+1)&=Pr.\bigg(\mathcal{N}(0,1)<-A(\frac{1+\alpha}{2}\eta-\epsilon)+\sqrt{\big(\frac{A}{\sigma}\big)^2 \big(\frac{1+\alpha}{2}\eta-\epsilon\big)^2+\frac{2\log\sigma}{\sigma^2-1}}\bigg).
\end{align}

Consequently, both $\mathcal{R}_{nat}(f_{nat}|+1)$ and $\mathcal{R}_{rob}(f_{rob}|+1)$ decrease monotonically with respect to $\alpha$.
\end{theoremm}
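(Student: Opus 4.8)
The plan is to collapse the $d$-dimensional problem to a one-dimensional Gaussian problem using Lemmas~\ref{eq:lemma_translation} and~\ref{eq:lemma_optimal_weight}, then optimize the bias explicitly, and finally differentiate in $\alpha$.

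\emph{Step 1 — reduce to one dimension.} I would first note that $\mathcal{D}$ in equation~\ref{eq:method_gaussian_distribution} is a translate of the symmetric distribution $\mathcal{D}'$ of equation~\ref{eq:appendix_gaussian_distribution_gamma} with $\gamma=\tfrac{1+\alpha}{2}$: shifting every sample by $-\tfrac{1-\alpha}{2}\bm{\mu}$ sends the means $\bm{\mu},-\alpha\bm{\mu}$ to $\pm\tfrac{1+\alpha}{2}\bm{\mu}$ and leaves the covariances alone. Since this shift is a label-preserving bijection and the bias correspondence in the proof of Lemma~\ref{eq:lemma_translation} preserves each class-conditional term, the optimal classifier and its per-class errors for $\mathcal{D}$ equal those for $\mathcal{D}'$. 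By Lemma~\ref{eq:lemma_optimal_weight} the optimal direction is the uniform $\bm{w}=(\tfrac{1}{\sqrt d},\dots,\tfrac{1}{\sqrt d})$, so the score $Z:=\bm{w}^\top\bm{x}$ is $\mathcal{N}(m,\sigma^2)$ for $y=+1$ and $\mathcal{N}(-m,1)$ for $y=-1$, with $m:=\sqrt d\,\tfrac{1+\alpha}{2}\eta$. For the robust error (the formula makes clear the threat model is $\ell_\infty$), the worst perturbation shifts $Z$ by $\mp\epsilon\norm{\bm{w}}_1$ on the two classes; replacing $\bm{w}$ by its coordinatewise absolute value keeps $\norm{\bm{w}}_1,\norm{\bm{w}}_2$ but maximizes $\sum_k w_k$, so one may take $\bm{w}\ge0$, the post-attack means become $\pm(\tfrac{1+\alpha}{2}\eta-\epsilon)\norm{\bm{w}}_1$, and since $\tfrac{1+\alpha}{2}\eta-\epsilon>0$ (using $\alpha\ge1$, $\epsilon<\eta$) and the Bayes error decreases in the absolute mean (Step 3), the uniform $\bm{w}$ is again optimal. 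Thus the robust problem is the \emph{same} one-dimensional problem with $m$ replaced by $m_{\mathrm{rob}}:=\sqrt d(\tfrac{1+\alpha}{2}\eta-\epsilon)$.

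\emph{Step 2 — optimal bias and the closed form.} For threshold $b$ the total error is $g(b)=\Phi\!\big(\tfrac{-b-m}{\sigma}\big)+\Phi(b-m)$ with $\Phi$ the standard normal CDF. Since $g(b)\to1$ as $b\to\pm\infty$, the global minimum sits at a critical point. Setting $g'(b)=-\tfrac{1}{\sigma}\phi\!\big(\tfrac{b+m}{\sigma}\big)+\phi(b-m)=0$ and taking logarithms gives the quadratic $(\sigma^2-1)(b^2+m^2)-2(\sigma^2+1)mb=2\sigma^2\log\sigma$, with roots $b=\big((\sigma^2+1)m\pm\sigma\sqrt{4m^2+2(\sigma^2-1)\log\sigma}\big)/(\sigma^2-1)$. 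A sign analysis of $g'$ (negative, then positive on an interval containing $m$, then negative, the sign changes being exactly two because $\phi(b-m)/\tfrac1\sigma\phi(\tfrac{b+m}\sigma)$ is the exponential of a quadratic) shows the smaller root $b^\star$ is the global minimizer. Substituting into $\mathcal{R}_{nat}(f_{nat}\mid+1)=\Phi\!\big(\tfrac{-b^\star-m}{\sigma}\big)$ and simplifying with $m=\sqrt d\,\tfrac{1+\alpha}{2}\eta$ and $A=\tfrac{2\sqrt d\sigma}{\sigma^2-1}$ produces exactly the stated expression; the robust formula follows verbatim by Step~1 with $m\mapsto m_{\mathrm{rob}}$, i.e.\ $\tfrac{1+\alpha}{2}\eta\mapsto\tfrac{1+\alpha}{2}\eta-\epsilon$.

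\emph{Step 3 — monotonicity in $\alpha$.} Both errors are $\Phi(\psi(t))$ with $\psi(t)=-At+\sqrt{(A/\sigma)^2t^2+\tfrac{2\log\sigma}{\sigma^2-1}}$ and $t=\tfrac{1+\alpha}{2}\eta>0$ (resp.\ $t=\tfrac{1+\alpha}{2}\eta-\epsilon>0$). Then $\psi'(t)=-A+\tfrac{(A/\sigma)^2t}{\sqrt{(A/\sigma)^2t^2+2\log\sigma/(\sigma^2-1)}}<-A+\tfrac{A}{\sigma}<0$ for $t\ge0$, since $\sigma>1$ forces $\log\sigma>0$ and $A/\sigma<A$. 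As $t$ is strictly increasing in $\alpha$ and $\Phi$ is increasing, both $\mathcal{R}_{nat}(f_{nat}\mid+1)$ and $\mathcal{R}_{rob}(f_{rob}\mid+1)$ decrease strictly in $\alpha$, as claimed. This same computation also supplies the "error decreases in the mean" fact invoked in Step~1.

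\emph{Where the difficulty lies.} The routine parts are the two-lemma reduction and the monotonicity; the real work is Step~2 — carrying out the bias optimization, in particular arguing that the exhibited root is the \emph{global} minimizer of $g$ rather than the other critical point, and grinding the algebra into the precise form involving $A$ — together with the small extra argument in Step~1 that the uniform weight stays optimal once the $\ell_\infty$ adversary is present.
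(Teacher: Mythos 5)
Your proposal follows essentially the same route as the paper's proof: translate $\mathcal{D}$ to the symmetric form via Lemma~\ref{eq:lemma_translation}, invoke Lemma~\ref{eq:lemma_optimal_weight} for the uniform weight, solve the first-order condition in $b$ to get the same quadratic root, and bound $\psi'(t)$ by $-A+A/\sigma<0$ exactly as the paper does, with the robust case obtained by the substitution $\tfrac{1+\alpha}{2}\eta\mapsto\tfrac{1+\alpha}{2}\eta-\epsilon$. The only differences are that you explicitly justify two points the paper glosses over (that the smaller root of the quadratic is the global minimizer, and that the uniform weight remains optimal under the $\ell_\infty$ adversary), which strengthens rather than changes the argument.
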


\begin{proof}
In light of Lemma \ref{eq:lemma_translation}, the distribution $\mathcal{D}$ for binary classification can be translated by an amount of $\frac{\alpha-1}{2}\eta$ as follows:
\begin{equation} 
\label{eq:appendix_gaussian_distribution_translated}
\begin{gathered}
    \mathnormal{y}\stackrel{u.a.r}{\sim} \{-1, +1\}, \smalltab 
    \bm{\mu}=(\eta, \eta, \cdots, \eta),    
    \tab 
    \bm{\mathnormal{x}} {\sim}
    \begin{cases}
    \mathcal{N}(\frac{1+\alpha}{2}\bm{\mu}, \sigma^2 I), & \text{if } \mathnormal{y}=+1 \\
    \mathcal{N}(-\frac{1+\alpha}{2}\bm{\mu}, 1), & \text{if } \mathnormal{y}=-1 \\
    \end{cases}.
\end{gathered}
\end{equation}
Based on Lemma \ref{eq:lemma_optimal_weight}, the optimal classifier $f_{nat}$ for standard error has the optimal parameters, with $\bm{w}=(\frac{1}{\sqrt{d}},\frac{1}{\sqrt{d}},...,\frac{1}{\sqrt{d}})$. Consequently, the standard errors of $f_{nat}$ for $y=+1$ and $y=-1$ can be expressed as:
\begin{align}
\mathcal{R}_{nat}(f_{nat}|+1)&=Pr.\bigg(\sum_{i=1}^d w_{i}\mathcal{N}(\frac{1+\alpha}{2}\eta, \sigma^2) +b < 0 \bigg)=Pr.\big(\mathcal{N}(0,1)<-\frac{\sqrt{d}}{\sigma}(\frac{1+\alpha}{2}\eta)-\frac{b}{\sigma}\big), \\
\mathcal{R}_{nat}(f_{nat}|-1)&=Pr.\bigg(\sum_{i=1}^d w_{i}\mathcal{N}(-\frac{1+\alpha}{2}\eta, 1) +b > 0 \bigg)=Pr.\big(\mathcal{N}(0,1)<-\sqrt{d}(\frac{1+\alpha}{2}\eta)+b \big).
\end{align}

We aim to minimize the combined error of the two aforementioned terms to optimize the classifier $f_{nat}$ with respect to the parameter $b$. 
\rebuttal{
To do this, we follow the approach used in \citep{fairness_frl}, which identifies the point where $\frac{\partial}{\partial b}\big(\mathcal{R}_{nat}(f_{nat}|+1)+\mathcal{R}_{nat}(f_{nat}|-1)\big)=0$.
}
 Given that the cumulative distribution function of the normal distribution is expressed as $Pr.(\mathcal{N}(0,1)<z)=\int^{z}_{-\infty}\text{exp}(-\frac{t^2}{2})dt$, the derivative of the combined error concerning $b$ is as follows:
\begin{align}
&\rightarrow\text{exp}\bigg(-\frac{1}{2}\big(-\frac{\sqrt{d}}{\sigma}(\frac{1+\alpha}{2}\eta) - \frac{b}{\sigma} \big)^2\bigg)(-\frac{1}{\sigma}) + \text{exp}\bigg((-\frac{1}{2}\big(-\sqrt{d}(\frac{1+\alpha}{2}\eta)+b \big)^2\bigg)=0, \\
&\stackrel{\times -\sigma}{\rightarrow} \sigma\text{exp}\bigg((-\frac{1}{2}\big(-\sqrt{d}(\frac{1+\alpha}{2}\eta)+b \big)^2\bigg)=\text{exp}\bigg(-\frac{1}{2}\big(-\frac{\sqrt{d}}{\sigma}(\frac{1+\alpha}{2}\eta) - \frac{b}{\sigma} \big)^2\bigg), \\
&\stackrel{\log}{\rightarrow} \log \sigma -\frac{1}{2}\big(-\sqrt{d}(\frac{1+\alpha}{2}\eta)+b \big)^2=-\frac{1}{2}\big(-\frac{\sqrt{d}}{\sigma}(\frac{1+\alpha}{2}\eta) - \frac{b}{\sigma} \big)^2.
\end{align}

Then, the point where $\frac{\partial}{\partial b}\big(\mathcal{R}_{nat}(f_{nat}|+1)+\mathcal{R}_{nat}(f_{nat}|-1)\big)=0$ is given as
\begin{equation}
b=\frac{\sigma^2+1}{\sigma^2-1} \cdot \sqrt{d}(\frac{1+\alpha}{2}\eta) - \sigma \sqrt{\frac{4}{(\sigma^2-1)^2} \cdot d(\frac{1+\alpha}{2}\eta)^2+\frac{2 \log \sigma}{\sigma^2 -1}}.
\end{equation}

By employing the optimal parameter $b$, the standard error of the optimal classifier $f_{nat}$ can be reformulated.
\begin{equation}
\mathcal{R}_{nat}(f_{nat}|+1)=Pr.\bigg(\mathcal{N}(0,1)<-A(\frac{1+\alpha}{2}\eta)+\sqrt{\big(\frac{A}{\sigma}\big)^2 \big(\frac{1+\alpha}{2}\eta\big)^2+\frac{2\log\sigma}{\sigma^2-1}}\bigg). \label{eq:appendix_std_error_pos}
\end{equation}

Here, $A$ is defined as $A=\frac{2\sqrt{d}\sigma}{\sigma^2-1}$.
\rebuttal{
Our objective is then to demonstrate our Theorem 1: $\mathcal{R}_{nat}(f_{nat}|+1)$ monotonically decreases with $\alpha$.
}
Leveraging the property $\sigma > 1$, it follows that $\frac{2\log\sigma}{\sigma^2-1}>0$ and $A > 0$. The derivative of Eq. \ref{eq:appendix_std_error_pos} with respect to $\alpha$ is:
\begin{align}
\frac{\partial}{\partial \alpha}\mathcal{R}_{nat}(f_{nat}|+1) &= \text{exp}(-\frac{z_{+1}^2}{2})\bigg(-\frac{A}{2}\eta+\frac{\big(\frac{A}{\sigma}\big)^2\big(\frac{1+\alpha}{2}\eta\big)\frac{\eta}{2}}{\sqrt{\big(\frac{A}{\sigma}\big)^2 \big(\frac{1+\alpha}{2}\eta\big)^2+\frac{2\log\sigma}{\sigma^2-1}}}\bigg) \\
& \leq \text{exp}(-\frac{z_{+1}^2}{2})\bigg(-\frac{A}{2}\eta+\frac{\big(\frac{A}{\sigma}\big)^2\big(\frac{1+\alpha}{2}\eta\big)\frac{\eta}{2}}{\sqrt{\big(\frac{A}{\sigma}\big)^2 \big(\frac{1+\alpha}{2}\eta\big)^2}}\bigg) \\
&= \text{exp}(-\frac{z_{+1}^2}{2})\bigg(-\frac{A}{2}\eta\big(1-\frac{1}{\sigma}\big)\bigg) < 0.
\end{align}

Here, $z_{+1}=-A(\frac{1+\alpha}{2}\eta)+\sqrt{\big(\frac{A}{\sigma}\big)^2 \big(\frac{1+\alpha}{2}\eta\big)^2+\frac{2\log\sigma}{\sigma^2-1}}$. Given that the derivative of $\mathcal{R}_{nat}(f_{nat}|+1)$ with respect to $\alpha$ is less than zero, $\mathcal{R}_{nat}(f_{nat}|+1)$ monotonically decreases as $\alpha$ increases.

Next, we aim to demonstrate that $\mathcal{R}_{rob}(f_{rob}|+1)$ monotonically decreases with respect to $\alpha$. Following the proof approach for $\mathcal{R}_{nat}(f_{nat}|+1)$ and referencing Lemma \ref{eq:lemma_translation}, the distribution $\mathcal{D}$ for binary classification can be shifted by an amount of $\frac{\alpha-1}{2}\eta$ as follows:
\begin{equation} 
\label{eq:method_gaussian_distribution_translated}
\begin{gathered}
    \mathnormal{y}\stackrel{u.a.r}{\sim} \{-1, +1\}, \smalltab 
    \bm{\mu}=(\eta, \eta, \cdots, \eta),    
    \tab 
    \bm{\mathnormal{x}} {\sim}
    \begin{cases}
    \mathcal{N}(\frac{1+\alpha}{2}\bm{\mu}-\epsilon, \sigma^2 I), & \text{if } \mathnormal{y}=+1 \\
    \mathcal{N}(-\frac{1+\alpha}{2}\bm{\mu}+\epsilon, 1), & \text{if } \mathnormal{y}=-1 \\
    \end{cases}.
\end{gathered}
\end{equation}

Here, $\epsilon$ satisfies $0 < \epsilon < \eta$. Given that the classification task can be aligned with the task outlined in Eq. \ref{eq:appendix_gaussian_distribution_gamma} by adjusting the mean of the distributions to $\frac{1+\alpha}{2}\bm{\mu}-\epsilon$, the optimal parameter $\bm{w}_{rob}$ is chosen as $\bm{w}_{rob}=(\frac{1}{\sqrt{d}},\frac{1}{\sqrt{d}},...,\frac{1}{\sqrt{d}})$ and the value of $b_{rob}$ is determined as follows:
\begin{equation}
b_{rob}=\frac{\sigma^2+1}{\sigma^2-1} \cdot \sqrt{d}(\frac{1+\alpha}{2}\eta - \epsilon) - \sigma \sqrt{\frac{4}{(\sigma^2-1)^2} \cdot d(\frac{1+\alpha}{2}\eta-\epsilon)^2+\frac{2 \log \sigma}{\sigma^2 -1}}.
\end{equation}

Using the optimal parameter $\bm{w}_{rob}$ and $b_{rob}$, the robust error of the optimal classifier $f_{rob}$ for class $y=+1$ is expressed as follows:
\begin{equation}
\mathcal{R}_{rob}(f_{rob}|+1)=Pr.\bigg(\mathcal{N}(0,1)<-A(\frac{1+\alpha}{2}\eta-\epsilon)+\sqrt{\big(\frac{A}{\sigma}\big)^2 \big(\frac{1+\alpha}{2}\eta-\epsilon\big)^2+\frac{2\log\sigma}{\sigma^2-1}}\bigg). \label{eq:appendix_std_error_pos_rob} 
\end{equation}

Since the derivative of Eq. \ref{eq:appendix_std_error_pos_rob} matches that of Eq. \ref{eq:appendix_std_error_pos}, we deduce that $\frac{\partial}{\partial \alpha}\big(\mathcal{R}_{rob}(f_{rob}|+1)\big)<0$. Consequently, $\mathcal{R}_{rob}(f_{rob}|+1)$ decreases monotonically with respect to $\alpha$.

\end{proof}

We subsequently prove Theorem \ref{eq:method_error_difference_proportional_alpha} by leveraging the error computed in Theorem \ref{eq:method_error_proportional_alpha}.

\begin{theoremm}
\label{eq:appendix_error_difference_proportional_alpha}
Let $D\big(\mathcal{R}(f)\big)$ be the disparity in errors for the two classes within dataset $\mathcal{D}$ as evaluated by a classifier $f$,  Then $D\big(\mathcal{R}(f)\big)$ is
\begin{multline}
\label{eq:method_disparity_rob}
D\big(\mathcal{R}(f)\big) \equiv \mathcal{R}(f|+1) - \mathcal{R}(f|-1) \\ =Pr.\bigg(\frac{A}{\sigma}g(\alpha)-\sqrt{A^2g^2(\alpha)+h(\sigma)} < \mathcal{N}(0,1)<-A g(\alpha) + \sqrt{\big(\frac{A}{\sigma}\big)^2g^2(\alpha)+\frac{h(\sigma)}{\sigma^2}} \bigg),
\end{multline}
where $g(\alpha)=\frac{1+\alpha}{2}\eta-\epsilon$, $A=\frac{2\sqrt{d}\sigma}{\sigma^2-1}$, and $h(\sigma)=\frac{2\sigma^2\log \sigma}{\sigma^2-1}$. Then, the disparity between these two errors decreases monotonically with the increase of $\alpha$.
\end{theoremm}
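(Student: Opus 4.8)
The plan is to first obtain the easy-class error $\mathcal{R}(f|{-1})$ by reusing the optimal parameters already derived in the proof of Theorem~\ref{eq:method_error_proportional_alpha}, then to repackage both class errors through one substitution, and finally to differentiate in $\alpha$.

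First I would compute $\mathcal{R}(f|{-1})$. In the proof of Theorem~\ref{eq:method_error_proportional_alpha} the optimal classifier has weight $\bm{w}=(1/\sqrt d,\dots,1/\sqrt d)$ and bias $b=\frac{\sigma^2+1}{\sigma^2-1}\sqrt d\,g(\alpha)-\sigma\sqrt{\frac{4}{(\sigma^2-1)^2}d\,g^2(\alpha)+\frac{2\log\sigma}{\sigma^2-1}}$, where $g(\alpha)=\frac{1+\alpha}{2}\eta$ in the standard case and $g(\alpha)=\frac{1+\alpha}{2}\eta-\epsilon$ in the adversarial case. Substituting into $\mathcal{R}(f|{-1})=Pr.\big(\mathcal{N}(0,1)<b-\sqrt d\,g(\alpha)\big)$ and simplifying with $A=\frac{2\sqrt d\sigma}{\sigma^2-1}$ and $h(\sigma)=\frac{2\sigma^2\log\sigma}{\sigma^2-1}$ gives $\mathcal{R}(f|{-1})=Pr.\big(\mathcal{N}(0,1)<\frac{A}{\sigma}g(\alpha)-\sqrt{A^2g^2(\alpha)+h(\sigma)}\big)$. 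Subtracting this from the class-$(+1)$ error furnished by Theorem~\ref{eq:method_error_proportional_alpha} (whose upper bound equals $-Ag(\alpha)+\sqrt{(A/\sigma)^2g^2(\alpha)+h(\sigma)/\sigma^2}$) yields exactly the displayed expression for $D(\mathcal{R}(f))$, provided the lower endpoint lies below the upper one.

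Next I would introduce $u:=\frac{A}{\sigma}g(\alpha)>0$, $c:=\frac{2\log\sigma}{\sigma^2-1}=h(\sigma)/\sigma^2>0$, and $v:=\sqrt{u^2+c}$, so the two endpoints become $z_{+}:=v-\sigma u$ (upper) and $z_{-}:=u-\sigma v$ (lower). Then $z_{+}-z_{-}=(1+\sigma)(v-u)>0$ since $v>u$, which both shows the interval is nonempty (so $D(\mathcal{R}(f))=\Phi(z_{+})-\Phi(z_{-})$ with $\Phi$ the standard normal CDF) and, together with $z_{+}+z_{-}=(1-\sigma)(u+v)<0$, forces $z_{-}<0$ always and $z_{-}<z_{+}<0$ whenever $z_{+}<0$. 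Differentiating, $\frac{du}{d\alpha}=\frac{A\eta}{2\sigma}>0$ and $\frac{dv}{d\alpha}=\frac{u}{v}\frac{du}{d\alpha}$, which collapse to the symmetric identities $\frac{dz_{+}}{d\alpha}=\frac{A\eta}{2\sigma v}z_{-}$ and $\frac{dz_{-}}{d\alpha}=\frac{A\eta}{2\sigma v}z_{+}$. Hence $\frac{d}{d\alpha}D(\mathcal{R}(f))=\frac{A\eta}{2\sigma v}\big(\phi(z_{+})z_{-}-\phi(z_{-})z_{+}\big)$ with $\phi$ the standard normal density, and everything reduces to signing this bracket.

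Finally I would sign the bracket. If $z_{+}\ge 0$ it is $\le 0$ immediately, since $z_{-}<0\le z_{+}$ and $\phi>0$. If $z_{+}<0$, both endpoints are negative with $z_{-}<z_{+}$; dividing the target inequality $\phi(z_{+})z_{-}<\phi(z_{-})z_{+}$ by $z_{+}z_{-}>0$ reduces it to $\psi(z_{+})<\psi(z_{-})$ for $\psi(t):=e^{-t^2/2}/t$, and since $\psi'(t)=-e^{-t^2/2}(t^2+1)/t^2<0$ on $(-\infty,0)$, the strict ordering $z_{-}<z_{+}<0$ delivers exactly that. In either case the bracket is negative, so $D(\mathcal{R}(f))$ is strictly decreasing in $\alpha$. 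I expect the only genuine work beyond bookkeeping to be the crux of this last paragraph: spotting that the derivative has the clean form $\phi(z_{+})z_{-}-\phi(z_{-})z_{+}$ (via $\frac{dz_{\pm}}{d\alpha}\propto z_{\mp}$), and then the subcase $z_{+}<0$, where no endpoint is individually monotone in $\alpha$ and one must instead exploit the monotonicity of $\psi(t)=e^{-t^2/2}/t$ on the negative axis together with the ordering $z_{-}<z_{+}<0$ established in the substitution step.
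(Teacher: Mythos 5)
Your proposal is correct, and it follows the paper's skeleton up to the decisive step: like the paper, you derive $\mathcal{R}(f|{-1})=Pr.\big(\mathcal{N}(0,1)<\frac{A}{\sigma}g(\alpha)-\sqrt{A^2g^2(\alpha)+h(\sigma)}\big)$ from the optimal $(\bm{w},b)$ of Theorem~\ref{eq:method_error_proportional_alpha}, note the interval is nonempty, and differentiate the difference of normal CDFs. Where you diverge is in signing the derivative. The paper works directly with $z_{+1}'$ and $z_{-1}'$: it uses $z_{+1}^2<z_{-1}^2$ together with $z_{+1}'<0$ (inherited from Theorem~\ref{eq:method_error_proportional_alpha}) to bound $e^{-z_{+1}^2/2}z_{+1}'<e^{-z_{-1}^2/2}z_{+1}'$, then factors $z_{+1}'-z_{-1}'=-A\frac{\eta}{2}(1+\frac{1}{\sigma})\big(1-\frac{Ag(\alpha)}{\sqrt{A^2g^2(\alpha)+h(\sigma)}}\big)<0$. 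You instead observe the symmetric identities $\frac{dz_{\pm}}{d\alpha}=\frac{A\eta}{2\sigma v}z_{\mp}$, which collapse the derivative to $\frac{A\eta}{2\sigma v}\big(\phi(z_{+})z_{-}-\phi(z_{-})z_{+}\big)$, and then sign this bracket by a two-case argument using the monotonicity of $t\mapsto e^{-t^2/2}/t$ on the negative axis. Both arguments are valid; yours buys a cleaner closed form for the derivative and avoids having to import the sign of $z_{+1}'$ from the earlier proof, at the cost of a case split on the sign of $z_{+}$ that the paper's one-line bounding step sidesteps. The only cosmetic quibble is that in your case $z_{+}\ge 0$ the bracket is in fact strictly negative (since $\phi(z_{+})z_{-}<0$), so strict monotone decrease holds uniformly, matching the theorem's claim.
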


\begin{proof}

Based on the proof of Theorem \ref{eq:appendix_error_proportional_alpha}, 
the robust error of the robust model for class $y=-1$ is given as:
\begin{equation}
\mathcal{R}_{rob}(f_{rob}|-1)=Pr.\bigg(\mathcal{N}(0,1)<\frac{A}{\sigma}(\frac{1+\alpha}{2}\eta-\epsilon)-\sqrt{A^2 \big(\frac{1+\alpha}{2}\eta-\epsilon\big)^2+\frac{2\sigma^2\log\sigma}{\sigma^2-1}}\bigg). \label{eq:appendix_std_error_neg_rob}
\end{equation}
Given that $\frac{2\sigma^2\log\sigma}{\sigma^2-1}>0$, the robust error for class $y=+1$ exceeds that for class $y=-1$, implying $\mathcal{R}_{rob}(f_{rob}|+1) > \mathcal{R}_{rob}(f_{rob}|-1)$. Consequently, the discrepancy in robust errors between the classes for the robust model can be formulated as:
\begin{multline}
\mathcal{R}_{rob}(f_{rob}|+1) - \mathcal{R}_{rob}(f_{rob}|-1) \\ =Pr.\bigg(\frac{A}{\sigma}g(\alpha)-\sqrt{A^2g^2(\alpha)+h(\sigma)} < \mathcal{N}(0,1)<-A g(\alpha) + \sqrt{\big(\frac{A}{\sigma}\big)^2g^2(\alpha)+\frac{h(\sigma)}{\sigma^2}} \bigg),
\end{multline}

where $g(\alpha)=\frac{1+\alpha}{2}\eta-\epsilon$, $A=\frac{2\sqrt{d}\sigma}{\sigma^2-1}$, and $h(\sigma)=\frac{2\sigma^2\log \sigma}{\sigma^2-1}$. We aim to demonstrate that the derivative of the aforementioned function with respect to $\alpha$ is less than zero. This implies that the discrepancy between class errors decreases monotonically with respect to $\alpha$. Then, the derivative, $\frac{\partial}{\partial \alpha}(\mathcal{R}_{rob}(f_{rob}|+1) - \mathcal{R}_{rob}(f_{rob}|-1))$, can be expressed as:

\begin{multline}
\text{exp}(-\frac{z_{+1}^2}{2})\bigg( -A \cdot \frac{\eta}{2} + \frac{\big(\frac{A}{\sigma} \big)^2g(\alpha)\frac{\eta}{2}}{\sqrt{\big(\frac{A}{\sigma}\big)^2g^2(\alpha)+\frac{h(\sigma)}{\sigma^2}}} \bigg) - \text{exp}(-\frac{z_{-1}^2}{2})\bigg( \frac{A}{\sigma} \cdot \frac{\eta}{2} - \frac{A^2g(\alpha)\frac{\eta}{2}}{\sqrt{A^2g^2(\alpha)+h(\sigma)}} \bigg) \\
< \text{exp}(-\frac{z_{-1}^2}{2}) \bigg( -A \cdot \frac{\eta}{2} + \frac{\big(\frac{A}{\sigma} \big)^2g(\alpha)\frac{\eta}{2}}{\sqrt{\big(\frac{A}{\sigma}\big)^2g^2(\alpha)+\frac{h(\sigma)}{\sigma^2}}} -\frac{A}{\sigma} \cdot \frac{\eta}{2} + \frac{A^2g(\alpha)\frac{\eta}{2}}{\sqrt{A^2g^2(\alpha)+h(\sigma)}}
\bigg)
\\
= \text{exp}(-\frac{z_{-1}^2}{2})\big(-A \cdot \frac{\eta}{2} \big) \bigg( \big(1 + \frac{1}{\sigma}\big) - \big( \frac{\frac{A}{\sigma}g(\alpha)}{\sqrt{A^2g^2(\alpha)+h(\sigma)}} + \frac{A g(\alpha)}{\sqrt{A^2g^2(\alpha)+h(\sigma)}}
\big) \bigg)
\\
= \text{exp}(-\frac{z_{-1}^2}{2})\big(-A \cdot \frac{\eta}{2} \big) \big( 1 + \frac{1}{\sigma} \big) \bigg ( 1 - \frac{A g(\alpha)}{\sqrt{A^2g^2(\alpha)+h(\sigma)}} \bigg) < 0,
\end{multline}

where $z_{+1}=-A g(\alpha) + \sqrt{\big(\frac{A}{\sigma}\big)^2g^2(\alpha)+\frac{h(\sigma)}{\sigma^2}}$ and $z_{-1}=\frac{A}{\sigma}g(\alpha)-\sqrt{A^2g^2(\alpha)+h(\sigma)}$. We leverage the property that $z_{+1}^2 < z_{-1}^2$ given $\sigma > 1$, and the property that the derivative of $z_{+1}$ with respect to $\alpha$ is less than zero, which is demonstrated in Theorem \ref{eq:appendix_error_proportional_alpha}.
Hence, as $\alpha$ increases, the discrepancy in robust errors between the two classes in the robust model diminishes monotonically. For the standard error of the standard model, since $0 < \epsilon < \eta$, substituting $g(\alpha)=\frac{1+\alpha}{2}\eta-\epsilon$ for $g(\alpha)=\frac{1+\alpha}{2}\eta$ in the proof of Theorem \ref{eq:method_error_difference_proportional_alpha} doesn't alter the outcome of the error's derivative with respect to $\alpha$. Hence, as $\alpha$ increases, the discrepancy in standard errors between the two classes in the standard model diminishes monotonically.

\end{proof}

\subsection{Analysis on the robust fairness problem}
\label{appendix:analysis on the robust fairness problem}

In Sections \ref{method:the effect of class distance on robust fairness} and \ref{method:empirical verification}, we demonstrate that robust fairness deteriorates as the distance between classes becomes closer. In this context, we examine the correlation between inter-class distance and the robust fairness issue, which is the phenomenon that accuracy disparity amplifies when transitioning from a standard to an adversarial setting. From the results of Fig \ref{fig:method_bird_empirical_1}, we can observe that as the distance between classes increases, the difference between the results of standard and robust settings also decreases. This suggests that the robust fairness issue also can be exacerbated by similar classes. To investigate this theoretically, we will prove the following corollary, drawing on the proof from Theorem \ref{eq:method_error_difference_proportional_alpha}. Let's denote $f^{\alpha_1}$ as the optimal classifier when $\alpha$ in equation \ref{eq:method_gaussian_distribution} is set to $\alpha=\alpha_1$. The following corollary establishes the relationship between the robust fairness problem and the distance between classes.

\begin{corollaryy}
\label{eq:appendix_error_difference_proportional_alpha_corollary}
When given two different classification tasks with different alpha values of $\alpha_1$ and $\alpha_2$ (where $\alpha_1 < \alpha_2)$ and $d g^2(\alpha) \gg \sigma$, the following inequality holds:
\begin{equation}
\label{eq:appendix_different_alpha}
 D\big(\mathcal{R}_{rob}(f_{rob}^{\alpha_{2}})\big) - D\big(\mathcal{R}_{nat}(f_{nat}^{\alpha_{2}}) \big)
< D\big(\mathcal{R}_{rob}(f_{rob}^{\alpha_{1}})\big) - D\big(\mathcal{R}_{nat}(f_{nat}^{\alpha_{1}}) \big).
\end{equation}
\end{corollaryy}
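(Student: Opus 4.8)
The plan is to work directly from the closed-form expression for $D(\mathcal{R}(f))$ established in Theorem~2 and track how it varies as a function of the "effective mean" $g(\alpha)$, which equals $\frac{1+\alpha}{2}\eta$ in the standard setting and $\frac{1+\alpha}{2}\eta-\epsilon$ in the adversarial setting. Since $D(\mathcal{R}(f))$ is expressed as a probability that $\mathcal{N}(0,1)$ lies in an interval $(z_{-1}, z_{+1})$ with endpoints depending only on $g(\alpha)$, $A$, $\sigma$ (via $h(\sigma)$), it suffices to understand the single-variable function $G(g) \equiv \Pr(z_{-1}(g) < \mathcal{N}(0,1) < z_{+1}(g))$, where $z_{+1}(g) = -Ag + \sqrt{(A/\sigma)^2 g^2 + h(\sigma)/\sigma^2}$ and $z_{-1}(g) = \frac{A}{\sigma}g - \sqrt{A^2 g^2 + h(\sigma)}$. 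The adversarial-minus-standard quantity on each side of the claimed inequality is $G(g(\alpha) - \epsilon) - G(g(\alpha))$ (abusing notation: $g(\alpha)=\frac{1+\alpha}{2}\eta$ for the standard argument), so the corollary reduces to showing that $\alpha \mapsto G(g(\alpha)-\epsilon) - G(g(\alpha))$ is decreasing in $\alpha$, equivalently that $G'$ is decreasing on the relevant range of $g$ — i.e. that $G$ is concave there (then the difference of $G$ at two points a fixed distance $\epsilon$ apart, both of which increase with $\alpha$, is decreasing). So the first step is to reduce the corollary to a concavity statement for $G$ on the region where the arguments live.

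Next I would compute $G'(g)$ explicitly. From Theorem~1's proof we already know $z_{+1}'(g) < 0$ and by the same computation $z_{-1}'(g) < 0$; moreover in Theorem~2 it was shown that $z_{+1}^2 < z_{-1}^2$ when $\sigma > 1$. Writing $G(g) = \Phi(z_{+1}(g)) - \Phi(z_{-1}(g))$ with $\Phi$ the standard normal CDF and $\phi = \Phi'$, we get $G'(g) = \phi(z_{+1})\,z_{+1}'(g) - \phi(z_{-1})\,z_{-1}'(g)$, which is exactly (up to the common factor $-A\eta/2$) the expression manipulated in the proof of Theorem~2 and shown there to be negative. To get the \emph{second} derivative I would differentiate once more and invoke the asymptotic hypothesis $d\,g^2(\alpha) \gg \sigma$: this makes $A^2 g^2 \gg h(\sigma)$, so the square roots can be expanded, $z_{+1}(g) \approx \frac{A}{2\sigma}\cdot\frac{h(\sigma)/\sigma^2}{Ag}\to 0^+$ and similarly $z_{-1}(g) \approx -\frac{A}{2}\cdot\frac{h(\sigma)}{Ag}$ small and negative, with both shrinking like $1/g$. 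In this regime $z_{+1}'(g), z_{-1}'(g)$ are $O(1/g^2)$ and their derivatives $O(1/g^3)$, while $\phi(z_{\pm1})\approx \phi(0)(1 + O(1/g^2))$; collecting the leading-order terms in $G''(g)$ should show it is dominated by a negative term, giving concavity for $g$ large, which is precisely the content of $d\,g^2 \gg \sigma$.

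The main obstacle I anticipate is making the concavity / second-derivative estimate rigorous rather than merely asymptotic: the hypothesis $d\,g^2(\alpha)\gg \sigma$ is stated as a heuristic smallness condition, and turning "$\gg$" into a clean inequality that forces $G''(g) < 0$ uniformly over $[\alpha_1, \alpha_2]$ requires care about the interplay of the two endpoints $z_{+1}$ (positive, smaller in magnitude) and $z_{-1}$ (negative, larger in magnitude) and their decay rates. An alternative route that sidesteps the second derivative is to write the target inequality as $\int_{\alpha_1}^{\alpha_2} \frac{d}{d\alpha}\big[G(g(\alpha)-\epsilon) - G(g(\alpha))\big]\, d\alpha \le 0$ and show the integrand is nonpositive pointwise: by the mean value theorem this integrand equals $\frac{\eta}{2}\big(G'(\xi_\alpha - \text{something}) - G'(\eta_\alpha)\big)$-type differences, and one bounds it using the explicit sign and monotonicity information on $z_{\pm1}$ and $\phi(z_{\pm1})$ together with $z_{+1}^2 < z_{-1}^2$. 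Either way, the crux is a one-variable calculus estimate on $G$; everything else (reduction to $G$, the closed forms, the signs of the first derivatives) is inherited verbatim from Theorems~1 and~2.
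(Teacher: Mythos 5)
Your reduction is the right one in spirit --- both you and the paper boil the corollary down to a curvature statement about the one--variable function $G(g)=\Pr\big(z_{-1}(g)<\mathcal{N}(0,1)<z_{+1}(g)\big)$ of the effective mean, using the fact that the robust and standard disparities are $G$ evaluated at arguments differing by the fixed shift $\epsilon$. But you have the sign of the required curvature backwards. Writing $m(\alpha)=\frac{1+\alpha}{2}\eta$, the quantity to be shown decreasing in $\alpha$ is $G(m(\alpha)-\epsilon)-G(m(\alpha))$, whose derivative in $\alpha$ is $\frac{\eta}{2}\big(G'(m-\epsilon)-G'(m)\big)$; this is nonpositive precisely when $G'$ is \emph{increasing}, i.e.\ when $G$ is \emph{convex}. (Sanity check: for $G(x)=x^2$ the windowed difference $G(x-\epsilon)-G(x)=-2\epsilon x+\epsilon^2$ decreases, while for the concave $G(x)=-x^2$ it increases.) The paper's proof accordingly establishes $\frac{\partial^2}{\partial\alpha^2}D(g(\alpha))>0$. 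Your plan to show $G''<0$ would, if carried out, prove the reverse of the claimed inequality, so the argument as proposed fails at the reduction step.

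There is a second, independent error in the asymptotics you intend to use. With $z_{+1}=-Ag+\sqrt{(A/\sigma)^2g^2+h(\sigma)/\sigma^2}$ and $\sigma>1$, the square root is $\approx (A/\sigma)g$ for large $g$, so $z_{+1}\approx -Ag\,(1-\tfrac{1}{\sigma})\to-\infty$ linearly in $g$; it does not tend to $0^+$, and likewise $z_{-1}\to-\infty$ with the same leading coefficient (the two endpoints differ only at lower order). Consequently $\phi(z_{\pm1})$ is not approximately $\phi(0)$, and the leading-order bookkeeping you describe does not hold. The paper avoids expansions altogether: it differentiates twice, observes $z_{+1}''>0$ and $z_{-1}''<0$ exactly, and controls the $(-z_{\pm1})(z_{\pm1}')^2$ terms by showing $z_{+1}z_{+1}'=z_{-1}z_{-1}'>0$ under the hypothesis $dg^2(\alpha)\gg\sigma$ and combining this with $z_{-1}<0$, $z_{+1}'<0$, and $z_{-1}^2>z_{+1}^2$. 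If you fix the direction of the curvature claim, that explicit route is the one to follow; the asymptotic shortcut as written is not salvageable.
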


\begin{proof}

We aim to prove that, given the difference in the value of $g(\alpha)$ by $\epsilon$ (representing the input distribution difference between a standard setting and an adversarial setting), the difference in the output of function $D$ increases as $\alpha$ decreases. In other words, the above inequality implies that as $\alpha$ decreases, the slope of the function $D$ becomes steeper. We demonstrated in Theorem \ref{eq:appendix_error_difference_proportional_alpha} that the function $D$ is a decreasing function with respect to $\alpha$. Therefore, if function $D$ is proven to be a convex function, then the inequality holds true. We assume that the dimension $d$ of the input notably surpasses the variance of the class $y=+1$, and that, the distance between the centers of the two distributions remains sufficiently vast to ensure learning in an adversarial setting, satisfying $d g^2(\alpha) \gg \sigma$ both in standard and adversarial settings. Then, the second derivative of function $D$ with respect to $\alpha$, $\frac{\partial^2}{\partial \alpha^2}(D(g(\alpha)))$, is as follows:
\begin{equation}
\label{eq:second derivative}
\text{exp}\big(-\frac{z_{+1}^2}{2}\big)(-z_{+1})(z_{+1}')^2+ \text{exp}\big(-\frac{z_{+1}^2}{2}\big)(z_{+1}'') - \text{exp}\big(-\frac{z_{-1}^2}{2}\big)(-z_{-1})(z_{-1}')^2- \text{exp}\big(-\frac{z_{-1}^2}{2}\big)(z_{-1}''),
\end{equation}
where $z_{+1}=-A g(\alpha) + \sqrt{\big(\frac{A}{\sigma}\big)^2g^2(\alpha)+\frac{h(\sigma)}{\sigma^2}}$, $z_{-1}=\frac{A}{\sigma}g(\alpha)-\sqrt{A^2g^2(\alpha)+h(\sigma)}$, $z_{+1}'=\frac{\partial z_{+1}}{\partial \alpha}$, and $z_{-1}'=\frac{\partial z_{-1}}{\partial \alpha}$, $z_{+1}''=\frac{\partial^2 z_{+1}}{\partial \alpha^2}$, and $z_{-1}''=\frac{\partial^2 z_{-1}}{\partial \alpha^2}$. We will now demonstrate that equation \ref{eq:second derivative} has a value greater than zero. First, $z_{+1}''$ and $z_{-1}''$ are given as follows:

\begin{align}
z_{+1}'' &= \bigg(\frac{\big(\frac{A}{\sigma} \big)^2g(\alpha)}{\sqrt{\big(\frac{A}{\sigma}\big)^2g^2(\alpha)+\frac{h(\sigma)}{\sigma^2}}} \bigg)' \cdot \frac{\eta}{2} = \bigg(\frac{\big(\frac{A}{\sigma} \big)^2 \frac{h(\sigma)}{\sigma^2} }{\big({\big(\frac{A}{\sigma}\big)^2g^2(\alpha)+\frac{h(\sigma)}{\sigma^2}} \big)^{\frac{3}{2}}} \bigg) \cdot \frac{\eta^2}{4} > 0, \\
z_{-1}'' &= \bigg( -\frac{A^2g(\alpha)}{\sqrt{A^2g^2(\alpha)+h(\sigma)}} \bigg)' \cdot \frac{\eta}{2} = -\sigma\bigg(\frac{\big(\frac{A}{\sigma} \big)^2 \frac{h(\sigma)}{\sigma^2} }{\big({\big(\frac{A}{\sigma}\big)^2g^2(\alpha)+\frac{h(\sigma)}{\sigma^2}} \big)^{\frac{3}{2}}} \bigg) \cdot \frac{\eta^2}{4} < 0.
\end{align}
Consequently, $\text{exp}\big(-\frac{z_{+1}^2}{2}\big)(z_{+1}'') > 0$ and $- \text{exp}\big(-\frac{z_{-1}^2}{2}\big)(z_{-1}'') > 0$. Next, in order to prove that $\text{exp}\big(-\frac{z_{+1}^2}{2}\big)(-z_{+1})(z_{+1}')^2- \text{exp}\big(-\frac{z_{-1}^2}{2}\big)(-z_{-1})(z_{-1}')^2 > 0$ , we first calculate $(z_{+1}z_{+1}') \cdot \frac{2}{\eta}$ considering that $d g^2(\alpha) \gg \sigma$.
\begin{multline}
(z_{+1}z_{+1}') \cdot \frac{2}{\eta} = A  g(\alpha) + \frac{A}{\sigma^2} g(\alpha) - \frac{1}{\sigma}  \sqrt{A^2g^2(\alpha)+h(\sigma)} - \frac{1}{\sigma} \frac{A^2g^2(\alpha)}{\sqrt{A^2g^2(\alpha)+h(\sigma)}} \\
= A  g(\alpha) + \frac{A}{\sigma^2} g(\alpha) - \frac{1}{\sigma}\bigg( 
\frac{2A^2g^2(\alpha)+h(\sigma)}{\sqrt{A^2g^2(\alpha)+h(\sigma)}}
\bigg) > A  g(\alpha) + \frac{A}{\sigma^2} g(\alpha) - \frac{1}{\sigma}\bigg( 
\frac{2A^2g^2(\alpha)+h(\sigma)}{\sqrt{A^2g^2(\alpha)}}
\bigg) \\
= Ag(\alpha)(1-\frac{1}{\sigma})^2 - \frac{1}{\sigma} \cdot \frac{h(\sigma)}{Ag(\alpha)}
\stackrel{\times (\sigma+1)^2(\sigma-1)Ag(\alpha)}{\rightarrow}
4dg^2(\alpha)(\sigma-1)-2\sigma (\sigma+1) \log \sigma > 0. \\
\end{multline}

Calculating $z_{-1}z_{-1}'$ yields the same value as $z_{+1}z_{+1}'$. Taking into account Theorem \ref{eq:appendix_error_difference_proportional_alpha}, where it's established $z_{-1} < 0$, $z_{+1}' < 0$, and $z_{-1}^2 > z_{+1}^2$, the expression of the equation, $\text{exp}\big(-\frac{z_{+1}^2}{2}\big)(-z_{+1})(z_{+1}')^2- \text{exp}\big(-\frac{z_{-1}^2}{2}\big)(-z_{-1})(z_{-1}')^2 $, is given by:
\begin{multline}
\text{exp}\big(-\frac{z_{+1}^2}{2}\big)(-z_{+1})(z_{+1}')^2- \text{exp}\big(-\frac{z_{-1}^2}{2}\big)(-z_{-1})(z_{-1}')^2 > \text{exp}\big(-\frac{z_{+1}^2}{2}\big)(-z_{+1}z_{+1}'^2 + z_{-1}z_{-1}'^2) \\
= \frac{\eta}{2}
\text{exp}\big(-\frac{z_{+1}^2}{2}\big)(z_{+1}z_{+1}')
\bigg( A - \frac{A}{\sigma} + \frac{A^2g(\alpha)}{\sqrt{A^2g^2(\alpha)+h(\sigma)}} - \frac{1}{\sigma} \cdot\frac{A^2g(\alpha)}{\sqrt{A^2g^2(\alpha)+h(\sigma)}} \bigg) > 0.
\end{multline}

As a result, the following inequality is satisfied.

\begin{multline}
\frac{\partial^2}{\partial \alpha^2}(D(g(\alpha))) =  \text{exp}\big(-\frac{z_{+1}^2}{2}\big)(-z_{+1})(z_{+1}')+ \text{exp}\big(-\frac{z_{+1}^2}{2}\big)(z_{+1}'') \\ - \text{exp}\big(-\frac{z_{-1}^2}{2}\big)(-z_{-1})(z_{-1}')- \text{exp}\big(-\frac{z_{-1}^2}{2}\big)(z_{-1}'') >  0.
\end{multline}

This indicates that the function $D$ is a convex function with respect to $\alpha$. Consequently, equation \ref{eq:appendix_different_alpha} is proven. 

\end{proof}

\subsection{theoretical analysis on adversarial margin}
\label{appendix:theoretical analysis on adversarial margin}

In this section, we conduct theoretical analyses on the characteristics that emerge when different values of the adversarial margin, $\epsilon$, are allocated to distinct classes during adversarial training.

\paragraph{A binary classification task for adversarial margin $\epsilon$}

The setup of the binary classification task is similar to that of Section \ref{method:the effect of class distance on robust fairness}. However, the two classes in this task have identical variances and the value of $\alpha$ is set to $\alpha=1$.

\begin{equation}
\label{eq:method_gaussian_distribution_epsilon}
\begin{gathered}
    \mathnormal{y}\stackrel{u.a.r}{\sim} \{-1, +1\}, \smalltab 
    \bm{\mu}=(\eta, \eta, \cdots, \eta),    
    \tab 
    \bm{\mathnormal{x}} {\sim}
    \begin{cases}
    \mathcal{N}(\bm{\mu}, 1), & \text{if } \mathnormal{y}=+1 \\
    \mathcal{N}(-\bm{\mu}, 1), & \text{if } \mathnormal{y}=-1 \\
    \end{cases}
\end{gathered}
\end{equation}

Here, we assume that adversarial attacks, created with different margins for each class under the adversarial setting, are applied. Specifically, the upper bound of the attack margin for class $y=-1$ (denoted as $\epsilon_{-1}$) is assumed to be smaller than the upper bound of the attack margin for class $y=+1$ (denoted as $\epsilon_{+1}$). In these conditions, the following theorem holds:

\begin{theorem}
\label{eq:method_epsilon_standard_error}
Let $0<\epsilon_{-1}<\epsilon_{+1}<\eta$. Then, the adversarial training of the classifier $f$ will increase the standard error for class $y=+1$ while decreasing the standard error for class $y=-1$.
\begin{equation}
\begin{gathered}
    \mathcal{R}_{nat}(f_{nat}|-1) < \mathcal{R}_{nat}(f_{rob}|-1),
    \\
    \mathcal{R}_{nat}(f_{nat}|+1) > \mathcal{R}_{nat}(f_{rob}|+1).
\end{gathered}
\end{equation}
\end{theorem}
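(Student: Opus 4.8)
The plan is to follow the same template as the proof of Theorem~\ref{eq:appendix_error_proportional_alpha}: first pin down the optimal natural classifier $f_{nat}$, then the optimal robust classifier $f_{rob}$ under the asymmetric adversarial budgets $\epsilon_{-1}<\epsilon_{+1}$, and finally compare the two classifiers' class-conditional \emph{clean} errors. Since the distribution in Eq.~\ref{eq:method_gaussian_distribution_epsilon} has equal covariances and $\alpha=1$, the coordinate-swapping argument of Lemma~\ref{eq:lemma_optimal_weight} still forces the optimal weight vector to be $\bm{w}=(\tfrac{1}{\sqrt d},\dots,\tfrac{1}{\sqrt d})$, and the full symmetry between the two classes then gives bias $b_{nat}=0$ for $f_{nat}$; hence $\mathcal{R}_{nat}(f_{nat}|+1)=\mathcal{R}_{nat}(f_{nat}|-1)=Pr.\big(\mathcal{N}(0,1)<-\sqrt d\,\eta\big)$.

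For $f_{rob}$ the key reduction is that for a linear classifier with $\norm{\bm{w}}_2=1$ the worst-case perturbation of budget $\epsilon_y$ is $\mp\epsilon_y\bm{w}$, so the class-conditional robust errors collapse to $\mathcal{R}_{rob}(f|+1)=Pr.\big(\bm{w}^\top\bm{x}+b-\epsilon_{+1}<0\mid y=+1\big)$ and $\mathcal{R}_{rob}(f|-1)=Pr.\big(\bm{w}^\top\bm{x}+b+\epsilon_{-1}>0\mid y=-1\big)$; that is, the robust objective is the standard-classification objective for the distribution whose $+1$ and $-1$ means have been translated by $-\epsilon_{+1}\bm{w}$ and $+\epsilon_{-1}\bm{w}$ (cf.\ Lemma~\ref{eq:lemma_translation}). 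A Lemma~\ref{eq:lemma_optimal_weight}-style argument (again using equal covariances) keeps $\bm{w}_{rob}=(\tfrac{1}{\sqrt d},\dots,\tfrac{1}{\sqrt d})$, so with $\bm{w}^\top\bm{x}\sim\mathcal{N}(\sqrt d\,\eta,1)$ for $y=+1$ and $\mathcal{N}(-\sqrt d\,\eta,1)$ for $y=-1$ the total robust error is $Pr.\big(\mathcal{N}(0,1)<\epsilon_{+1}-b-\sqrt d\,\eta\big)+Pr.\big(\mathcal{N}(0,1)<b+\epsilon_{-1}-\sqrt d\,\eta\big)$. Setting its derivative in $b$ to zero, exactly as in the paper's other proofs and using that the standard normal density is even, gives $|\epsilon_{+1}-b-\sqrt d\,\eta|=|b+\epsilon_{-1}-\sqrt d\,\eta|$; the alternative root $\epsilon_{+1}+\epsilon_{-1}=2\sqrt d\,\eta$ is impossible since $\epsilon_{+1}+\epsilon_{-1}<2\eta\le 2\sqrt d\,\eta$, leaving the unique minimizer $b_{rob}=\tfrac{1}{2}(\epsilon_{+1}-\epsilon_{-1})$, which is strictly positive because $\epsilon_{-1}<\epsilon_{+1}$.

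It then remains to evaluate the clean errors of $f_{rob}=\text{sign}(\bm{w}_{rob}^\top\bm{x}+b_{rob})$: $\mathcal{R}_{nat}(f_{rob}|+1)=Pr.\big(\mathcal{N}(0,1)<-\sqrt d\,\eta-b_{rob}\big)$ and $\mathcal{R}_{nat}(f_{rob}|-1)=Pr.\big(\mathcal{N}(0,1)<-\sqrt d\,\eta+b_{rob}\big)$. Since the standard normal CDF is strictly increasing and $b_{rob}>0$, this yields at once $\mathcal{R}_{nat}(f_{nat}|+1)>\mathcal{R}_{nat}(f_{rob}|+1)$ and $\mathcal{R}_{nat}(f_{nat}|-1)<\mathcal{R}_{nat}(f_{rob}|-1)$, which are the two displayed inequalities. (Geometrically, the larger budget on class $y=+1$ shifts the learned decision boundary away from the $+1$ center, shrinking its clean error at the expense of class $y=-1$.)

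I expect the main obstacle to be the second step: rigorously justifying that, even under \emph{asymmetric} per-class budgets, the optimal robust weight direction is still $(\tfrac{1}{\sqrt d},\dots,\tfrac{1}{\sqrt d})$, so that the robust-error minimization genuinely collapses to a one-dimensional problem in $b$, and then cleanly discarding the spurious critical point of the bias equation using the hypotheses $0<\epsilon_{-1}<\epsilon_{+1}<\eta$. Once that reduction is secured, the remainder is a routine Gaussian-tail computation of the same kind already carried out for Theorems~\ref{eq:appendix_error_proportional_alpha} and~\ref{eq:appendix_error_difference_proportional_alpha}.
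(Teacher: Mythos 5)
Your proof is correct and follows essentially the same route as the paper's: both reduce to the one-dimensional projection with $\bm{w}=(\tfrac{1}{\sqrt d},\dots,\tfrac{1}{\sqrt d})$, obtain $b_{nat}=0$ by symmetry and a strictly positive robust bias $b_{rob}$ as the midpoint of the attack-shifted centroids (the paper gets this directly from Lemma~\ref{eq:lemma_translation} rather than from your first-order condition in $b$, which is a slightly more explicit but equivalent derivation), and conclude from monotonicity of the Gaussian CDF. The only cosmetic discrepancy is the perturbation convention: the paper's other proofs shift each coordinate of the mean by $\epsilon$ (so the projected shift is $\sqrt d\,\epsilon_y$ rather than your $\epsilon_y$), but this rescaling changes neither the sign of $b_{rob}$ nor either of the two inequalities.
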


\begin{proof}
Based on Lemma \ref{eq:lemma_optimal_weight}, when $\norm{\bm{w}}_2=1$, the optimal parameters for the standard classifier are given by $\bm{w}=(\frac{1}{\sqrt{d}},\frac{1}{\sqrt{d}},...,\frac{1}{\sqrt{d}})$. Because the distributions in the task have identical variances, the optimal parameter $b$ is set to $b=0$. As a result, the standard error of the standard classifier for the class $y=-1$ is equivalent to that for the class $y=+1$. In adversarial settings, however, adversarial attacks cause the centroid of each class to move closer to the centroid of its counterpart. Per Lemma \ref{eq:lemma_translation}, for datasets affected by such class shifts, the decision boundary is defined by the mean of the centroids of the two classes. Given that $\frac{\epsilon_{-1} - \epsilon_{+1}}{2} < 0$, the decision boundary lies nearer to the centroid of the class $y=-1$. Consequently, $\mathcal{R}_{nat}(f_{rob}|+1)$ yields a value less than $\mathcal{R}_{nat}(f_{nat}|+1)$, while class $y=-1$ exhibits the opposite outcome.
\end{proof}
    
Theorem \ref{eq:method_epsilon_standard_error} implies that in adversarial training, when each class is trained with a distinct adversarial margin, the class subjected to an attack with a relatively higher upper bound for the adversarial margin will be trained to reduce its standard error. 

Let's denote the robust error under an adversarial attack with an upper bound of $\epsilon$ for the test set as $\mathcal{R}_{rob,\epsilon}$, and let's denote $f_{rob, \epsilon}$ as the classifier trained with an adversarial margin of $\epsilon$ applied equally to both classes $y=+1$ and $y=-1$.
In this context, the robust classifier trained with different attack margins $\epsilon_{-1}$ and $\epsilon_{+1}$ adheres to the following corollary:

\begin{corollaryy}
\label{eq:method_epsilon_standard_error_corollary}
Let the upper bounds of adversarial margins for the above classification task satisfy $0<\epsilon_{-1}<\epsilon_{+1}<\eta$, and let the upper bound of adversarial margin for the test adversarial attack satisfy $0<\epsilon<\eta$, and be applied uniformly across both classes. Then, it satisfies
\begin{equation}
\mathcal{R}_{rob,\epsilon}(f_{rob}|-1) - \mathcal{R}_{rob,\epsilon}(f_{rob, \epsilon}|-1) > \mathcal{R}_{rob,\epsilon}(f_{rob}|+1) - \mathcal{R}_{rob,\epsilon}(f_{rob, \epsilon}|+1).
\end{equation}
\end{corollaryy}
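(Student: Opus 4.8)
The plan is to reduce the whole statement to a comparison of a single scalar bias, exploiting that the two classifiers involved share the optimal weight vector. First I would record, exactly as in Lemma~\ref{eq:lemma_optimal_weight} and in the proof of Theorem~\ref{eq:method_epsilon_standard_error}, that for the symmetric Gaussian task of equation~\ref{eq:method_gaussian_distribution_epsilon} every robust-error-minimizing linear classifier has weight $\bm{w}=(\tfrac{1}{\sqrt{d}},\dots,\tfrac{1}{\sqrt{d}})$, and that this stays true when the adversary uses a \emph{class-dependent} margin: an $\ell_\infty$-bounded perturbation moves every coordinate by the same amount, so the adversarially-shifted class-conditional means are still multiples of $\bm{1}$, and the coordinate-swap symmetrization argument of Lemma~\ref{eq:lemma_optimal_weight} carries over unchanged. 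Hence $f_{rob}$ and $f_{rob,\epsilon}$ differ only in their bias.

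Next I would pin down the two biases via Lemma~\ref{eq:lemma_translation}, which places the decision boundary at the midpoint of the (shifted) centroids, exactly as in the proof of Theorem~\ref{eq:method_epsilon_standard_error}. Training with margins $0<\epsilon_{-1}<\epsilon_{+1}<\eta$ turns the conditional means of $\bm{w}^\top\bm{x}$ into $\sqrt{d}(\eta-\epsilon_{+1})$ and $-\sqrt{d}(\eta-\epsilon_{-1})$, whose midpoint is $\tfrac{\sqrt{d}}{2}(\epsilon_{-1}-\epsilon_{+1})<0$; thus $f_{rob}=\mathrm{sign}(\bm{w}^\top\bm{x}+b^{*})$ with $b^{*}=\tfrac{\sqrt{d}}{2}(\epsilon_{+1}-\epsilon_{-1})>0$, while setting $\epsilon_{-1}=\epsilon_{+1}=\epsilon$ gives bias $0$ for $f_{rob,\epsilon}$. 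Then I would write the four test robust errors at the uniform test margin $\epsilon$: since $\bm{w}$ has all-positive entries, the worst test perturbation shifts $\bm{w}^\top\bm{x}$ by $\sqrt{d}\,\epsilon$ toward the boundary, so
\begin{align*}
\mathcal{R}_{rob,\epsilon}(f_{rob}|-1)&=Pr.\big(\mathcal{N}(0,1)<-\sqrt{d}(\eta-\epsilon)+b^{*}\big), &
\mathcal{R}_{rob,\epsilon}(f_{rob}|+1)&=Pr.\big(\mathcal{N}(0,1)<-\sqrt{d}(\eta-\epsilon)-b^{*}\big),\\
\mathcal{R}_{rob,\epsilon}(f_{rob,\epsilon}|-1)&=Pr.\big(\mathcal{N}(0,1)<-\sqrt{d}(\eta-\epsilon)\big), &
\mathcal{R}_{rob,\epsilon}(f_{rob,\epsilon}|+1)&=Pr.\big(\mathcal{N}(0,1)<-\sqrt{d}(\eta-\epsilon)\big).
\end{align*}
Because the standard normal CDF is strictly increasing and $b^{*}>0$, the left-hand side of the claimed inequality equals $Pr.(\mathcal{N}(0,1)<-\sqrt{d}(\eta-\epsilon)+b^{*})-Pr.(\mathcal{N}(0,1)<-\sqrt{d}(\eta-\epsilon))>0$, whereas the right-hand side equals $Pr.(\mathcal{N}(0,1)<-\sqrt{d}(\eta-\epsilon)-b^{*})-Pr.(\mathcal{N}(0,1)<-\sqrt{d}(\eta-\epsilon))<0$, which finishes the proof; moreover, since $\epsilon<\eta$ forces the common argument $-\sqrt{d}(\eta-\epsilon)<0$, convexity of the normal CDF on $(-\infty,0)$ even gives the stronger statement that the increase in robust error for $y=-1$ strictly exceeds, in absolute value, the decrease for $y=+1$, should that be what is intended.

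The only step that is more than bookkeeping is the first one: confirming that the classifier trained against \emph{different} per-class margins still attains its optimum at the uniform weight vector and has exactly the midpoint bias $b^{*}$. This is a routine adaptation of Lemmas~\ref{eq:lemma_optimal_weight}--\ref{eq:lemma_translation} and of the argument already used for Theorem~\ref{eq:method_epsilon_standard_error}, so I do not expect genuine difficulty; the points to watch are the sign convention --- the boundary moves toward the centroid of $y=-1$, the class given the \emph{smaller} training margin, so it is precisely $y=-1$ whose robust error rises --- and the role of the hypotheses $0<\epsilon_{-1}<\epsilon_{+1}<\eta$ and $0<\epsilon<\eta$, which keep the shifted centroids separated so that the midpoint formula for the optimal bias remains valid.
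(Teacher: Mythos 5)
Your proposal is correct and follows essentially the same route as the paper, which proves the corollary by noting that $f_{rob,\epsilon}$ shares the optimal parameters of $f_{nat}$ (uniform weights, zero bias) and then reusing the midpoint-bias argument of Theorem~\ref{eq:method_epsilon_standard_error}; your version simply makes the bias $b^{*}=\tfrac{\sqrt{d}}{2}(\epsilon_{+1}-\epsilon_{-1})>0$ and the four error expressions explicit, which the paper leaves implicit. The only caveat is your closing aside: convexity of the normal CDF on $(-\infty,0)$ gives the stronger ``absolute-value'' comparison only when $-\sqrt{d}(\eta-\epsilon)+b^{*}\le 0$, but since that remark is tangential the main inequality stands as argued.
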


Because the optimal parameter of $f_{rob, \epsilon}$ is the same as that for $f_{nat}$, Corollary \ref{eq:method_epsilon_standard_error_corollary} can be proven using the same method as Theorem \ref{eq:method_epsilon_standard_error}. This corollary indicates that during adversarial training, classes trained with a larger value of $\epsilon$ can exhibit a smaller increase in error compared to those trained with a lesser value of $\epsilon$. This conclusion is supported both by the aforementioned theoretical analysis and the empirical evidence shown in Fig. \ref{fig:method_eps6_6_bird_cat_accuracy}. Assigning a higher adversarial margin to the harder class, in comparison to other classes, helps improve robust fairness.

\section{Additional implementation and experimental details}
\label{appendix:experiment_detail}

\subsection{Implementation details}
\label{appendix:implementation_details}

\paragraph{Datasets}

The CIFAR-10 dataset~\citep{dataset_cifar} contains 50,000 training images and 10,000 test images, spanning 10 classes. Similarly, CIFAR-100~\citep{dataset_cifar} comprises 50,000 training images and 10,000 test images distributed across 100 classes. Both CIFAR-10 and CIFAR-100 have image dimensions of $32\times32$ pixels. These datasets are subsets of the 80 million tiny images dataset~\citep{dataset_80mti}, which aggregates images retrieved from seven distinct image search engines. The STL-10 dataset \citep{dataset_stl10} provides 5,000 training images and 8,000 test images across 10 classes, with each image having dimensions of $96\times96$ pixels. For our experiments, we resized the STL-10 dataset images to $64\times64$ pixels during model training. More details on these datasets are available at \url{https://paperswithcode.com/datasets}.


\paragraph{Implementation details on experiments}

We performed experiments on the CIFAR-10, CIFAR-100~\citep{dataset_cifar}, and STL-10~\citep{dataset_stl10} datasets. As the baseline adversarial training algorithm, we employed TRADES~\citep{defense_trades}. Our models utilized the ResNet18 architecture \citep{arch_resnet}. We set the learning rates to 0.1, implementing a decay at the 100th and 105th epochs out of a total of 110 epochs, using a decay factor of 0.1 as recommended by \citet{adv_bag_of_tricks}. For optimization, we utilized stochastic gradient descent with a weight decay factor of 5e-4 and momentum set to 0.9. The upper bounds for adversarial perturbation were determined at 0.031 ($\epsilon=8$). The step size for generating adversarial examples for each model was set to one-fourth of the $\ell_\infty$-bound of the respective model, over a span of 10 steps. To assess the robustness of the models, we employed the adaptive auto attack ($A^3$) method~\citep{attack_aaa} for a rigorous evaluation. Furthermore, we adopted the evaluation metric $\rho$ as proposed by \citet{fairness_wat}, which is defined as follows:
\begin{equation}
\rho(f, \Delta, \mathcal{A})=\frac{\text{Acc}_{wc}(\mathcal{A}_{\Delta}(f))-\text{Acc}_{wc}(\mathcal{A}(f))}{\text{Acc}_{wc}(\mathcal{A}(f))} - \frac{\text{Acc}(\mathcal{A}_{\Delta}(f))-\text{Acc}(\mathcal{A}(f))}{\text{Acc}(\mathcal{A}(f))}
\end{equation}

$\text{Acc}(\cdot)$ represents average accuracy, while $\text{Acc}_{wc}(\cdot)$ denotes the worst-class accuracy. $\mathcal{A}(f)$ symbolizes the model $f$ trained using the baseline algorithm, and $\mathcal{A}_{\Delta}(f)$ indicates the model $f$ trained employing a different algorithm that utilizes strategy $\Delta$. This metric allows us to evaluate whether the employed algorithm enhances the worst accuracy performance while maintaining average accuracy performance relative to the baseline algorithm. A higher value of $\rho$ suggests superior method performance. Experiments were executed three times using three distinct seeds, with the results reflecting the average value across the final five epochs. To assess the performance of other methods, we made use of the original code for each respective methodology. 
\rebuttal{
For each methodology's specific hyperparameters, we utilized the hyperparameters from the original code for the CIFAR-10 dataset. For other datasets, we conducted experiments with various hyperparameter settings, including those from the original code. FRL~\citep{fairness_frl} involves 40 epochs of fine-tuning after pretraining, WAT utilizes the 75-90-100 LR scheduling identical to the settings of TRADES~\citep{defense_trades}, and both CFA~\citep{fairness_cfa} and BAT adopt the 100-150-200 LR scheduling following the settings of PGD~\citep{attack_pgd}. 
As the performance difference between ResNet18~\citep{arch_resnet} and PreActResNet18~\citep{arch_prn} is not significant, we proceeded with the comparison as is. We measured the average performance of the five checkpoints after the second LR decay for CFA, BAT, and WAT. For FRL, using the pretrained model provided by the official code GitHub (\url{https://github.com/hannxu123/fair_robust}), we conducted model training with 40 epochs of fine-tuning using PreActResNet18 architecture.
}
We reported the results that showed the highest worst robust accuracy. Our experiments were carried out on a single RTX 8000 GPU equipped with CUDA11.6 and CuDNN7.6.5.


\subsection{experimental details}
\label{appendix:experimental_details}


\paragraph{Experimental details of Fig. \ref{fig:intro_cat_dog_ship}}
In Fig. \ref{fig:intro_cat_dog_ship}, the central portion illustrates the class-wise accuracy of the TRADES model \citep{defense_trades}, accompanied by the prediction outcomes of the cat class, using test data from the CIFAR-10 dataset. We assessed the performance across classes utilizing $A^3$ \citep{attack_aaa}. The results depict the accuracy variations of each model in comparison to the TRADES model.
Performance results for the previous method were derived from the FRL method \citep{fairness_frl}.

\rebuttal{
Specifically, the red and blue arrows next to "Hard," "Easy," "Similar," and "Dissimilar" in Fig. 1 represent the facilitation or restriction of learning for the respective classes or class sets. When pointing upwards, it indicates the facilitation of learning for the associated class, whereas pointing downwards signifies a restriction of the learning for that class. The arrow size corresponds to the degree of facilitation or restriction. The color of the arrows is simply for differentiation, and the roles of the arrows are the same. The arrow for the class "cat" points upwards, denoting the facilitation of learning. This is because the cat is considered the worst class, and to improve robust fairness, learning for the cat class is promoted by restricting the learning of other classes. The conventional approach ("Hard" and "Easy") restricts hard classes less (small arrows) and easy classes more (large arrows) with reference to class-wise accuracy (red bar). In contrast, our proposed method, considering the relative difficulty between classes and incorporating class-wise similarity with reference to the prediction of the cat class (blue bar), introduces new blue arrows alongside the existing red arrows. Similar classes to the cat receive additional downward blue arrows beyond the small red arrows, as they are more restricted compared to the conventional difficulty-based approach. Conversely, dissimilar classes to the cat receive additional upward blue arrows beyond the red arrows, as they are less restricted compared to the conventional approach.
}

\paragraph{Experimental details of Fig. \ref{fig:intro_cat_animal_non-animal}} 
In Fig, \ref{fig:intro_cat_animal_non-animal}, we trained two distinct models using the TRADES method. One model was trained on five classes from CIFAR-10 (cat, horse, deer, dog, bird), while the other was trained on another set of five classes (cat, plane, ship, truck, car). For evaluation, we assessed the performance of each class using $A^3$ \citep{attack_aaa}. All other implementation details were kept consistent with the previously described experiments.

\paragraph{Experimental details of Fig. \ref{fig:method_for_reg}}

In the results presented in Fig. \ref{fig:method_dist_matrix}, we quantified the distance between classes using models trained via TRADES \citep{defense_trades}. This distance was determined using the feature output of the TRADES model, specifically leveraging the penultimate layer's output as the feature representation. We designated the median feature of each class as its distribution's center and computed distances between these central features. In our analyses for Figs. \ref{fig:method_bird_empirical_0} and \ref{fig:method_bird_empirical_1}, we identified the bird class as the hard class due to its consistently lower accuracy in most CIFAR-10 binary classification tasks, especially when paired with the three other classes: deer, horse, and truck. These three classes exhibit nearly identical accuracies when paired with each other in a binary classification, suggesting that their difficulties or variances are also almost equivalent. All models were trained using the TRADES methodology. Standard classifiers were assessed using clean samples, while robust classifiers underwent evaluation via the PGD attack \citep{attack_pgd}, which employed an adversarial margin of $\epsilon=8$, a step number of $20$, and a step size of $0.765$.

\paragraph{Experimental details of Fig. \ref{fig:intro_accuracy_variance_distance}} 

We measured class-wise accuracy, variance, and distance using models trained with TRADES \citep{defense_trades}. The class-wise accuracy was determined by evaluating the performance of each class using $A^3$ \citep{attack_aaa}. Both the class-wise variance and distance were assessed through the feature output of the TRADES model, where we utilized the output of the penultimate layer as the feature output.
Given a trained model, the variance and distance of the distribution for each class can be defined as follows: Let $g(x)$ represent the feature output of the trained model, $N_c$ be the number of data in class $c$, and $x_{i,c}$ be the $i$-th data of class $c$. Then, the central feature of class $c$, denoted as $\bm{\mu}_{c}$, is given by $\bm{\mu}_{c}=\frac{1}{N_c}\sum^{N_c}_{i}g(x_{i,c})$. Subsequently, the variance is defined as $\sigma_{c}^2={\frac{1}{N_c}\sum^{N_c}_{i}(g(x_{i,c}) - \bm{\mu}_c)^2}$. For two different classes, $c_1$ and $c_2$, the distance is defined as $Distance(c_1, c_2)=\norm{\bm{\mu}_{c_1} - \bm{\mu}_{c_2}}_2$. Using these definitions, we can approximately measure the variance within each class and the distance between classes that the model has learned.

\paragraph{Experimental details of Fig. \ref{fig:method_eps6_6_bird_cat_accuracy}} 

Fig. \ref{fig:method_eps6_6_bird_cat_accuracy} displays the results of three models trained on a binary classification task to distinguish between the bird and cat classes from CIFAR-10 using the TRADES methodology. Each model was trained with the adversarial margins depicted in the figure, and robust accuracy evaluation was performed using the projected gradient ascent attack \citep{attack_pgd}. The evaluation attack has an adversarial margin of $\epsilon=6$, a step number of $20$, and a step size of $0.57375$.

\section{Ablation studies}
\label{appendix:ablation studies}

\begin{table}[h]
\caption{Performance of DAFA across various architectures on the CIFAR-10 dataset. The best results are indicated in bold. }
\centering
\begin{tabular}{cl|cc|cc}
\toprule
\multicolumn{1}{c}{\multirow{2}{*}{Arch.}} &
\multicolumn{1}{c|}{\multirow{2}{*}{Method}} &
\multicolumn{2}{c|}{Clean} & \multicolumn{2}{c}{Robust} 
\\
& & \multicolumn{1}{c}{Average} & \multicolumn{1}{c|}{Worst} & \multicolumn{1}{c}{Average} & \multicolumn{1}{c}{Worst} 
\\
\midrule[.05em]
\multirow{4}{*}{PRN18}    
    & TRADES & 81.93 $\pm$ 0.17 & 66.41 $\pm$ 1.12 
    & \textbf{49.55 $\pm$ 0.15} & 21.50 $\pm$ 1.51 \\    
    & + $\text{DAFA}_{\mathcal{L}_{CE}}$ & \textbf{81.98 $\pm$ 0.23} & \textbf{69.03 $\pm$ 1.27}
    & 49.10 $\pm$ 0.18 & 28.49 $\pm$ 1.25 \\
    & + $\text{DAFA}_{\epsilon}$ & 81.93 $\pm$ 0.31 & 66.71 $\pm$ 1.26 
    & 49.38 $\pm$ 0.19 & 23.65 $\pm$ 1.19 \\
    & + DAFA & 81.56 $\pm$ 0.30 & 67.66 $\pm$ 1.19 
    & 48.72 $\pm$ 0.25 & \textbf{30.89 $\pm$ 1.32} \\
    \midrule[.05em]
\multirow{4}{*}{WRN28}    
    & TRADES & \textbf{85.70 $\pm$ 0.13} & 71.97 $\pm$ 0.90 
    & \textbf{53.95 $\pm$ 0.12} & 26.24 $\pm$ 0.85 \\        
    & + $\text{DAFA}_{\mathcal{L}_{CE}}$ & 85.62 $\pm$ 0.19 & \textbf{73.11 $\pm$ 0.89} 
    & 53.30 $\pm$ 0.26 & 30.31 $\pm$ 1.30 \\
    & + $\text{DAFA}_{\epsilon}$ & 85.14 $\pm$ 0.18 & 71.79 $\pm$ 1.10 
    & 53.49 $\pm$ 0.25 & 28.81 $\pm$ 1.27 \\
    & + DAFA & 84.94 $\pm$ 0.22 & 70.91 $\pm$ 0.98 
    & 52.99 $\pm$ 0.21 & \textbf{34.24 $\pm$ 1.03} \\
\midrule[.1em]
\end{tabular}
\label{tab:experiment_arch}
\end{table}

\paragraph{Ablation study on other architectures}
Table \ref{tab:experiment_arch} illustrates the effectiveness of our methodology when applied to different model architectures: PreActResNet18 (PRN18) \citep{arch_resnet} and WideResNet28 (WRN28) \citep{arch_wideresnet}. The results indicate that the application of our methodology consistently enhances the worst robust accuracy compared to the baseline models, thereby reducing accuracy disparity. Both class-wise components in the TRADES model, the cross-entropy loss $\mathcal{L}_{CE}$ and adversarial margin $\epsilon$, addressed through DAFA, demonstrate efficacy in improving robust fairness. Their combined application showcases the most significant effect.


\begin{table}[h]
\caption{Performance of DAFA for varying warm-up epoch on the CIFAR-10 dataset.}
\centering
\begin{tabular}{c|cc|cc}
\toprule
\multicolumn{1}{c|}{\multirow{2}{*}{Warm-up}} &
\multicolumn{2}{c|}{Clean} & \multicolumn{2}{c}{Robust} 
\\
& \multicolumn{1}{c}{Average} & \multicolumn{1}{c|}{Worst} & \multicolumn{1}{c}{Average} & \multicolumn{1}{c}{Worst} 
\\
\midrule[.05em]
10      
    & \textbf{81.94 $\pm$ 0.24} & \textbf{68.83 $\pm$ 1.36} 
    & 48.92 $\pm$ 0.26 & 29.59 $\pm$ 1.30 \\    
30  & 81.73 $\pm$ 0.18 & 67.97 $\pm$ 1.07 
    & 48.77 $\pm$ 0.29 & 29.32 $\pm$ 1.23 \\
50  & 81.52 $\pm$ 0.37 & 67.67 $\pm$ 1.23 
    & 48.89 $\pm$ 0.21 & \textbf{30.58 $\pm$ 0.74} \\
70  & 81.63 $\pm$ 0.14 & 67.90 $\pm$ 1.25 
    & 49.05 $\pm$ 0.14 & 30.53 $\pm$ 1.04 \\
90  & 81.58 $\pm$ 0.26 & 67.95 $\pm$ 0.94 
    & \textbf{49.06 $\pm$ 0.13} & 30.48 $\pm$ 0.97 \\

\midrule[.1em]
\end{tabular}
\label{tab:appendix_method_ablation_warmup}
\end{table}

\paragraph{Ablation study on hyperparameter of DAFA}
Table \ref{tab:appendix_method_ablation_warmup} displays the performance when the warm-up epoch of DAFA is altered. The table suggests that, for warm-up epochs of 50 or more, the performance remains largely consistent. When calculating the class-wise weight $\mathcal{W}$ for applying DAFA at lower epochs, the improvement in worst robust accuracy is relatively modest. However, the results still reflect a significant boost in worst robust accuracy with minimal average accuracy degradation compared to existing methods.

\begin{table}[h]
\caption{Performance of DAFA for varying $\lambda$ on the CIFAR-10 dataset.}
\centering
\begin{tabular}{c|cc|cc}
\toprule
\multicolumn{1}{c|}{\multirow{2}{*}{$\lambda$}} &
\multicolumn{2}{c|}{Clean} & \multicolumn{2}{c}{Robust} 
\\
& \multicolumn{1}{c}{Average} & \multicolumn{1}{c|}{Worst} & \multicolumn{1}{c}{Average} & \multicolumn{1}{c}{Worst} 
\\
\midrule[.05em]

    0.5 & \textbf{82.13 $\pm$ 0.20} & \textbf{69.07 $\pm$ 1.00} 
    & \textbf{49.52 $\pm$ 0.26} & 26.07 $\pm$ 1.26 \\        
    1.0 & 81.63 $\pm$ 0.14 & 67.90 $\pm$ 1.25 
    & 49.05 $\pm$ 0.14 & 30.53 $\pm$ 1.04 \\
    1.5 & 80.87 $\pm$ 0.21 & 64.51 $\pm$ 1.39 
    & 48.36 $\pm$ 0.14 & \textbf{32.00 $\pm$ 0.92} \\
    2.0 & 79.80 $\pm$ 0.29 & 62.16 $\pm$ 1.87 
    & 47.39 $\pm$ 0.23 & 29.27 $\pm$ 1.64 \\
\midrule[.1em]
\end{tabular}
\label{tab:appendix_method_ablation_lambda}
\end{table}
Table \ref{tab:appendix_method_ablation_lambda} presents the performance when altering the hyperparameter $\lambda$ in DAFA. As the value of $\lambda$ increases, the variation in class-wise weight $\mathcal{W}$ also enlarges, leading to significant performance shifts. At all values of $\lambda$, an enhancement in worst robust accuracy is observed, with the results at $\lambda=1.5$ showing a stronger mitigation effect on the fairness issue than the results at $\lambda=1.0$.

\begin{figure}[h]
\subfloat[\label{fig:appendix_method_weight}]{
\includegraphics[width=0.31\linewidth]{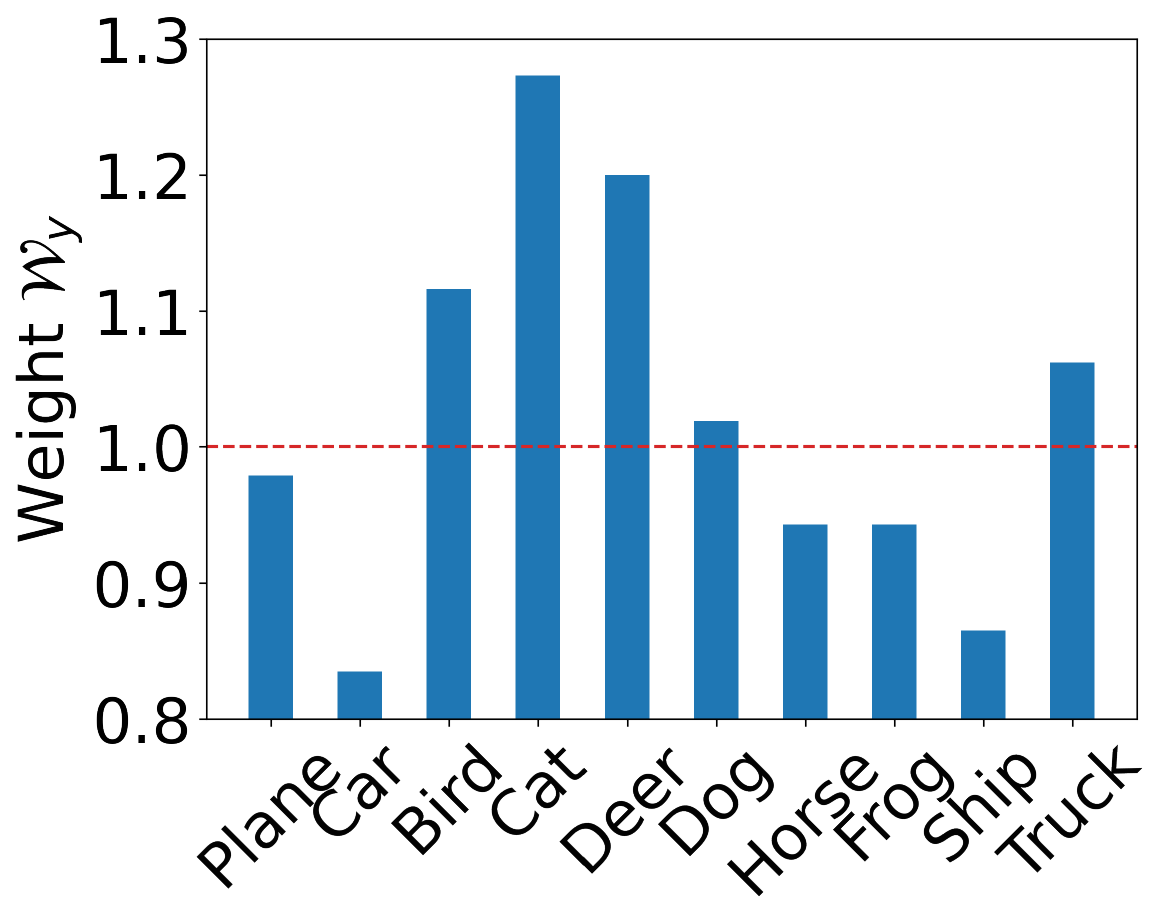}}
\hfill
\subfloat[\label{fig:appendix_method_matrix}]{
\includegraphics[width=0.35\linewidth]{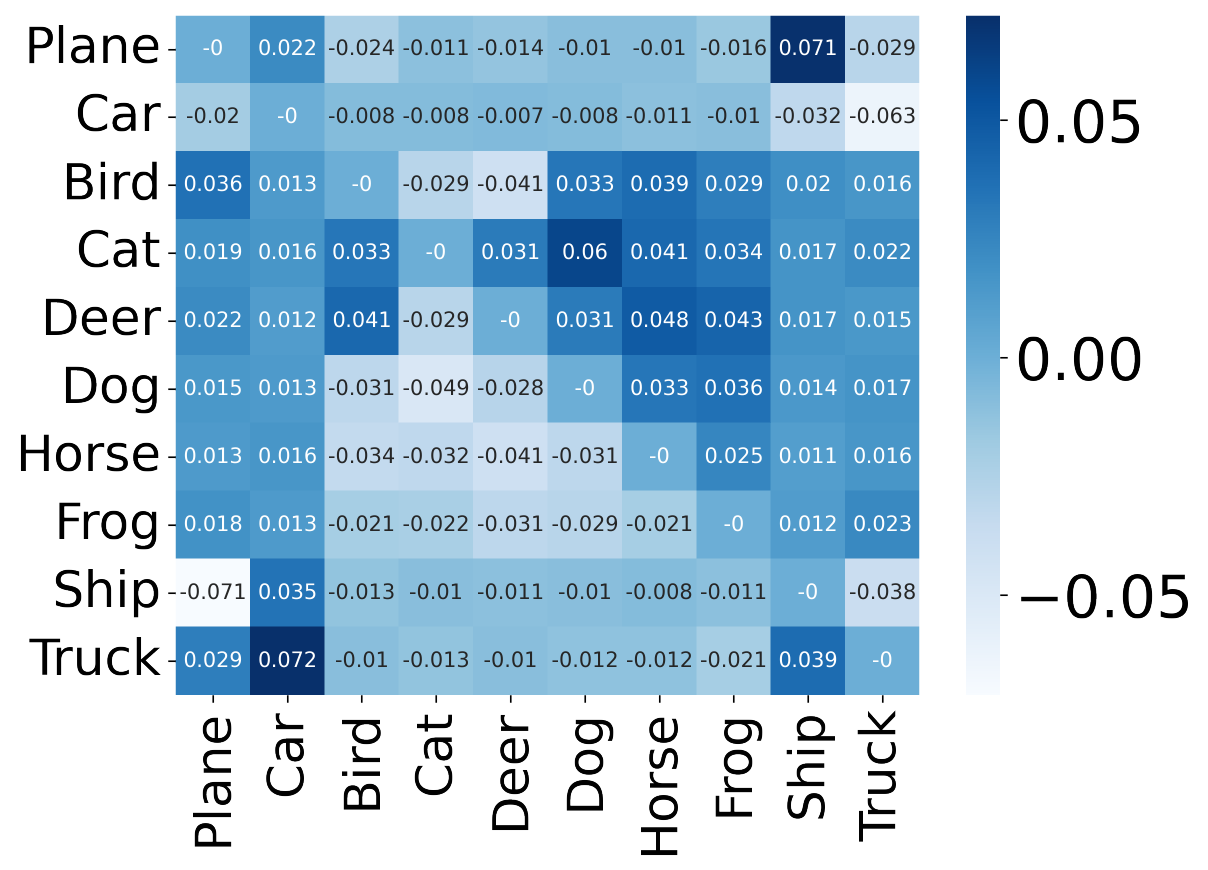}
}
\hfill
\subfloat[\label{fig:appendix_cw_accuracy}]{
\includegraphics[width=0.3\linewidth]{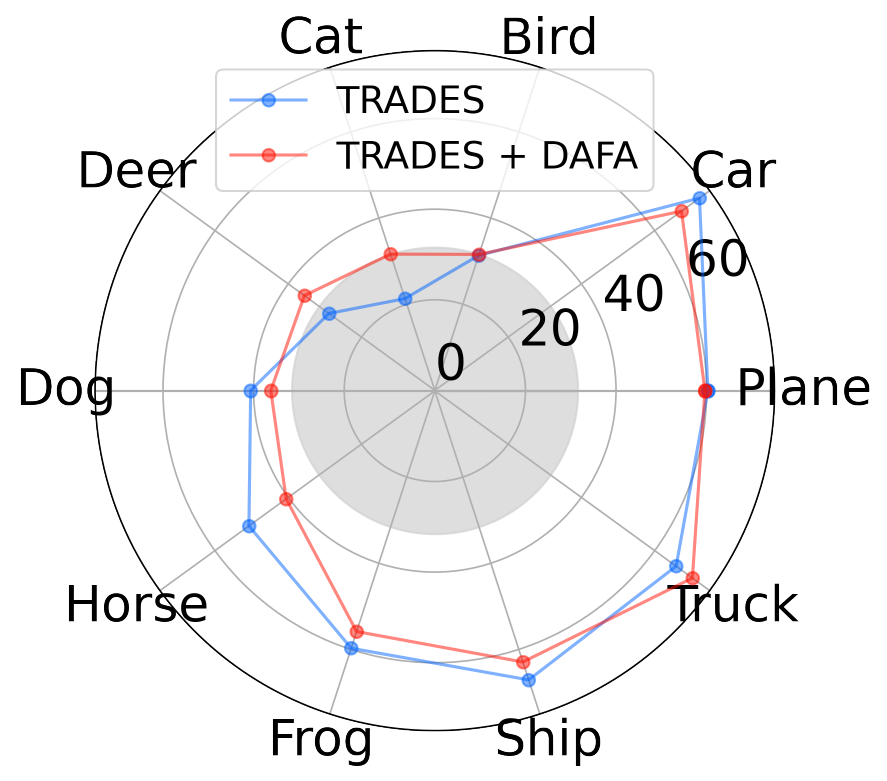}
}
\caption{(a) illustrates the weight $\mathcal{W}$ for each class on the CIFAR-10 dataset, computed using the DAFA methodology. (b) presents the matrix showing the variations in weight $\mathcal{W}$ that was reduced for each class through the application of DAFA. (c) presents a comparison of the class-wise accuracy for the CIFAR-10 dataset using the TRADES model, both before and after the application of DAFA.
}

\label{fig:appendix_1}
\end{figure}

\paragraph{Detailed results of DAFA}
Fig. \ref{fig:appendix_method_weight} depicts the class-wise weights $\mathcal{W}$ determined after applying DAFA to the TRADES methodology on the CIFAR-10 dataset. Observing the figure, it's evident that hard classes tend to exhibit higher values compared to their similar classes. Fig. \ref{fig:appendix_method_matrix} illustrates a matrix that reveals the values of weight shifts $\mathcal{W}$ between each class pair when applying DAFA to the TRADES methodology. In essence, the weight $\mathcal{W}$ shifts between similar classes are notably larger compared to the shifts between dissimilar class pairs. Fig. \ref{fig:appendix_cw_accuracy} presents the class-wise robust accuracy for the TRADES model and a model trained by applying DAFA to the former. From the figure, it's clear that our approach enhances the performance of hard classes, thereby mitigating fairness issues in adversarial trianing.


\begin{wrapfigure}[16]{R}{0.42\linewidth} 
\begin{center}
\vskip 4mm
\raisebox{0pt}[\dimexpr\height-1.5\baselineskip\relax]{\includegraphics[width=1.0\linewidth]{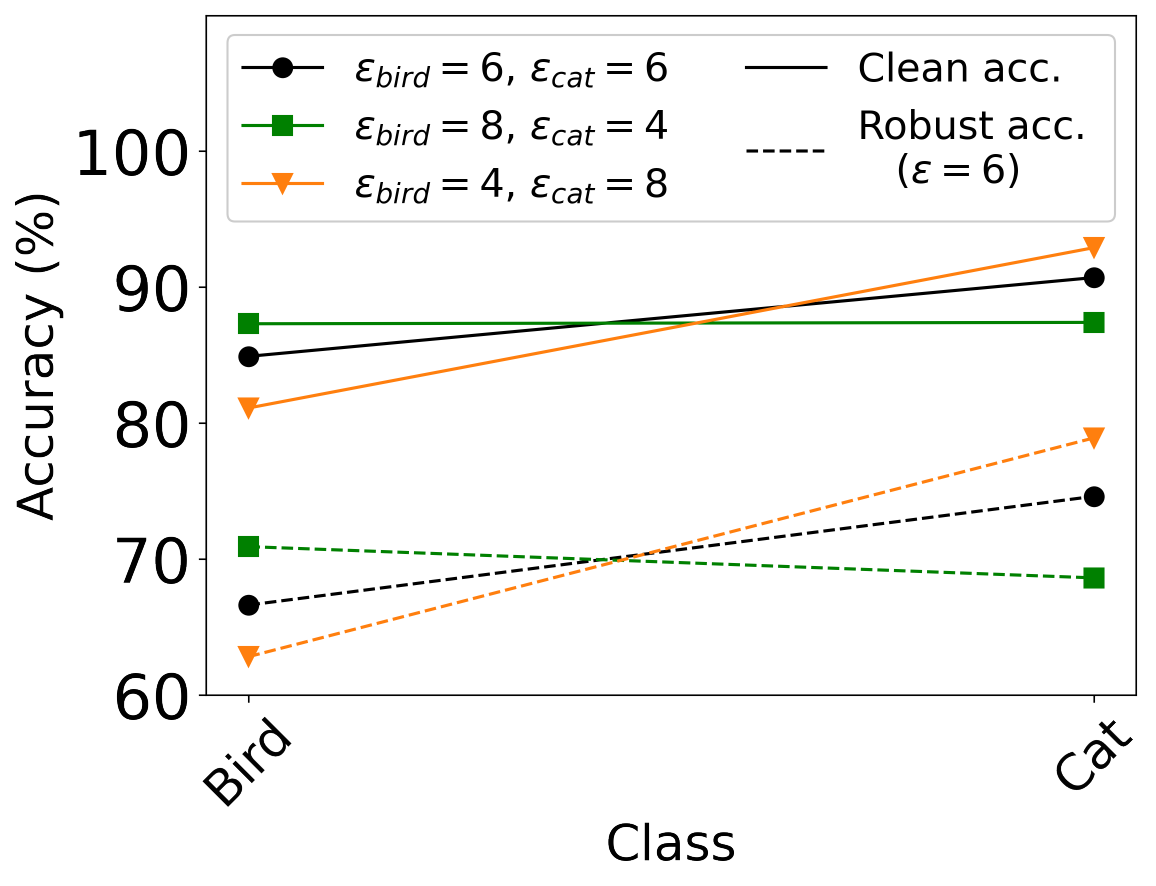}}
\end{center}
\vskip -2mm
\caption{The results from an experiment on the PGD \citep{attack_pgd} model, mirroring the conditions of Fig. \ref{fig:method_eps6_6_bird_cat_accuracy}.}
\label{fig:appendix_eps6_6_bird_cat_accuracy_pgd}
\end{wrapfigure}

\paragraph{Application of DAFA to other adversarial training algorithms}

\rebuttal{
In addition to the TRADES method, we conducted experiments on applying DAFA to various adversarial training algorithms.
}
Table \ref{tab:appendix_method_ablation_pgd} presents the results of applying DAFA to standard adversarial training using Projected Gradient Descent (PGD) \citep{attack_pgd}. The results demonstrate that applying DAFA to the PGD model effectively enhances worst robust accuracy while maintaining average robust accuracy. However, when DAFA is applied solely to the cross-entropy loss $\mathcal{L}_{CE}$, it outperforms results that employ both $\mathcal{L}_{CE}$ and the adversarial margin $\epsilon$. Furthermore, applying DAFA only to $\epsilon$ results in a lower worst robust accuracy than the baseline.

In Appendix \ref{appendix:theoretical analysis on adversarial margin}, we already demonstrated that, within an adversarial setting, when training with different values of $\epsilon$, a class trained with a larger $\epsilon$ exhibits higher worst robust accuracy. However, since we observed the opposite outcome for the PGD model, we empirically re-examine the effect of $\epsilon$ on the PGD model through a simple experiment. Fig. \ref{fig:appendix_eps6_6_bird_cat_accuracy_pgd} presents the results of the empirical verification carried out on the PGD model, akin to what was performed for the TRADES model in Fig. \ref{fig:method_eps6_6_bird_cat_accuracy}. As in the experiment of Fig. \ref{fig:method_eps6_6_bird_cat_accuracy}, each model was trained with distinct adversarial margins for each class, yet all were tested using a consistent adversarial margin of $\epsilon=6$. The results clearly indicate that, similar to the findings in Fig. \ref{fig:method_eps6_6_bird_cat_accuracy}, both clean and robust accuracy metrics showcase superior performance for the class trained with the relatively larger $\epsilon$. This outcome is consistent with our assertion that to enhance the performance of a hard class, it should be allocated a larger value of $\epsilon$. Consequently, while our theoretical insights apply to both the TRADES model and the binary classification PGD model, they do not apply to the multi-class classification PGD model.

\begin{table}[h]
\caption{Performance results of applying DAFA to PGD on the CIFAR-10 dataset.}
\centering
\begin{tabular}{l|cc|cc}
\toprule
\multicolumn{1}{c|}{\multirow{2}{*}{Method}} &
\multicolumn{2}{c|}{Clean} & \multicolumn{2}{c}{Robust} 
\\
& \multicolumn{1}{c}{Average} & \multicolumn{1}{c|}{Worst} & \multicolumn{1}{c}{Average} & \multicolumn{1}{c}{Worst} 
\\
\midrule[.05em]
PGD      
    & 84.66 $\pm$ 0.13 & 64.55 $\pm$ 1.41 
    & 48.65 $\pm$ 0.14 & 18.27 $\pm$ 1.35 \\ 
PGD + $\text{DAFA}_{\mathcal{L}_{CE}}$  & \textbf{84.68 $\pm$ 0.23} & \textbf{69.11 $\pm$ 1.39}
    & 48.53 $\pm$ 0.26 & \textbf{25.45 $\pm$ 1.31} \\
PGD + $\text{DAFA}_{\epsilon}$  & 83.12 $\pm$ 0.18 & 54.91 $\pm$ 1.53 
    & \textbf{48.94 $\pm$ 0.09} & 15.24 $\pm$ 0.90 \\
PGD + $\text{DAFA}$  & 83.93 $\pm$ 0.21 & 67.40 $\pm$ 1.35 
    & 48.02 $\pm$ 0.28 & 21.83 $\pm$ 1.39 \\
\midrule[.1em]
\end{tabular}
\label{tab:appendix_method_ablation_pgd}
\end{table}


A plausible explanation for the observed phenomenon is the presence or absence of learning from the clean data distribution. While the PGD model learns solely from adversarial examples, the TRADES model utilizes both clean and adversarial examples. Intuitively, the reason for the disparity between the results of the binary classification PGD model and the multi-class classification PGD model might be attributed to the number of opposing classes. In the binary classification PGD model, there's only one opposing class; hence, even if it learns only from adversarial examples, it can still maintain the vicinity around the center of the clean data distribution. However, in the multi-class classification PGD model, with multiple opposing classes, solely learning from adversarial examples might result in the vicinity around the center of the clean data distribution being encroached upon by a third class. Indeed, looking at the clean accuracy of $\text{DAFA}_{\epsilon}$ for the worst class in Table \ref{tab:appendix_method_ablation_pgd}, we can confirm a significant performance drop compared to the baseline model.


To validate the observations discussed above, we conducted experiments applying a methodology wherein the PGD model also learns around the center of the clean data distribution. Adversarial Vertex Mixup (AVmixup) \citep{adv_avmixup} is an adversarial training approach that, instead of using adversarial examples, employs a randomly sampled data point from the linear interpolation of a clean data $\bm{x}$ and its corresponding adversarial example $\bm{x+\delta}$, denoted as $\bm{x}+\alpha\bm{\delta}$, where $0 \leq \alpha \leq 1$. The label $y$ uses the one-hot label without applying mixup, ensuring that the distribution from the center of the clean data to the adversarial example is learned. As shown in Table \ref{tab:appendix_method_ablation_pgd}, when applying DAFA to the PGD model trained through AVmixup, the worst robust accuracy improves by 2\%p compared to the baseline model. In conclusion, for models like TRADES that incorporate clean data into their training, assigning a larger value of $\epsilon$ to the hard class is a viable method to enhance the worst accuracy. However, in cases without such training, applying a large value of $\epsilon$ to the hard classes may not yield the desired results.

\begin{table}[h]
\caption{Performance results of applying $\text{DAFA}_\epsilon$ to PGD with the AVmixup \citep{adv_avmixup} approach on the CIFAR-10 dataset.}
\centering
\begin{tabular}{l|cc|cc}
\toprule
\multicolumn{1}{c|}{\multirow{2}{*}{Method}} &
\multicolumn{2}{c|}{Clean} & \multicolumn{2}{c}{Robust} 
\\
& \multicolumn{1}{c}{Average} & \multicolumn{1}{c|}{Worst} & \multicolumn{1}{c}{Average} & \multicolumn{1}{c}{Worst} 
\\
\midrule[.05em]
PGD + AVmixup  & \textbf{89.56 $\pm$ 0.18} & \textbf{77.63 $\pm$ 1.54}
    & 41.64 $\pm$ 0.39 & 14.47 $\pm$ 1.13 \\
PGD + AVmixup + $\text{DAFA}_{\epsilon}$  & 89.03 $\pm$ 0.17 & 74.59 $\pm$ 1.44 
    & \textbf{41.82 $\pm$ 0.18} & \textbf{16.37 $\pm$ 1.23} \\
\midrule[.1em]
\end{tabular}
\label{tab:appendix_method_ablation_pgd_avmixup}
\end{table}

\rebuttal{
Additionally, we explored the application of our approach to another adversarial training algorithm, adversarial weight perturbation (AWP)~\citep{defense_awp}. The setting is the same as that of TRADES for ResNet18. The results in Table~\ref{tab:appendix_method_ablation_awp} demonstrate that applying our method to AWP significantly improves robust worst accuracy.
}

\begin{table}[h]
\caption{Performance results of applying DAFA to AWP~\citep{defense_awp} on the CIFAR-10 dataset.}
\centering
\begin{tabular}{l|cc|cc}
\toprule
\multicolumn{1}{c|}{\multirow{2}{*}{Method}} &
\multicolumn{2}{c|}{Clean} & \multicolumn{2}{c}{Robust} 
\\
& \multicolumn{1}{c}{Average} & \multicolumn{1}{c|}{Worst} & \multicolumn{1}{c}{Average} & \multicolumn{1}{c}{Worst} 
\\
\midrule[.05em]
\rebuttal{AWP}
    & \rebuttal{\textbf{81.33 $\pm$ 0.02}} & \rebuttal{64.30 $\pm$ 1.10} 
    & \rebuttal{\textbf{49.97 $\pm$ 0.18}} & \rebuttal{20.65 $\pm$ 0.45} \\ 
\rebuttal{AWP + $\text{DAFA}$}  & \rebuttal{80.12 $\pm$ 0.07} & \rebuttal{ \textbf{66.74 $\pm$ 1.35} }
    & \rebuttal{49.42 $\pm$ 0.16} & \rebuttal{\textbf{28.26 $\pm$ 1.25}} \\
\midrule[.1em]
\end{tabular}
\label{tab:appendix_method_ablation_awp}
\end{table}

\paragraph{Utilizing other types of class-wise similarity for DAFA}
\rebuttal{
Our method focuses on enhancing robust fairness by calculating class weights using class-wise similarity. Thus, our framework is not limited to prediction probability but can accommodate various types of class-wise similarity. Therefore, leveraging other class-wise similarities such as embedding similarity in our method is also feasible. To verify this, we employed the penultimate layer output as embedding vectors (e.g. 512-size vector for ResNet18). The cosine similarity between vectors for each class is computed, normalized, and subsequently applied to the DAFA framework. Let $e_i$ represent the mean embedding vector of class $i$. The similarity between class $i$ and class $j$ can be expressed as the normalized value of the cosine similarity between the two classes. In other words, if we denote $p_i(j)$ as the similarity between class $i$ and class $j$, then $p_i(j)=e_i \cdot e_j / Z$, where $Z=\sum_{k \in \{1,...,C\}} e_i \cdot e_k$. We applied the computed $p_i(j)$ to Eq. \ref{eq:method_proposed_step1} in Section~\ref{method:DAFA} and proceeded with the experiments. We chose $\lambda=0.1$ since the variance of similarity values calculated using embeddings is not significant.
Table \ref{tab:appendix_method_ablation_similarity} demonstrates that utilizing embedding similarity yields outcomes similar to the conventional results obtained using prediction probability. It is observed that embedding similarity also demonstrates effectiveness in improving robust fairness, as indicated by the results.
}

\rebuttal{
Theoretically, the value of $\mathcal{W}$ can be negative when using hard examples as a reference. This occurs in the most extreme case, where in Eq. \ref{eq:method_proposed_step0}, $i$ represents the easiest class, and all other classes $j$ are almost certain to be classified as class $i$, resulting in $\min\mathcal{W}_i \approx  \mathcal{W}_{i,0} - 9$ for a 10-class classification. The maximum value of $\mathcal{W}_i$ is achieved when class $i$ is the most challenging class, with $p_i(i)=0$ and $\sum_{j} p_i(j)=1$, yielding $\max \mathcal{W}_i = \mathcal{W}_{i,0} + 1$. To address this, we scale $\mathcal{W}$ for each dataset using the $\lambda$ value in Eq. \ref{eq:method_proposed_step1}, ensuring that $\mathcal{W}$ is adjusted. Furthermore, we apply clipping to ensure $\mathcal{W}$ remains greater than 0 ($\mathcal{W} =$ maximum$(\mathcal{W}, K)$ where $K > 0$). In our experiments, we selected $\lambda$ values of 1.0 for the CIFAR-10 dataset and 1.5 for the CIFAR-100 and STL-10 datasets, resulting in $\mathcal{W}$ values around 1.0. 
}

\rebuttal{
Additionally, we conducted experiments to verify the effectiveness of our method when referencing easy classes instead of hard classes. For our approach in Section~\ref{method:DAFA}, we restrict the learning of easy classes and promote the learning of hard classes using hard classes as a reference. However, irrespective of which class between $i$ and $j$ is relatively easy or hard, if $i$ and $j$ form a similar class pair, the probability of class $i$ being classified as $j$ and the probability of class $j$ being classified as $i$ both yield high values. Conversely, if they are dissimilar, both probabilities are low. Therefore, instead of using $p_i(j)$ with hard classes as a reference, we can use $p_j(i)$ with easy classes as a reference. Then, Eq. \ref{eq:method_proposed_step1} in Section~\ref{method:DAFA} is modified as $\mathcal{W}_{i}=\mathcal{W}_{i,0}+\sum_{j \neq i} 1[p_i(i) < p_j(j)] \cdot p_j(i) - 1[p_i(i) > p_j(j)] \cdot p_i(j)$. The experiment results when referencing easy classes instead of hard classes are illustrated in Table \ref{tab:appendix_method_ablation_similarity}. Experimental results indicate that clean, average robust, and worst robust accuracy are all similar to the original results.
}

\begin{table}[h]
\caption{Performance results of utilizing other class-wise similarities for DAFA on the CIFAR-10 dataset.}
\centering
\begin{tabular}{l|cc|cc}
\toprule
\multicolumn{1}{c|}{\multirow{2}{*}{Method}} &
\multicolumn{2}{c|}{Clean} & \multicolumn{2}{c}{Robust} 
\\
& \multicolumn{1}{c}{Average} & \multicolumn{1}{c|}{Worst} & \multicolumn{1}{c}{Average} & \multicolumn{1}{c}{Worst} 
\\
\midrule[.05em]
\rebuttal{TRADES}      
    & \rebuttal{82.04 $\pm$ 0.10} & \rebuttal{65.70 $\pm$ 1.32} 
    & \rebuttal{49.80 $\pm$ 0.18} & \rebuttal{21.31 $\pm$ 1.19} \\ 
\rebuttal{TRADES + $\text{DAFA}$}  & \rebuttal{81.63 $\pm$ 0.08} & \rebuttal{67.90 $\pm$ 1.25}
    & \rebuttal{49.05 $\pm$ 0.14} & \rebuttal{30.53 $\pm$ 1.04} \\
\rebuttal{TRADES + $\text{DAFA (embedding)}$}  & \rebuttal{81.87 $\pm$ 0.12} & \rebuttal{69.80 $\pm$ 1.35}
    & \rebuttal{48.92 $\pm$ 0.08} & \rebuttal{29.60 $\pm$ 1.27} \\
\rebuttal{TRADES + $\text{DAFA (easy ref.)}$}  & \rebuttal{81.71 $\pm$ 0.08} & \rebuttal{67.28 $\pm$ 1.48}
    & \rebuttal{49.08 $\pm$ 0.21 }& \rebuttal{30.22 $\pm$ 0.82} \\
\midrule[.1em]
\end{tabular}
\label{tab:appendix_method_ablation_similarity}
\end{table}

\paragraph{Ablation study on hyperparameter of TRADES with DAFA}

\rebuttal{
We conducted experiments by varying the value of $\beta$ of the TRADES method and applying our method. We compared the enhanced robust fairness with the baseline TRADES model. Table \ref{tab:appendix_method_ablation_beta} shows the results when the value of $\beta$ varies (1.0, 3.0, 6.0, and 9.0). Although as the overall robust performance of the baseline models decreases, the robust performance of the models with DAFA also decreases, we can observe that, in all settings, the clean/average robust accuracy performance is maintained close to the baseline, while significantly improving the worst robust accuracy.
}

\begin{table}[h]
\caption{Performance of DAFA for varying the value of $\beta$ for TRADES on the CIFAR-10 dataset.}
\centering
\begin{tabular}{l|cc|cc}
\toprule
\multicolumn{1}{c|}{\multirow{2}{*}{Method}} &
\multicolumn{2}{c|}{Clean} & \multicolumn{2}{c}{Robust} 
\\
& \multicolumn{1}{c}{Average} & \multicolumn{1}{c|}{Worst} & \multicolumn{1}{c}{Average} & \multicolumn{1}{c}{Worst} 
\\
\midrule[.05em]
\rebuttal{TRADES ($\beta=1.0$)}
    & \rebuttal{87.78 $\pm$ 0.22} & \rebuttal{74.70 $\pm$ 1.49}
    & \rebuttal{43.70 $\pm$ 0.29} & \rebuttal{16.00 $\pm$ 1.15} \\ 
\rebuttal{TRADES ($\beta=1.0$) + $\text{DAFA}$}  & \rebuttal{87.43 $\pm$ 0.23} & \rebuttal{75.22 $\pm$ 1.15}
    & \rebuttal{43.12 $\pm$ 0.25} & \rebuttal{20.62 $\pm$ 1.14} \\
\rebuttal{TRADES ($\beta=3.0$)}
    & \rebuttal{85.03 $\pm$ 0.19} & \rebuttal{70.60 $\pm$ 1.57}
    & \rebuttal{48.64 $\pm$ 0.35} & \rebuttal{21.10 $\pm$ 1.46} \\ 
\rebuttal{TRADES ($\beta=3.0$) + $\text{DAFA}$}  & \rebuttal{84.58 $\pm$ 0.14} & \rebuttal{71.34 $\pm$ 1.37}
    & \rebuttal{47.68 $\pm$ 0.16} & \rebuttal{27.44 $\pm$ 1.80} \\
\rebuttal{TRADES ($\beta=6.0$)}
    & \rebuttal{82.04 $\pm$ 0.10} & \rebuttal{65.70 $\pm$ 1.32}
    & \rebuttal{49.80 $\pm$ 0.18} & \rebuttal{21.31 $\pm$ 1.19} \\ 
\rebuttal{TRADES ($\beta=6.0$) + $\text{DAFA}$}  & \rebuttal{81.63 $\pm$ 0.08} & \rebuttal{67.90 $\pm$ 1.25}
    & \rebuttal{49.05 $\pm$ 0.14} & \rebuttal{30.53 $\pm$ 1.04} \\
\rebuttal{TRADES ($\beta=9.0$)}
    & \rebuttal{79.94 $\pm$ 0.13} & \rebuttal{63.00 $\pm$ 1.65}
    & \rebuttal{49.85 $\pm$ 0.24} & \rebuttal{20.86 $\pm$ 1.38} \\ 
\rebuttal{TRADES ($\beta=9.0$) + $\text{DAFA}$}  & \rebuttal{79.49 $\pm$ 0.10} & \rebuttal{65.36 $\pm$ 2.07}
    & \rebuttal{48.86 $\pm$ 0.17} & \rebuttal{29.80 $\pm$ 1.54} \\
\midrule[.1em]
\end{tabular}
\label{tab:appendix_method_ablation_beta}
\end{table}

\section{Additional comparison studies}
\label{appendix:comparison studies}

\paragraph{Comparison study with other methods in each method's setting}
\rebuttal{
As explained in Appendix~\ref{appendix:experiment_detail}, we conducted ablation studies comparing our methodology in each setting, given the different experimental setups for each methodology we compared. For the FRL setting, due to the low performance of the pretrained model, after 10 epochs of fine-tuning, we compute class weight and apply our method for the remaining 30 epochs. We measure the average performance of the last five checkpoints.
}
\rebuttal{
Upon examining the results in Tables \ref{tab:appendix_cifar10_each_setting}, \ref{tab:appendix_cifar100_each_setting}, and \ref{tab:appendix_stl10_each_setting}, we observe that our method demonstrates average robust accuracy and clean accuracy comparable to the baseline algorithm in all settings. Simultaneously, it exhibits a high level of worst robust accuracy compared to existing methods.
}

\paragraph{Comparison study with other methods in the same (our) setting}
\rebuttal{
Instead of comparing the performance of each method conducted in each setting, we conducted ablation studies comparing our methodology with other existing methods in the same setting.  To assess the performance of other methods, we made use of the original code for each respective methodology. For each methodology's specific hyperparameters, we utilized the hyperparameters from the original code for the CIFAR-10 dataset. For other datasets, we conducted experiments with various hyperparameter settings, including those from the original code. We reported the results that showed the highest worst robust accuracy. However, we standardized default hyperparameters following the setting of \citet{adv_bag_of_tricks} as explained in \ref{appendix:implementation_details}, such as adversarial margin, learning rate scheduling, model structure, and other relevant parameters. From Table \ref{tab:appendix_same_setting}, our method demonstrates average robust accuracy and clean accuracy comparable to the baseline algorithm in all datasets. Moreover, our method exhibits superior worst robust accuracy compared to existing methods.
}

\begin{table}[h]
\caption{Comparison of the performance of the models for improving robust fairness methods in the setting of each method on the CIFAR-10 dataset: CFA~\citep{fairness_cfa}, FRL~\citep{fairness_frl}, BAT~\citep{fairness_bat}, and WAT~\citep{fairness_wat}. The best results among robust fairness methods are indicated in bold. }
\begin{subtable}[t]{1.0\linewidth}
\centering
\footnotesize
\addtolength{\tabcolsep}{-1pt}
\caption{CFA}
\begin{tabular}{l|ccc|ccc}
\toprule
\multicolumn{1}{c|}{\multirow{2}{*}{Method}} &
\multicolumn{2}{c}{Clean} & & \multicolumn{2}{c}{Robust}
\\ \multicolumn{1}{c|}{} & Average & Worst & $\rho_{nat}$ & Average & Worst &$\rho_{rob}$
\\
\midrule[.05em]
    \rebuttal{TRADES} & \rebuttal{84.08 $\pm$ 0.14} & \rebuttal{67.22 $\pm$ 2.22} & \rebuttal{-} 
    & \rebuttal{48.91 $\pm$ 0.18} & \rebuttal{20.42 $\pm$ 1.69} & \rebuttal{-} \\ 
    \rebuttal{+ EMA} & \rebuttal{83.97 $\pm$ 0.15} & \rebuttal{66.90 $\pm$ 0.17} & \rebuttal{-0.01} 
    & \rebuttal{50.62 $\pm$ 0.10} & \rebuttal{22.16 $\pm$ 0.41} & \rebuttal{0.12} \\    
    \midrule[.05em]
    \rebuttal{+ CCM + CCR} & \rebuttal{81.72 $\pm$ 0.26} & \rebuttal{65.84 $\pm$ 2.07} & \rebuttal{-0.05} 
    & \rebuttal{49.82 $\pm$ 0.22} & \rebuttal{22.58 $\pm$ 1.72} & \rebuttal{0.12} \\    
    \rebuttal{+ CCM + CCR + EMA} & \rebuttal{81.68 $\pm$ 0.17} & \rebuttal{66.20 $\pm$ 0.22} & \rebuttal{-0.08} 
    & \rebuttal{\textbf{51.01 $\pm$ 0.05}} & \rebuttal{23.16 $\pm$ 0.10} & \rebuttal{0.18} \\   
    \rebuttal{+ CCM + CCR + FAWA} & \rebuttal{79.81 $\pm$ 0.15} & \rebuttal{65.18 $\pm$ 0.16} & \rebuttal{-0.08} 
    & \rebuttal{50.78 $\pm$ 0.05} & \rebuttal{27.62 $\pm$ 0.04} & \rebuttal{0.39} \\
    \rebuttal{+ DAFA} & \rebuttal{82.85 $\pm$ 0.05} & \rebuttal{68.48 $\pm$ 0.68} & \rebuttal{0.00}
    & \rebuttal{48.36 $\pm$ 0.18} & \rebuttal{29.68 $\pm$ 0.77} & \rebuttal{0.44} \\
    \rebuttal{+ DAFA + EMA} & \rebuttal{\textbf{83.19 $\pm$ 0.02}} & \rebuttal{\textbf{70.40 $\pm$ 0.07}} & \rebuttal{\textbf{0.04}} 
    & \rebuttal{50.63 $\pm$ 0.03} & \rebuttal{{29.95 $\pm$ 0.11}} & \rebuttal{{0.50}} \\    
    \rebuttal{+ DAFA + FAWA} & \rebuttal{{82.93 $\pm$ 0.17}} & \rebuttal{{67.72 $\pm$ 0.54}} & \rebuttal{{-0.01}} 
    & \rebuttal{50.06 $\pm$ 0.11} & \rebuttal{\textbf{30.48 $\pm$ 0.39}} & \rebuttal{\textbf{0.52}} \\    
\midrule[.1em]
\end{tabular}
\label{tab:appendix_cifar10_cfa}
\end{subtable}

\hfill

\begin{subtable}[t]{1.0\linewidth}
\centering
\footnotesize
\addtolength{\tabcolsep}{-1pt}
\caption{FRL}
\begin{tabular}{l|ccc|ccc}
\toprule
\multicolumn{1}{c|}{\multirow{2}{*}{Method}} &
\multicolumn{2}{c}{Clean} & & \multicolumn{2}{c}{Robust}
\\ \multicolumn{1}{c|}{} & Average & Worst & $\rho_{nat}$ & Average & Worst &$\rho_{rob}$
\\
\midrule[.05em]
    \rebuttal{TRADES} & \rebuttal{81.90 $\pm$ 0.17} & \rebuttal{65.20 $\pm$ 0.66} & \rebuttal{-} 
    & \rebuttal{48.80 $\pm$ 0.11} & \rebuttal{20.50 $\pm$ 1.49} & \rebuttal{-} \\ 
    \midrule[.05em]
    \rebuttal{+ FRL} & \rebuttal{\textbf{82.97 $\pm$ 0.30}} & \rebuttal{\textbf{71.94 $\pm$ 0.97}} & \rebuttal{\textbf{0.12}} 
    & \rebuttal{44.68 $\pm$ 0.27} & \rebuttal{25.84 $\pm$ 1.09} & \rebuttal{0.18} \\       
    \rebuttal{+ DAFA} & \rebuttal{{80.85 $\pm$ 0.44}} & \rebuttal{{63.18 $\pm$ 1.63}} & \rebuttal{{-0.04}} 
    & \rebuttal{\textbf{48.02 $\pm$ 0.03}} & \rebuttal{\textbf{29.14 $\pm$ 0.55}} & \rebuttal{\textbf{0.41}} \\    
\midrule[.1em]
\end{tabular}
\label{tab:appendix_cifar10_frl}
\end{subtable}

\hfill

\begin{subtable}[t]{1.0\linewidth}
\centering
\footnotesize
\addtolength{\tabcolsep}{-1pt}
\caption{BAT}
\begin{tabular}{l|ccc|ccc}
\toprule
\multicolumn{1}{c|}{\multirow{2}{*}{Method}} &
\multicolumn{2}{c}{Clean} & & \multicolumn{2}{c}{Robust}
\\ \multicolumn{1}{c|}{} & Average & Worst & $\rho_{nat}$ & Average & Worst &$\rho_{rob}$
\\
\midrule[.05em]
    \rebuttal{TRADES} & \rebuttal{83.50 $\pm$ 0.24} & \rebuttal{66.22 $\pm$ 1.44} & \rebuttal{-} 
    & \rebuttal{49.53 $\pm$ 0.26} & \rebuttal{20.93 $\pm$ 1.09} & \rebuttal{-} \\ 
    \midrule[.05em]
    \rebuttal{+ BAT} & \rebuttal{\textbf{87.16 $\pm$ 0.07}} & \rebuttal{\textbf{74.02 $\pm$ 1.38}} & \rebuttal{\textbf{0.16}} 
    & \rebuttal{45.89 $\pm$ 0.20} & \rebuttal{17.82 $\pm$ 1.09} & \rebuttal{-0.22} \\       
    \rebuttal{+ DAFA} & \rebuttal{{82.65 $\pm$ 0.25}} & \rebuttal{{67.94 $\pm$ 1.54}} & \rebuttal{{0.02}} 
    & \rebuttal{\textbf{48.59 $\pm$ 0.22}} & \rebuttal{\textbf{29.64 $\pm$ 1.25}} & \rebuttal{\textbf{0.40}} \\    
\midrule[.1em]
\end{tabular}
\label{tab:appendix_cifar10_bat}
\end{subtable}

\hfill

\begin{subtable}[t]{1.0\linewidth}
\centering
\footnotesize
\addtolength{\tabcolsep}{-1pt}
\caption{WAT}
\begin{tabular}{l|ccc|ccc}
\toprule
\multicolumn{1}{c|}{\multirow{2}{*}{Method}} &
\multicolumn{2}{c}{Clean} & & \multicolumn{2}{c}{Robust}
\\ \multicolumn{1}{c|}{} & Average & Worst & $\rho_{nat}$ & Average & Worst &$\rho_{rob}$
\\
\midrule[.05em]
    \rebuttal{TRADES} & \rebuttal{82.30 $\pm$ 0.21} & \rebuttal{65.62 $\pm$ 1.22} & \rebuttal{-} 
    & \rebuttal{48.87 $\pm$ 0.20} & \rebuttal{21.52 $\pm$ 1.09} & \rebuttal{-} \\ 
    \midrule[.05em]
    \rebuttal{+ WAT} & \rebuttal{{80.97 $\pm$ 0.29}} & \rebuttal{\textbf{70.57 $\pm$ 1.54}} & \rebuttal{\textbf{0.06}} 
    & \rebuttal{46.30 $\pm$ 0.27} & \rebuttal{28.55 $\pm$ 1.30} & \rebuttal{0.27} \\       
    \rebuttal{+ DAFA} & \rebuttal{\textbf{82.31 $\pm$ 0.32}} & \rebuttal{{68.11 $\pm$ 1.05}} & \rebuttal{{0.04}} 
    & \rebuttal{\textbf{49.03 $\pm$ 0.20}} & \rebuttal{\textbf{29.93 $\pm$ 1.38}} & \rebuttal{\textbf{0.39}} \\    
\midrule[.1em]
\end{tabular}
\label{tab:appendix_cifar10_wat}
\end{subtable}

\label{tab:appendix_cifar10_each_setting}
\end{table}

\begin{table}[h]
\caption{Comparison of the performance of the models for improving robust fairness methods in the setting of each method on the CIFAR-100 dataset: CFA~\citep{fairness_cfa}, FRL~\citep{fairness_frl}, BAT~\citep{fairness_bat}, and WAT~\citep{fairness_wat}. The best results among robust fairness methods are indicated in bold. }
\begin{subtable}[t]{1.0\linewidth}
\centering
\footnotesize
\addtolength{\tabcolsep}{-1pt}
\caption{CFA}
\begin{tabular}{l|ccc|ccc}
\toprule
\multicolumn{1}{c|}{\multirow{2}{*}{Method}} &
\multicolumn{2}{c}{Clean} & & \multicolumn{2}{c}{Robust}
\\ \multicolumn{1}{c|}{} & Average & Worst & $\rho_{nat}$ & Average & Worst &$\rho_{rob}$
\\
\midrule[.05em]
    \rebuttal{TRADES} & \rebuttal{56.50 $\pm$ 0.34} & \rebuttal{18.20 $\pm$ 1.56} & \rebuttal{-} 
    & \rebuttal{24.67 $\pm$ 0.13} & \rebuttal{1.07 $\pm$ 0.85} & \rebuttal{-} \\ 
    \rebuttal{+ EMA} & \rebuttal{59.14 $\pm$ 0.13} & \rebuttal{17.07 $\pm$ 0.39} & \rebuttal{-0.02} 
    & \rebuttal{26.80 $\pm$ 0.12} & \rebuttal{1.73 $\pm$ 0.93} & \rebuttal{0.71} \\    
    \midrule[.05em]
    \rebuttal{+ CCM + CCR} & \rebuttal{57.25 $\pm$ 0.69} & \rebuttal{19.60 $\pm$ 1.97} & \rebuttal{0.09} 
    & \rebuttal{24.18 $\pm$ 0.56} & \rebuttal{1.20 $\pm$ 0.75} & \rebuttal{0.10} \\    
    \rebuttal{+ CCM + CCR + EMA} & \rebuttal{\textbf{59.50 $\pm$ 0.17}} & \rebuttal{\textbf{22.40 $\pm$ 1.29}} & \rebuttal{\textbf{0.28}} 
    & \rebuttal{\textbf{26.48 $\pm$ 0.13}} & \rebuttal{1.53 $\pm$ 0.09} & \rebuttal{0.64} \\   
    \rebuttal{+ CCM + CCR + FAWA} & \rebuttal{57.74 $\pm$ 1.31} & \rebuttal{15.00 $\pm$ 1.55} & \rebuttal{-0.15} 
    & \rebuttal{23.49 $\pm$ 0.30} & \rebuttal{1.67 $\pm$ 1.25} & \rebuttal{0.51} \\
    \rebuttal{+ DAFA} & \rebuttal{56.13 $\pm$ 0.21} & \rebuttal{17.20 $\pm$ 1.56} & \rebuttal{-0.06}
    & \rebuttal{24.25 $\pm$ 0.17} & \rebuttal{1.87 $\pm$ 0.62} & \rebuttal{0.73} \\
    \rebuttal{+ DAFA + EMA} & \rebuttal{58.74 $\pm$ 0.19} & \rebuttal{17.07 $\pm$ 1.06} & \rebuttal{-0.02}
    & \rebuttal{26.42 $\pm$ 0.10} & \rebuttal{\textbf{2.20 $\pm$ 0.65}} & \rebuttal{\textbf{1.13}} \\    
\midrule[.1em]
\end{tabular}
\label{tab:appendix_cifar100_cfa}
\end{subtable}

\hfill

\begin{subtable}[t]{1.0\linewidth}
\centering
\footnotesize
\addtolength{\tabcolsep}{-1pt}
\caption{FRL}
\begin{tabular}{l|ccc|ccc}
\toprule
\multicolumn{1}{c|}{\multirow{2}{*}{Method}} &
\multicolumn{2}{c}{Clean} & & \multicolumn{2}{c}{Robust}
\\ \multicolumn{1}{c|}{} & Average & Worst & $\rho_{nat}$ & Average & Worst &$\rho_{rob}$
\\
\midrule[.05em]
    \rebuttal{TRADES} & \rebuttal{57.47 $\pm$ 0.15} & \rebuttal{19.00 $\pm$ 1.67} & \rebuttal{-} 
    & \rebuttal{25.50 $\pm$ 0.11} & \rebuttal{2.00 $\pm$ 0.40} & \rebuttal{-} \\ 
    \midrule[.05em]
    \rebuttal{+ FRL} & \rebuttal{\textbf{57.59 $\pm$ 0.34}} & \rebuttal{18.73 $\pm$ 1.84} & \rebuttal{{-0.01}} 
    & \rebuttal{24.93 $\pm$ 0.29} & \rebuttal{1.80 $\pm$ 0.54} & \rebuttal{-0.12} \\       
    \rebuttal{+ DAFA} & \rebuttal{{57.07 $\pm$ 0.18}} & \rebuttal{{\textbf{20.20 $\pm$ 1.25}}} & \rebuttal{\textbf{0.06}} 
    & \rebuttal{\textbf{24.98 $\pm$ 0.11}} & \rebuttal{\textbf{2.60 $\pm$ 0.49}} & \rebuttal{\textbf{0.28}} \\    
\midrule[.1em]
\end{tabular}
\label{tab:appendix_cifar100_frl}
\end{subtable}

\hfill

\begin{subtable}[t]{1.0\linewidth}
\centering
\footnotesize
\addtolength{\tabcolsep}{-1pt}
\caption{BAT}
\begin{tabular}{l|ccc|ccc}
\toprule
\multicolumn{1}{c|}{\multirow{2}{*}{Method}} &
\multicolumn{2}{c}{Clean} & & \multicolumn{2}{c}{Robust}
\\ \multicolumn{1}{c|}{} & Average & Worst & $\rho_{nat}$ & Average & Worst &$\rho_{rob}$
\\
\midrule[.05em]
    \rebuttal{TRADES} & \rebuttal{56.51 $\pm$ 0.27} & \rebuttal{16.40 $\pm$ 0.49} & \rebuttal{-} 
    & \rebuttal{24.98 $\pm$ 0.09} & \rebuttal{1.00 $\pm$ 0.00} & \rebuttal{-} \\ 
    \midrule[.05em]
    \rebuttal{+ BAT} & \rebuttal{\textbf{62.80 $\pm$ 0.17}} & \rebuttal{\textbf{22.40 $\pm$ 2.15}} & \rebuttal{\textbf{0.48}} 
    & \rebuttal{23.42 $\pm$ 0.08} & \rebuttal{0.20 $\pm$ 0.40} & \rebuttal{-0.86} \\       
    \rebuttal{+ DAFA} & \rebuttal{{56.64 $\pm$ 0.35}} & \rebuttal{{20.20 $\pm$ 2.48}} & \rebuttal{{0.23}} 
    & \rebuttal{\textbf{24.16 $\pm$ 0.17}} & \rebuttal{\textbf{2.00 $\pm$ 0.63}} & \rebuttal{\textbf{0.97}} \\    
\midrule[.1em]
\end{tabular}
\label{tab:appendix_cifar100_bat}
\end{subtable}

\hfill

\begin{subtable}[t]{1.0\linewidth}
\centering
\footnotesize
\addtolength{\tabcolsep}{-1pt}
\caption{WAT}
\begin{tabular}{l|ccc|ccc}
\toprule
\multicolumn{1}{c|}{\multirow{2}{*}{Method}} &
\multicolumn{2}{c}{Clean} & & \multicolumn{2}{c}{Robust}
\\ \multicolumn{1}{c|}{} & Average & Worst & $\rho_{nat}$ & Average & Worst &$\rho_{rob}$
\\
\midrule[.05em]
    \rebuttal{TRADES} & \rebuttal{57.96 $\pm$ 0.06} & \rebuttal{20.40 $\pm$ 1.20} & \rebuttal{-} 
    & \rebuttal{25.41 $\pm$ 0.08} & \rebuttal{1.50 $\pm$ 0.50} & \rebuttal{-} \\ 
    \midrule[.05em]
    \rebuttal{+ WAT} & \rebuttal{{53.56 $\pm$ 0.43}} & \rebuttal{{17.07 $\pm$ 2.29}} & \rebuttal{{-0.24}} 
    & \rebuttal{22.65 $\pm$ 0.28} & \rebuttal{1.87 $\pm$ 0.19} & \rebuttal{0.14} \\       
    \rebuttal{+ DAFA} & \rebuttal{\textbf{58.28 $\pm$ 0.18}} & \rebuttal{\textbf{21.30 $\pm$ 1.27}} & \rebuttal{\textbf{0.05}} 
    & \rebuttal{\textbf{24.70 $\pm$ 0.12}} & \rebuttal{\textbf{2.30 $\pm$ 0.78}} & \rebuttal{\textbf{0.51}} \\    
\midrule[.1em]
\end{tabular}
\label{tab:appendix_cifar100_wat}
\end{subtable}

\label{tab:appendix_cifar100_each_setting}
\end{table}

\begin{table}[h]
\caption{Comparison of the performance of the models for improving robust fairness methods in the setting of each method on the STL-10 dataset: CFA~\citep{fairness_cfa}, FRL~\citep{fairness_frl}, BAT~\citep{fairness_bat}, and WAT~\citep{fairness_wat}. The best results among robust fairness methods are indicated in bold. }
\begin{subtable}[t]{1.0\linewidth}
\centering
\footnotesize
\addtolength{\tabcolsep}{-1pt}
\caption{CFA}
\begin{tabular}{l|ccc|ccc}
\toprule
\multicolumn{1}{c|}{\multirow{2}{*}{Method}} &
\multicolumn{2}{c}{Clean} & & \multicolumn{2}{c}{Robust}
\\ \multicolumn{1}{c|}{} & Average & Worst & $\rho_{nat}$ & Average & Worst &$\rho_{rob}$
\\
\midrule[.05em]
    \rebuttal{TRADES} & \rebuttal{61.95 $\pm$ 0.31} & \rebuttal{38.62 $\pm$ 1.22} & \rebuttal{-} 
    & \rebuttal{28.36 $\pm$ 0.95} & \rebuttal{5.51 $\pm$ 0.51} & \rebuttal{-} \\ 
    \rebuttal{+ EMA} & \rebuttal{62.28 $\pm$ 0.25} & \rebuttal{39.43 $\pm$ 0.56} & \rebuttal{0.03} 
    & \rebuttal{31.75 $\pm$ 0.86} & \rebuttal{6.97 $\pm$ 0.68} & \rebuttal{0.38} \\     
    \midrule[.05em]
       \rebuttal{+ CCM + CCR} & \rebuttal{60.74 $\pm$ 0.42} & \rebuttal{37.02 $\pm$ 1.69} & \rebuttal{-0.06} 
    & \rebuttal{31.24 $\pm$ 0.27} & \rebuttal{6.82 $\pm$ 0.65} & \rebuttal{0.34} \\     
    \rebuttal{+ CCM + CCR + EMA} & \rebuttal{60.75 $\pm$ 0.54} & \rebuttal{38.53 $\pm$ 0.97} & \rebuttal{-0.02} 
    & \rebuttal{31.83 $\pm$ 0.14} & \rebuttal{7.63 $\pm$ 0.23} & \rebuttal{0.51} \\ 
    \rebuttal{+ CCM + CCR + FAWA} & \rebuttal{60.70 $\pm$ 0.57} & \rebuttal{38.53 $\pm$ 0.95} & \rebuttal{-0.02} 
    & \rebuttal{\textbf{31.88 $\pm$ 0.14}} & \rebuttal{7.69 $\pm$ 0.27} & \rebuttal{0.52} \\ 
    \rebuttal{+ DAFA} & \rebuttal{61.37 $\pm$ 1.25} & \rebuttal{43.29 $\pm$ 1.46} & \rebuttal{0.11} 
    & \rebuttal{28.07 $\pm$ 1.00} & \rebuttal{7.93 $\pm$ 0.59} & \rebuttal{0.43} \\ 
    \rebuttal{+ DAFA + EMA} & \rebuttal{\textbf{61.10 $\pm$ 1.02}} & \rebuttal{\textbf{42.84 $\pm$ 1.48}} & \rebuttal{\textbf{0.10}}
    & \rebuttal{30.52 $\pm$ 0.36} & \rebuttal{\textbf{11.02 $\pm$ 0.55}} & \rebuttal{\textbf{1.08}} \\     
\midrule[.1em]
\end{tabular}
\label{tab:appendix_stl10_cfa}
\end{subtable}

\hfill

\begin{subtable}[t]{1.0\linewidth}
\centering
\footnotesize
\addtolength{\tabcolsep}{-1pt}
\caption{FRL}
\begin{tabular}{l|ccc|ccc}
\toprule
\multicolumn{1}{c|}{\multirow{2}{*}{Method}} &
\multicolumn{2}{c}{Clean} & & \multicolumn{2}{c}{Robust}
\\ \multicolumn{1}{c|}{} & Average & Worst & $\rho_{nat}$ & Average & Worst &$\rho_{rob}$
\\
\midrule[.05em]
    \rebuttal{TRADES} & \rebuttal{61.52 $\pm$ 0.27} & \rebuttal{39.28 $\pm$ 1.47} & \rebuttal{-} 
    & \rebuttal{30.68 $\pm$ 0.21} & \rebuttal{6.95 $\pm$ 0.45} & \rebuttal{-} \\ 
    \midrule[.05em]
    \rebuttal{+ FRL} & \rebuttal{\textbf{56.75 $\pm$ 0.37}} & \rebuttal{\textbf{31.26 $\pm$ 1.41}} & \rebuttal{\textbf{-0.28}} 
    & \rebuttal{28.99 $\pm$ 0.39} & \rebuttal{7.94 $\pm$ 0.49} & \rebuttal{0.09} \\       
    \rebuttal{+ DAFA} & \rebuttal{{61.00 $\pm$ 0.20}} & \rebuttal{{44.29 $\pm$ 1.72}} & \rebuttal{{0.12}} 
    & \rebuttal{\textbf{29.77 $\pm$ 0.15}} & \rebuttal{\textbf{9.24 $\pm$ 0.53}} & \rebuttal{\textbf{0.30}} \\    
\midrule[.1em]
\end{tabular}
\label{tab:appendix_stl10_frl}
\end{subtable}

\hfill

\begin{subtable}[t]{1.0\linewidth}
\centering
\footnotesize
\addtolength{\tabcolsep}{-1pt}
\caption{BAT}
\begin{tabular}{l|ccc|ccc}
\toprule
\multicolumn{1}{c|}{\multirow{2}{*}{Method}} &
\multicolumn{2}{c}{Clean} & & \multicolumn{2}{c}{Robust}
\\ \multicolumn{1}{c|}{} & Average & Worst & $\rho_{nat}$ & Average & Worst &$\rho_{rob}$
\\
\midrule[.05em]
    \rebuttal{TRADES} & \rebuttal{62.13 $\pm$ 0.27} & \rebuttal{38.70 $\pm$ 0.98} & \rebuttal{-} 
    & \rebuttal{30.07 $\pm$ 0.28} & \rebuttal{5.95 $\pm$ 0.41} & \rebuttal{-} \\ 
    \midrule[.05em]
    \rebuttal{+ BAT} & \rebuttal{\textbf{59.06 $\pm$ 1.29}} & \rebuttal{\textbf{36.75 $\pm$ 2.44}} & \rebuttal{\textbf{-0.10}} 
    & \rebuttal{23.37 $\pm$ 0.98} & \rebuttal{3.22 $\pm$ 1.00} & \rebuttal{-0.68} \\       
    \rebuttal{+ DAFA} & \rebuttal{{61.13 $\pm$ 0.34}} & \rebuttal{{43.55 $\pm$ 1.72}} & \rebuttal{{0.11}} 
    & \rebuttal{\textbf{28.73 $\pm$ 0.37}} & \rebuttal{\textbf{8.96 $\pm$ 0.85}} & \rebuttal{\textbf{0.46}} \\    
\midrule[.1em]
\end{tabular}
\label{tab:appendix_stl10_bat}
\end{subtable}

\hfill

\begin{subtable}[t]{1.0\linewidth}
\centering
\footnotesize
\addtolength{\tabcolsep}{-1pt}
\caption{WAT}
\begin{tabular}{l|ccc|ccc}
\toprule
\multicolumn{1}{c|}{\multirow{2}{*}{Method}} &
\multicolumn{2}{c}{Clean} & & \multicolumn{2}{c}{Robust}
\\ \multicolumn{1}{c|}{} & Average & Worst & $\rho_{nat}$ & Average & Worst &$\rho_{rob}$
\\
\midrule[.05em]
    \rebuttal{TRADES} & \rebuttal{58.01 $\pm$ 0.16} & \rebuttal{36.45 $\pm$ 1.15} & \rebuttal{-} 
    & \rebuttal{29.75 $\pm$ 0.25} & \rebuttal{5.13 $\pm$ 0.47} & \rebuttal{-} \\ 
    \midrule[.05em]
    \rebuttal{+ WAT} & \rebuttal{{52.92 $\pm$ 1.80}} & \rebuttal{{30.82 $\pm$ 1.44}} & \rebuttal{{-0.24}} 
    & \rebuttal{26.98 $\pm$ 0.25} & \rebuttal{6.72 $\pm$ 0.75} & \rebuttal{0.22} \\       
    \rebuttal{+ DAFA} & \rebuttal{\textbf{57.14 $\pm$ 0.53}} & \rebuttal{\textbf{39.61 $\pm$ 1.66}} & \rebuttal{\textbf{0.07}} 
    & \rebuttal{\textbf{27.92 $\pm$ 0.30}} & \rebuttal{\textbf{10.35 $\pm$ 0.68}} & \rebuttal{\textbf{0.96}} \\    
\midrule[.1em]
\end{tabular}
\label{tab:appendix_stl10_wat}
\end{subtable}

\label{tab:appendix_stl10_each_setting}
\end{table}

\begin{table}[h]
\caption{Performance of the models for improving robust fairness methods in the same setting \citep{adv_bag_of_tricks}. The best results among robust fairness methods are indicated in bold. }

\begin{subtable}[t]{1.0\linewidth}
\centering
\footnotesize
\addtolength{\tabcolsep}{-0pt}
\caption{CIFAR-10}
\begin{tabular}{l|ccc|ccc}
\toprule
\multicolumn{1}{c|}{\multirow{2}{*}{Method}} &
\multicolumn{2}{c}{Clean} & & \multicolumn{2}{c}{Robust}
\\ \multicolumn{1}{c|}{} & Average & Worst & $\rho_{nat}$ & Average & Worst &$\rho_{rob}$
\\
\midrule[.05em]
    TRADES & 82.04 $\pm$ 0.10 & 65.70 $\pm$ 1.32 & 0.00 
    & 49.80 $\pm$ 0.18 & 21.31 $\pm$ 1.19 & 0.00 \\
    \midrule[.05em]
    TRADES + FRL & 80.72 $\pm$ 0.46 & 71.33 $\pm$ 1.38 & 0.06
    & 47.62 $\pm$ 0.63 & 26.47 $\pm$ 1.54 & 0.19 \\
    TRADES + CFA & 76.79 $\pm$ 0.58 & 59.98 $\pm$ 0.99 & -0.15 
    & 48.32 $\pm$ 0.49 & 27.04 $\pm$ 0.44 & 0.24 \\
    TRADES + BAT & \textbf{86.68 $\pm$ 0.25} & 
    \textbf{72.04 $\pm$ 1.88} & \textbf{0.15}
    & 45.94 $\pm$ 0.74 & 17.59 $\pm$ 1.83 & -0.19 \\    
    TRADES + WAT & 80.38 $\pm$ 0.36 & 68.88 $\pm$ 2.40 & 0.03
    & 46.95 $\pm$ 0.33 & 28.84 $\pm$ 1.34 & 0.30 \\    
    \midrule[.05em]
    TRADES + DAFA & 81.63 $\pm$ 0.20 & 67.90 $\pm$ 1.25 & 0.03 
    & \textbf{49.05 $\pm$ 0.14} & \textbf{30.53 $\pm$ 1.04} & \textbf{0.42} \\
\midrule[.1em]
\end{tabular}
\label{tab:appendix_same_setting_cifar10}
\end{subtable}
\hfill

\begin{subtable}[t]{1.0\linewidth}
\centering
\footnotesize
\addtolength{\tabcolsep}{-0pt}
\caption{CIFAR-100}
\begin{tabular}{l|ccc|ccc}
\toprule
\multicolumn{1}{c|}{\multirow{2}{*}{Method}} &
\multicolumn{2}{c}{Clean} & & \multicolumn{2}{c}{Robust}
\\\multicolumn{1}{c|}{} & Average & Worst & $\rho_{nat}$ & Average & Worst &$\rho_{rob}$
\\
\midrule[.05em]
    TRADES & 58.34 $\pm$ 0.38 & 16.80 $\pm$ 1.05 & 0.00 
    & 25.60 $\pm$ 0.14 & 1.33 $\pm$ 0.94 & 0.00 \\
    \midrule[.05em]
    TRADES + FRL & 56.43 $\pm$ 0.44 & 17.67 $\pm$ 0.47 & 0.02
    & 24.44 $\pm$ 0.16 & 1.33 $\pm$ 0.94 & -0.04 \\    
    TRADES + CFA & 58.66 $\pm$ 0.26 & 20.13 $\pm$ 1.59 & 0.20 
    & 24.73 $\pm$ 0.15 & 1.07 $\pm$ 0.77 & -0.23 \\
    TRADES + BAT & \textbf{66.14 $\pm$ 0.18} & \textbf{27.93 $\pm$ 1.61} & \textbf{0.80} 
    & 22.79 $\pm$ 0.21 & 0.13 $\pm$ 0.34 & -1.01 \\
    TRADES + WAT & 56.37 $\pm$ 0.32 & 23.27 $\pm$ 1.91 & 0.35 
    & 23.72 $\pm$ 0.24 & 1.93 $\pm$ 0.44 & 0.38 \\    
    \midrule[.05em]
    TRADES + DAFA & 58.07 $\pm$ 0.05 & 18.67 $\pm$ 0.47 & 0.11 
    & \textbf{25.08 $\pm$ 0.10} & \textbf{2.33 $\pm$ 0.47} & \textbf{0.73} \\
\midrule[.1em]
\end{tabular}
\label{tab:appendix_same_setting_cifar100}
\end{subtable}
\hfill

\begin{subtable}[t]{1.0\linewidth}
\centering
\footnotesize
\addtolength{\tabcolsep}{-0pt}
\caption{STL-10}
\begin{tabular}{l|ccc|ccc}
\toprule
\multicolumn{1}{c|}{\multirow{2}{*}{Method}} &
\multicolumn{2}{c}{Clean} & & \multicolumn{2}{c}{Robust}
\\\multicolumn{1}{c|}{} & Average & Worst & $\rho_{nat}$ & Average & Worst &$\rho_{rob}$
\\
\midrule[.05em]
    TRADES & 61.13 $\pm$ 0.57 & 39.29 $\pm$ 1.71 & 0.00 
    & 31.36 $\pm$ 0.51 & 7.73 $\pm$ 0.99 & 0.00 \\
    \midrule[.05em]
    TRADES + FRL & 58.29 $\pm$ 0.44 & 40.50 $\pm$ 2.38 & -0.02
    & 28.28 $\pm$ 0.34 & 8.48 $\pm$ 0.64 & 0.00 \\    
    TRADES + CFA & 59.59 $\pm$ 0.60 & 39.86 $\pm$ 0.45 & -0.01 
    & \textbf{31.36 $\pm$ 0.22} & 7.93 $\pm$ 0.21 & 0.03 \\
    TRADES + BAT & 59.94 $\pm$ 0.30 & 37.81 $\pm$ 1.84 & -0.06 
    & 24.02 $\pm$ 0.26 & 3.59 $\pm$ 1.03 & -0.77 \\
    TRADES + WAT & 56.85 $\pm$ 1.10 & 35.08 $\pm$ 3.45 & -0.18 
    & 28.49 $\pm$ 0.96 & 7.89 $\pm$ 1.64 & -0.07 \\    
    \midrule[.05em]
    TRADES + DAFA & \textbf{60.50 $\pm$ 0.57} & \textbf{42.23 $\pm$ 2.26} & \textbf{0.06}
    & 29.98 $\pm$ 0.46 & \textbf{10.73 $\pm$ 1.31} & \textbf{0.34} \\
\midrule[.1em]
\end{tabular}
\label{tab:appendix_same_setting_stl10}
\end{subtable}
\label{tab:appendix_same_setting}
\end{table}

\end{document}